\theoremstyle{plain}
\newcommand{\algabb}{{$\mu$LV-Rep}\xspace}
\newcommand{\AlgName}{{Multi-step Latent Variable Representation}\xspace}
\icmltitlerunning{Provable Representation with Efficient Planning for Partially Observable Reinforcement Learning}
\begin{document}

\twocolumn[
\icmltitle{Provable Representation with Efficient Planning for Partially Observable Reinforcement Learning
}



\icmlsetsymbol{equal}{*}

\begin{icmlauthorlist}
\icmlauthor{Hongming Zhang${^*}$}{ua}
\icmlauthor{Tongzheng Ren${^*}$}{ut}
\icmlauthor{Chenjun Xiao}{cuhk}
\icmlauthor{Dale Schuurmans}{google,ua}
\icmlauthor{Bo Dai}{google,gt}
\end{icmlauthorlist}

\icmlaffiliation{ua}{University of Alberta}
\icmlaffiliation{ut}{UT Austin}
\icmlaffiliation{gt}{Georgia Tech}
\icmlaffiliation{google}{Google DeepMind}
\icmlaffiliation{cuhk}{The Chinese University of Hong Kong, Shenzhen}

\icmlcorrespondingauthor{Bo Dai}{bodai@google.com}

\icmlkeywords{Machine Learning, ICML}

\vskip 0.3in
]



\printAffiliationsAndNotice{\icmlEqualContribution} 

\begin{abstract}
In most real-world reinforcement learning applications, state information is only partially observable, which breaks the Markov decision process assumption and  leads to inferior performance for algorithms that conflate observations with state. Partially Observable Markov Decision Processes (POMDPs), on the other hand, provide a general framework that allows for partial observability to be accounted for in \emph{learning, exploration and planning}, but presents significant computational and statistical challenges. To address these difficulties, we develop a representation-based perspective that leads to a coherent framework and tractable algorithmic approach for practical reinforcement learning from partial observations. We provide a theoretical analysis for justifying the statistical efficiency of the proposed algorithm, and also empirically demonstrate the proposed algorithm can surpass state-of-the-art performance with partial observations across various benchmarks, advancing reliable reinforcement learning towards more practical applications.
\end{abstract}

\setlength{\abovedisplayskip}{1pt}
\setlength{\abovedisplayshortskip}{1pt}
\setlength{\belowdisplayskip}{1pt}
\setlength{\belowdisplayshortskip}{1pt}
\setlength{\jot}{1pt}

\setlength{\floatsep}{1ex}
\setlength{\textfloatsep}{1ex}

\vspace{-6mm}
\section{Introduction}\label{sec:intro}
\vspace{-1mm}

Reinforcement learning~(RL) addresses the problem of making sequential decisions that maximize a cumulative reward through interaction and observation in an environment~\citep{mnih2013playing,levine2016end}. The Markov decision process~(MDP) has been the standard mathematical model used for most RL algorithm design. However, the success of MDP-based RL algorithms~\citep{zhang2022making,ren2023spectral} relies on an assumption that  state information is fully observable, which implies that the optimal policy is memoryless, \ie, optimal actions can be selected based purely on the current state~\citep{puterman2014markov}. However, such an assumption typically does not hold in practice. For example, in chatbot learning~\citep{jiang2021towards} or video game control~\citep{mnih2013playing} only dialogue exchanges or images are observed, from which state information only can be partially inferred. The violation of full observability can lead to significant performance degeneration of MDP-based RL algorithms in such scenarios.

The Partially Observed Markov Decision Process (POMDP) \citep{aastrom1965optimal} has been proposed to extend the classical MDP formulation by introducing observation variables that only give partial information about the underlying latent state~\citep{hauskrecht2000planning,roy2002exponential,chen2016pomdp}. This extension greatly expands the practical applicability of POMDPs over MDPs, but the additional uncertainty over the underlying state given only observations creates a non-Markovian dependence between successive observations, even though Markovian dependence is preserved between latent states. Consequently, the optimal policy for a POMDP is no longer memoryless but \emph{entire-history} dependent, expanding the state complexity exponentially w.r.t.\ horizon length. Such a non-Markovian dependence creates significant computational and statistical challenges in \emph{planning}, and  in \emph{learning} and \emph{exploration}. In fact, without additional assumptions, computing an optimal policy for a  POMDP with known dynamics (\ie, planning) is PSPACE-complete~\citep{papadimitriou1987complexity}, while the sample complexity of learning for POMDPs grows exponentially w.r.t.\ the horizon~\citep{jin2020sample}. 

Despite the worst case hardness of POMDPs, given their importance in practice, there has been extensive work on developing practical RL algorithms that can cope with partial observations. One common heuristic is to extend MDP-based RL algorithms by maintaining a history window over observations to encode a policy or value function, \eg, recurrent neural networks~\citep{wierstra2007solving,hausknecht2015deep,zhu2017improving}. Such algorithms have been applied to many real-world applications with image- or text-based observations~\citep{berner2019dota,jiang2021towards}, sometimes even surpassing human-level performance~\citep{mnih2013playing, kaufmann2023champion}.

These empirical successes have motivated investigation into \emph{structured} POMDPs that allow some of the core computational and statistical complexities to be overcome, and provides an improved understanding of exploitable structure and practical new algorithms with rigorous justification. For example, the concept of \emph{decodability} has been used to express POMDPs where the latent state can be exactly recovered from a window of \emph{past} observations~\citep{efroni2022provable,guo2023provably}. \emph{Observability} is another special structure, where the $m$-step emission model is assumed to be full-rank, allowing the latent state to be identified from $m$ \emph{future} observation sequences~\citep{jin2020sample,golowich2022learning,liu2022partially,liu2023optimistic}. Such structures eliminate unbounded history dependence, and thus, reduce the computational and statistical complexity. However, most works rely on the existence of an ideal computational oracle for planning, which, unsurprisingly, is infeasible in most cases, hence difficult to apply in practice. Although there have been a few attempts to overcome the computational complexity of POMDPS, these algorithms are either only applicable to the tabular setting~\citep{golowich2022learning} or rely on an integration oracle that quickly become intractable for large observation spaces~\citep{guo2018neural}. This gap immediately motivates the question:
\vspace{-2mm}
\begin{center}
    \emph{Can {\bf efficient} and {\bf practical} RL algorithms be designed for partial observations by exploiting natural structures?}
\vspace{-2mm}
\end{center}
By ``efficient" we mean the statistical complexity avoids an exponential dependence on history length, 
while by ``practical" we mean that every component of \emph{learning, planning} and \emph{exploration} can be easily implemented and applied in practical settings. 
In this paper, we provide an {\bf affirmative} answer to this question. More specifically, 
\vspace{-2mm}
\begin{itemize}[leftmargin=*,itemsep=0pt,topsep=3pt]
    \item We reveal for the first time that an $L$-decodable POMDP admits a sufficient representation, the \emph{\AlgName~(\algabb)}, that supports exact and tractable linear representation of the value functions (\cref{subsec:linear_repr}), breaking the fundamental computational barriers explained in more detail in~\cref{sec:difficulties}.
    \item We design a computationally efficient planning algorithm that can implement both the principles of optimism and pessimism in the face of uncertainty for online and offline POMDPs respectively, by leveraging the learned sufficient representation~\algabb (\cref{subsec:algorithm}).
    \item We provide a theoretical analysis of the sample complexity of the proposed algorithm, justifying its efficiency in balancing exploitation versus exploration in~\cref{sec:theory}.  
    \item We conduct a comprehensive empirical comparison to current existing RL algorithms for POMDPs on several benchmarks, demonstrating the superior empirical performance of~\algabb (\cref{sec:exp}).
\end{itemize}
\vspace{-2mm}
\section{Preliminaries}\label{sec:prelim}
\vspace{-1mm}
We follow the definition of a POMDP given in~\citep{efroni2022provable,liu2022partially,liu2023optimistic},
which is formally denoted as a tuple $\mathcal{P} = (\mathcal{S}, \mathcal{A}, \mathcal{O}, r, H, \rho_0, \PP, \OO)$, where $\mathcal{S}$ is the state space, $\mathcal{A}$ is the action space, and $\mathcal{O}$ is the observation space. The positive integer $H$ denotes the horizon length, {\color{black} $\rho_0$ is the initial state distribution,} $r: \mathcal{O} \times \mathcal{A} \to [0, 1]$ is the reward function, $\PP(\cdot | s, a): \mathcal{S} \times \mathcal{A} \to \Delta(\mathcal{S})$ is the transition kernel capturing dynamics over latent states, and $\OO(\cdot | s): \mathcal{S} \to \Delta(\mathcal{O})$ is the emission kernel, which induces an observation from a given state.

Initially, the agent starts at a state $s_0$ drawn from $\rho_0(s)$. At each step $h$, the agent selects an action $a$ from $\mathcal{A}$. This leads to the generation of a new state $s_{h+1}$ following the distribution $\PP(\cdot | s_h, a_h)$, from which the agent observes $o_{h+1}$ according to $\OO(\cdot | s_{h+1})$. The agent also receives a reward $r(o _{h+1}, a_{h+1})$. Observing $o$ instead of the true state $s$ leads to a non-Markovian transition between observations, which means we need to consider policies
$\pi_h: \mathcal{O} \times (\mathcal{A} \times \mathcal{O})^{h} \to \Delta(\mathcal{A})$ that depend on the entire history, denoted by $\tau_h = \{o_0, a_0, \cdots, o_h\}$. 
Let $[H]\defeq \cbr{0,\ldots, H}$.
Then the value associated with policy $\pi=\{\pi_h\}_{h\in[H]}$ is defined as $v^\pi = \mathbb{E}_\pi \left[\sum_{h\in [H]} r(o_h, a_h)\right]$. The goal is to find the optimal policy $\pi^* = \arg\max_\pi v^\pi$. Note that the MDP given by $\mathcal{M} = (\mathcal{S}, \mathcal{A}, r, H, \rho_0, \PP)$ is a special case of a POMDP, where the state space $\mathcal{S}$ is equivalent to the observation space $\mathcal{O}$, and the emission kernel $\OO(o|s)$ is defined as $\delta(o=s)$. 

Define the belief function $b(\cdot): \mathcal{O} \times (\mathcal{A} \times \mathcal{O})^{h} \to \Delta(\mathcal{S})$. Let $b(s_1|o_1) = \PP(s_1|o_1)$. Then we can recursively compute:
\begin{align}\label{eq:belief}
    b(s_{h+1}|\tau_{h+1})\!\propto\!\!\int_{\mathcal{S}}b(s_h|\tau_h) \PP(s_{h+1}|s_h, a_h)\OO(o_{h+1}|s_{h+1}) ds_h.
\end{align}
With such a definition, one can convert a POMDP to an equivalent MDP over beliefs, denoted as $\mathcal{M}_b = \left(\mathcal{B}, \mathcal{A}, R_h, H, \mu_b, T_b\right)$, where $\mathcal{B}\subseteq \Delta(\mathcal{S})$ represents the set of possible beliefs, $\mu_b(\cdot) = \int b(\cdot|o_1)\mu(o_1)do_1$, and
\begin{multline}
    \PP_b\left(b_{h+1}|b_h, a_h\right) \\
    = \int \mathbf{1}_{b_{h+1} = b(\tau_h, a_h, o_{h+1})}\PP(o_{h+1}|b_h, a_h) d o_{h+1}. \label{eq:belief_mdp}
\end{multline}
Notice that 
$b(\cdot) \in \Bcal$ is a mapping $b: \mathcal{O} \times (\mathcal{A} \times \mathcal{O})^{h} \to \Delta(\mathcal{S})$, \ie, each belief corresponds to a density measure over the state space, thus $\PP_b\rbr{\cdot|b_h, a_h}$ represents a conditional operator that characterizes the transition between belief distributions.
For a given policy $\pi:\mathcal{B} \to \Delta(\mathcal{A})$, we can define the state value function $V_{h}^{\pi}(b_h)$ and state-action value function $Q_{h}^\pi(b_h, a_h)$ respectively for the belief MDP (\ie, equivalently for the original POMDP) as:
\begin{eqnarray}\label{eq:belief_value}
\textstyle
 &V_h^\pi(b_h) = \mathbb{E}\left[\sum_{t=h}^{H} r(o_t, a_t)|b_h\right], \quad \nonumber \\
 &Q_h^{\pi}(b_h, a_h) = \mathbb{E}\left[\sum_{t=h}^{H} r(o_t, a_t)|b_h, a_h\right].
\end{eqnarray}
Therefore, the Bellman equation can be expressed as
\begin{align}\label{eq:bellman_belief}
& V_h^\pi(b_h) = \mathbb{E}_\pi\left[Q_h^\pi(b_h, a_h)\right],\quad \nonumber \\
& Q_h^\pi(b_h, a_h) = r(o_h, a_h) + \mathbb{E}_{\PP_b}\left[V_{h+1}^\pi(b_{h+1})\right].
\end{align}
Although this reformulation of a POMDP provides a reduction to an equivalent MDP, 
it is crucial to recognize that the equivalent MDP is based on beliefs, which are also not directly observed. More importantly, these beliefs are densities that depend on the \emph{entire history}, hence the 
joint
distribution is supported on a space of growing dimension, leading to exponential representation complexity 
even when the number of states is finite. These difficulties result in infeasible computational and statistical complexity~\citep{papadimitriou1987complexity,jin2020sample}, and 
reveal the inherent suboptimality of directly applying MDP-based RL algorithms in a POMDP.
Moreover, incorporating function approximation in learning and planning is significantly more involved in a POMDP than an MDP.

Consequently, several special structures have been investigated in the theoretical literature to reduce the statistical complexity of learning in a POMDP.
Specifically, $L$-\emph{decodability} and $\gamma$-\emph{observability} have been introduced in~\citep{du2019provably,efroni2022provable} and ~\citep{golowich2022learning,even2007value} respectively.
\vspace{-1mm}
\begin{definition}[$L$-decodability~\citep{efroni2022provable}]
\label{assump:decodable}
    $\forall h\in [H]$, define 
    \vspace{-1mm}
    \begin{align}\label{eq:decodable}
        & x_h \in \mathcal{X} := (\mathcal{O}\times \mathcal{A})^{L-1}\times \mathcal{O}, \nonumber \\
        & x_h = (o_{h-L+1}, a_{h-L+1}, \cdots, o_{h}).
    \end{align} 
    A POMDP is $L$-decodable if there exists a decoder $p^*:\mathcal{X}\to \Delta(\mathcal{S})$ such that $p^*(x_h) = b(\tau_h)$.
\end{definition}
\vspace{-1mm}
\begin{definition}[$\gamma$-observability~\citep{golowich2022learning,even2007value}]
    Denote $\inner{\OO}{b} \defeq \int \OO_h\rbr{\cdot|s}b\rbr{s}ds$, for
    arbitrary beliefs $b$ and $b'$ over states.
    A POMDP is $\gamma$-observable if $\nbr{\inner{\OO}{b} - \inner{\OO}{b'}}_1\ge \gamma \nbr{b - b'}_1$. 
\end{definition}
\vspace{-1mm}
Note that we slightly generalize the decodability definition of \citet{efroni2022provable}, which assumes $b(\tau_h)$ is a Dirac measure on $\mathcal{S}$.  It is worth noticing that $\gamma$-observability and $L$-decodability are highly related. Existing works have shown that a $\gamma$-observable POMDP can be well approximated by a decodable POMDP with a history of suitable length $L$~\citep{golowich2022learning,uehara2022provably,guo2023provably} (see~\cref{appendix:observability} for a detailed discussion). Hence, in the main text, we focus on $L$-decodable POMDPs, which can be directly extended to $\gamma$-observable POMDPs. 

Although these structural assumptions have been introduced to ensure the sample complexity of learning in a structured POMDP is reduced, the \emph{computational} tractability of planning and exploring given such structures remains open. 
\vspace{-3mm}
\section{Difficulties in Learning with POMDPs}\label{sec:difficulties}
\vspace{-2mm}

Before attempting to design practical RL algorithms for structured POMDPs, we first demonstrate the generic difficulties of learning in POMDPs, including the difficulty of planning even in a \emph{known} POMDP.
These challenges only become more difficult when combined with the need to \emph{explore} in an unknown POMDP during learning. 

The major difficulty of {\bf planning in a known POMDP} lies in evaluating the value function~\citep{ross2008online}, which is necessary for both temporal-difference and policy gradient based planning and learning methods. For clarity in explaining the difficulties, consider estimating $Q^\pi$ for a given policy $\pi$. (The same difficulties arise in estimating $Q^*$.) 
Based on the Bellman equation~\eqref{eq:bellman_belief}, we have:
\begin{align*}
Q^\pi_h\rbr{b_h, a_h}
= r\rbr{o_h, a_h} + \EE_{\PP(o_{h+1}|b_h, a_h)}\sbr{{V^\pi_{h+1}\rbr{b_{h+1}}}}.
\end{align*}
To execute dynamic programming in the $h^{\textrm{th}}$ time step, \ie, to determine $Q^\pi_h$, we require several components:
\begin{itemize}[leftmargin=*,topsep=0pt,parsep=0pt,noitemsep]
    \item[{\bf i)}]  The belief $b_{h+1}(\cdot)$ must be calculated, as defined in~\eqref{eq:belief}, which requires marginalization and renormalization, with a computational complexity proportional to the number of states. For continuous control, 
    the calculation involves integration, which is intractable in general.      
    \item[{\bf ii)}] Representing the value function $V_{h+1}^\pi(\cdot)$ 
    quickly becomes intractable, even for discrete state and action spaces. In general, expressing $V_{h+1}^\pi(\cdot)$ requires space proportional to size of the support of beliefs, which is exponential w.r.t.\ the horizon. 
    (The optimal value function 
    $V_{h+1}^*\rbr{\cdot}$ can be represented by convex piece-wise linear function~\citep{smallwood1973optimal}, but the number of components can grow exponentially in the horizon.) 
    For continuous state POMDPS, no compact characterization is yet known for the space of value functions.
    \item[{\bf iii)}] The expectation $\EE_{\PP(o_{h+1}|b_h, a_h)}\sbr{V^\pi\rbr{b_{h+1}}}$ requires integration w.r.t.\ the distribution $\PP(o_{h+1}|b_h, a_h)= \int \OO(o_{h+1}|s_{h+1})\PP\rbr{s_{h+1}|s_h, a_h}b_h\rbr{s_h}ds_h ds_{h+1}$ with computational cost proportional to the size of the support of beliefs, which is also exponential in the horizon. 
\end{itemize}
These calculations are generally intractable in a POMDP, particularly one with a continuous state space.
Due to this intractability, the exact calculation of the value function via the Bellman equation is generally infeasible. 
Consequently, many approximation methods have been proposed for each component of planning~\citep{ross2008online}. 
However, the approximation error in each step can be amplified when these steps are composed through the Bellman recursion,
leading to significantly suboptimal performance. 

Due to promising theoretical results that exploit $\gamma$-observability or $L$-decodability to reduce the exponential dependence on horizon in terms of statistical complexity, there have been many attempts to exploit these properties to also achieve computationally tractable planning for RL in a POMDP.
\citet{golowich2022learning} construct an approximate MDP for an $L$-observable POMDP, whose approximation error can be characterized for planning. However, the proposed algorithm only works for POMDPs with discrete observations, states and actions. 
Alternatively,~\citet{guo2023provably} exploits low-rank latent dynamics in an $L$-decodable POMDP, which induces a \emph{nonlinear} representation for value functions. 
Unfortunately, this nonlinear structure has not been shown to support any practical algorithm. 
To implement the backup step in dynamic programming for $Q^\pi$, the algorithm in~\citep{guo2023provably} requires an integration-oracle to handle the nonlinearity, which is generally infeasible.

Another barrier to practical learning in a POMDP is {\bf exploration}, which is usually implemented in planning by appealing to the principle of optimism in the face of uncertainty~(OFU). 
Unfortunately, OFU usually requires not only the estimation of the model or value function, but also their uncertainty, \eg, a confidence interval in frequentist approaches~\citep{liu2022partially,liu2023optimistic} or the posterior in Bayesian approaches~\citep{simchowitz2021bayesian}. Practical and general implementations are not yet known in either case.


These major issues, which have not been carefully considered in empirical RL algorithms that leverage history-based neural networks~\citep{wierstra2007solving,hausknecht2015deep,zhu2017improving}, interact in ways that exaggerate the difficulty of developing efficient 
algorithms for POMDPs.

\vspace{-2mm}
\section{\AlgName}\label{sec:method}
\vspace{-2mm}

In this section, we develop the first {\bf computationally efficient} planning and exploration algorithm by further leveraging the structure of an $L$-decodable POMDP. 
The main contribution is to identify a latent variable representation that can \emph{linearly represent} the value function of an arbitrary policy in an $L$-step decodable POMDP. From this representational perspective, we show that explicit belief calculation can be bypassed and the backup step can be directly calculated via optimization.\
We then show how a latent variable representation learning algorithm can be designed, based upon~\citep{ren2023latent}, that leads to the proposed~\emph{\AlgName~(\algabb)} method. 

\vspace{-2mm}
\subsection{Efficient Policy Evaluation from Key Observations}\label{subsec:linear_repr}
\vspace{-1mm}

Although the equivalent belief MDP provides a Markovian Bellman recursion~\eqref{eq:bellman_belief}, as discussed in~\cref{sec:difficulties}, this belief viewpoint hampers tractability in planning. In the following, we introduce the key observations that allow us to resolve the aforementioned difficulties in planning step-by-step. 

\vspace{-1em}
\paragraph{Belief Elimination.}
Our first key observation is that $L$-decodability eliminates an aspect of exponential complexity:
\vspace{-6mm}
\begin{itemize}[leftmargin=*, topsep=0pt, parsep=0pt, partopsep=0pt]
    \item 
    \emph{In an $L$-decodable POMDP, it is sufficient to recover the belief state by an $L$-step memory $x_h$ rather than the entire history $\tau_h$.}
\end{itemize}
\vspace{-2mm}
This is exactly the definition of $L$-decodability given in Definition~\ref{assump:decodable}. 
From this observation that beliefs $b_h\rbr{s}$ can be represented with a decoder from $L$-step windows $x_h$ \eqref{eq:decodable}, 
we obtain the simplification
$Q_h^\pi\rbr{b_h\rbr{\tau_h}, a_h} = Q_h^\pi\rbr{b_h\rbr{x_h}, a_h}$. In addition, $L$-decodability also helps reduce dependence in the dynamics, leading to $\PP\rbr{o_{h+1}|b_h\rbr{\tau_h}, a_h} = \PP\rbr{o_{h+1}|b_h\rbr{x_h}, a_h}$. This outcome reduces the statistical complexity, as previously exploited in~\citep{efroni2022provable,guo2023provably}. However, it is not sufficient to ensure computationally friendly planning. 

Next we make a second key observation that allows us to eliminate the explicit use of beliefs in the Bellman equation:
%
\vspace{-5mm}
\begin{itemize}[leftmargin=*, topsep=0pt, parsep=0pt, partopsep=0pt]
    \item
    \emph{An observation-based value function and transition model bypasses the necessity for belief computation.}
\end{itemize}
\vspace{-1mm}
That is, since $b_h\rbr{\cdot}$ is a mapping from the history $x_h$ to the space of probability densities over state, we can reparametrize the composition $Q_h^\pi\rbr{b_h(x_h), a_h}$ as $Q_h^\pi\rbr{x_h, a_h}$, which is a function directly over $\rbr{x_h, a_h}$. 
Similarly, we can also consider the transition dynamics directly defined over the observation history, $\PP\rbr{o_{h+1}|x_h, a_h}$, rather than work with the far more complex dynamics between successive beliefs. 
With such a reformulation, we avoid explicit dependence on beliefs in the $Q^\pi$-function and dynamics in the Bellman recursion~\eqref{eq:bellman_belief}.
This resolves the first difficulty that arises from explicit belief calculation, leading to the simplified Bellman equation:
\begin{equation}\label{eq:obs_bellman}
\resizebox{.9\hsize}{!}{
    $Q_h^\pi\rbr{x_h, a_h} = r\rbr{o_h, a_h} + \EE_{\PP^\pi\rbr{o_{h+1}|x_h,a_h}}\sbr{V_{h+1}^\pi\rbr{x_{h+1}}}.
    $}
\end{equation}
\vspace{-2em}
\paragraph{Linear Representation for $Q^\pi$.} 
Next, we address a second key difficulty that arises in representing $V_{h+1}^\pi(x_{h+1}) = \EE_{a\sim\pi}\sbr{Q_{h+1}^\pi\rbr{x_{h+1}, a_{h+1}}}$. 
We seek an exact representation that can be recursively maintained.
Inspired by the success of representing value functions exactly in linear MDPs~\citep{jin2020provably,yang2020reinforcement}, it is natural to consider $x_h$ as a mega-state~\citep{efroni2022provable} and factorize the observation distribution in \eqref{eq:obs_bellman} as $\PP\rbr{o_{h+1}|b_h, a_h}= \inner{\phi\rbr{x_h, a_h}}{\mu\rbr{o_{h+1}}}$ over basis functions $\phi$ and $\mu$. 
However, it is important to realize there is an additional dependence of $V_{h+1}^\pi(x_{h+1})$ on $(x_h, a_h)$, since 
\[
x_{h+1} = ({\color{red} o_{h-L+2}, a_{h-L+2}, \cdots, o_h, a_{h},} o_{h+1})
\]
has an overlap with 
\[
(x_h, a_h) = (o_{h-L+1}, a_{h-L+1}, {\color{red} o_{h-L+2}, a_{h-L+2}, \cdots, o_{h}, a_h}).
\]
This overlap 
breaks the maintained linear structure, since
\begin{multline*}
    Q_h^\pi\rbr{b_h, a_h} = r_h\rbr{o_h, a_h} + \\
    \textstyle 
    \inner{\phi\rbr{x_h, a_h}}{\underbrace{\int  \mu\rbr{o_{h+1}}  V_{h+1}\rbr{x_h, a_h, o_{h+1}}  d{o_{h+1}}}_{w\rbr{x_h, a_h}}}.   
\end{multline*}
That is, the nonlinearity in $w\rbr{x_h, a_h}$ prevents the direct extension of linear MDPs with mega-states~\citep{efroni2022provable} to the POMDP case~\citep{guo2023provably}. 

Nevertheless, a linear representation for the value function can still be recovered if the dependence of the $h+1$-step value function on $(x_h, a_h)$ can be eliminated.
The next key observation shows how this overlap might be avoided:
\vspace{-2mm}
\begin{itemize}[leftmargin=*, topsep=0pt, parsep=0pt, partopsep=0pt]
    \item
    \emph{By $L$-step decodability, $V_{h+L}^\pi(x_{h+L})$ is independent of $\rbr{x_h, a_h}$.}
\end{itemize}
\vspace{-2mm}
This observation inspires us to consider the \emph{$L$-step} Bellman equation for $Q_h^\pi\rbr{\tau_h, a_h}$, which can be easily derived by expanding~\eqref{eq:bellman_belief} forward in time to:
\begin{align}\label{eq:bellman_multistep}
&Q_h^\pi\rbr{x_h, a_h} = \\
& \textstyle\EE^\pi_{x_{h+1:h+L}|x_h, a_h} \left[\sum_{i = h}^{h+L-1} r(o_i, a_i) + V_{h+L}^\pi(x_{h+L})\right]. \nonumber
\end{align}
At first glance, the $L$-step forward expansion only involves $V_{h+L}^\pi(x_{h+L})$, which would seem to eliminate the overlapping dependence of $x_{h+L}$ on $(x_h, a_h)$ due to $L$-step decodability. 
However, the forward steps are conducted according to the policy $\pi_{h+1:h+L-1}$, which still depends on parts of $(x_h, a_h)$, hence the distribution $\PP^\pi\rbr{x_{h+L}|x_h, a_h}$ in the expectation in \eqref{eq:bellman_multistep} still retains a dependence on $(x_h,a_h)$. 

We introduce the final key observation
that allows us to fully eliminate the dependence:  
\vspace{-2mm}
\begin{itemize}[leftmargin=*, topsep=0pt, parsep=0pt, partopsep=0pt]
    \item 
    \emph{For any policy $\pi$, there exists a corresponding policy $\nu_{\pi}$ that conditions on a sufficient 
    latent variable to generate the same expected observation dynamics while being independent of history older than $L$ steps. } 
\end{itemize}
\vspace{-2mm}
The $\nu_\pi$ is known as the \emph{``moment matching policy"}~\citep{efroni2022provable}. 
The existence of such an equivalent $\nu^\pi$ is guaranteed by $L$-decodability: 
we defer the detailed construction of $\nu_\pi$ to~\cref{appendix:moment_matching} to avoid distraction, and focus on algorithm design in the main text. 

With such a policy $\nu_\pi$, the dependence of $\pi_{h+1:h+L-1}$ on $\rbr{x_h, a_h}$ at the $L^{{\textrm th}}$-step in $\PP^{\pi}\rbr{x_{h+L}|x_h, a_h}$ can be eliminated. Specifically, consider
\begin{align}\label{eq:decomposition}
\hspace{-1mm}\PP^\pi\rbr{x_{h+L}|x_h, a_h} \hspace{-1mm} & =\hspace{-1mm} \int\hspace{-1mm} p(z_{h+1}|x_h, a_h)\PP^{\nu_{\pi}}\rbr{x_{h+L}|z_{h+1}}d{z_{h+1}} 
\nonumber \\
&\hspace{-1mm}=\hspace{-1mm} \inner{p(\cdot|x_h, a_h)}{\PP^{\nu_{\pi}}(x_{h+L}|\cdot)}_{L_2(\mu)},
\end{align}
where $z$ denotes the latent variable and the first equality follows from the construction of $\nu_\pi$. 
We emphasize that the idea of the moment matching policy was only considered as a proof technique in~\citep{efroni2022provable}, and not previously exploited to uncover linear structure for algorithm design. 

\vspace{-1em}
\paragraph{Remark (Identifiability):} It should be noted that we deliberately use a latent variable $z$ rather than $s$ in~\eqref{eq:decomposition}, to emphasize the learned latent variable structure can be different from the ground truth state, thus avoiding an \emph{identifiability} assumption. 
Nevertheless, the learned structure has the same effect in representing $Q^\pi$ linearly.

Based on the above observations, we have addressed the necessary components for a linear representation of the value function, achieved by introducing~\eqref{eq:decomposition} into~\eqref{eq:bellman_multistep}.
For the first term in 
\eqref{eq:bellman_multistep}, for $\forall k\in \cbr{1, \ldots, L-1}$, we have
\begin{align}\label{eq:linear_first}
& 
\resizebox{.5\hsize}{!}{$\EE^\pi_{o_{h+k}|x_h, a_h}\sbr{r\rbr{o_{h+k}, a_{h+k}}}$}  \\
& 
\resizebox{.98\hsize}{!}{
$= \inner{p(\cdot|x_h, a_h)}{\underbrace{\int \PP^{\nu_\pi} \rbr{o_{h+k}, a_{h+k}|\cdot}r\rbr{o_{h+k}, a_{h+k}}do_{h+k}da_{h+k}}_{w_k^\pi(\cdot)}}.$ \nonumber
}
\end{align}
That is, using the ``moment matching policy" trick, the policy $\nu_\pi$ is independent of $(x_h, a_h)$, while $\PP^{\nu_\pi} \rbr{o_{h+k}, a_{h+k}|\cdot}$ is independent of history, leading to the linear representation in~\eqref{eq:linear_first}. 
Similarly, for the second term in~\eqref{eq:bellman_multistep}, we have
\begin{align}\label{eq:linear_second}
    &
    \resizebox{0.85\hsize}{!}{$\EE_{\pi}\sbr{V_{h+L}^\pi\rbr{x_{h+L}}} = \int \PP^\pi\rbr{x_{h+L}|x_h, a_h}V^\pi\rbr{x_{h+L}}dx_{h+L}$} \\
    & 
    \resizebox{.85\hsize}{!}{
    $= \inner{p(\cdot|x_h, a_h)}{\underbrace{\int \PP^{\nu_\pi} \rbr{x_{h+L}|\cdot}V^\pi\rbr{x_{h+L}}dx_{h+L}}_{w^\pi_{h+L}(\cdot)}}$
    \nonumber
    }.
\end{align}
Recall that $x_{h+L}$ does not overlap with $x_h$, so 
under $\nu^\pi$,
$w_{h+L}^\pi\rbr{\cdot}$ is independent of $(x_h, a_h)$. 

By combining~\eqref{eq:linear_first} and~\eqref{eq:linear_second} and defining $w^\pi = \sum_{k=h}^{h+L}w_{h+k}^\pi$, we conclude that, in an $L$-step decodable POMDP,  the value function $Q^\pi$ can be represented linearly in $p\rbr{\cdot|x_h, a_h}$ as
\begin{equation}
    Q_h^\pi\rbr{x_h, a_h} = \inner{p\rbr{\cdot|x_{h}, a_h}}{w^\pi(\cdot)}_{L_2(\mu)},
\end{equation}
under the assumption that $r\rbr{o_h, a_h} = \inner{p(\cdot|x_h, a_h)}{\omega^r(\cdot)}$, which can be easily achieved by feature augmentation~\citep{ren2023stochastic}. 

\vspace{-3mm}
\paragraph{Least Square Policy Evaluation.} The final difficulty in a practical algorithm design is addressing the expectation calculation in the Bellman equation~\eqref{eq:bellman_multistep}. With the linear representation established for the value function $Q_h^\pi$, the entire backup step for dynamic programming with the Bellman recursion can be 
replaced by
a least squares regression in the space spanned by $\phi\rbr{x_h, a_h}$. Specifically, at step $h$, we seek an estimate of $Q_{h+L}^\pi(x_{h+L}, a_{x+L})$ that is expressed as $\inner{\wtil^\pi_{h+L}}{p(\cdot|x_{h+L}, a_{a+L})}$, which can be obtained by the optimization: 
\begin{align}\label{eq:lspe}
    \min_{w^\pi_h}\,\, &\EE^\pi_{x_{h+1: h+L},x_h, a_h}\bigg[\bigg( \inner{w^\pi_h}{p(\cdot|x_{h}, a_{h})}- 
    \\
    &\Big(\sum_{i=h}^{h+L-1} r(o_i, a_i) + \inner{\wtil^\pi_{h+L}}{p(\cdot|x_{h+L}, a_{h+L})}\Big)\bigg)^2\bigg], \nonumber
\end{align}
This problem can be easily solved by stochastic gradient descent. 
Setting the gradient to zero, one obtains the optimal solution to~\eqref{eq:lspe} as:
\begin{multline*}
    \inner{w^\pi_h}{p(\cdot|x_{h}, a_{h})} = \\
\EE_{x_{h+1: h+L}}\sbr{\sum_{i=h}^{h+L-1} r(o_i, a_i) + \inner{\wtil^\pi_{h+L}}{p(\cdot|x_{h+L}, a_{h+L})}},    
\end{multline*}
which completes the Bellman backup.

In summary, we obtain an efficient policy evaluation algorithm for $L$-decodable POMDPs that only requires {\bf least squares optimization} upon a {\bf linear representation} of an {\bf observation-based} value function.

\vspace{-1em}
\paragraph{Remark (Connection to Linear MDPs~\citep{jin2020provably,yang2020reinforcement}): } 
The linear structure we have revealed for POMDP value functions has some similarity to value function representations in linear MDPs.
One key difference between the factorization in~\eqref{eq:decomposition} and that in a linear MDP, \ie, $\PP\rbr{s'|s, a} = \inner{\phi\rbr{s, a}}{\mu\rbr{s'}}$,
is that the latter factorizes for different transitions of the representation. Therefore, in a linear MDP, one obtains a \emph{policy-independent} decomposition, where both the components $\phi\rbr{s, a}$ and $\mu\rbr{s'}$ from the transition dynamics are invariant w.r.t.\ the policy. 
By contrast, in~\eqref{eq:decomposition} for a POMDP, one of the factors, $\PP^{\nu_\pi}\rbr{x_{h+L}|\cdot}$, depends on the policy. Nevertheless, we have demonstrated that this does not affect the linear representation ability of $p\rbr{\cdot|x_h, a_h}$ for $Q^\pi$. 

\vspace{-1em}
\paragraph{Remark (Connection to PSR~\citep{littman2001predictive}:} Both the linear representation revealed in this paper and the predictive state representation (PSR)~\citep{littman2001predictive} bypass explicit belief calculation by factorizing the observation transition system. However, there are significant differences between the two structures that affect planning and exploration. Specifically, the PSR is based on the assumption that, for any finite sequence of events $y_{h+1:k} = \rbr{o_{h+1:h+k}, a_{h:h+k-1}}$, $k\in \NN_+$, following a history $x_h$,  the probability can be linearly factorized as
$\PP\rbr{o_{h+1:h+k}|x_h, a_{h:h+k-1}} = \inner{\omega_{y_{h+1:k}}}{\PP\rbr{U|x_h}}$, where $\omega_{y_{h+1:k}}\in \RR^{d}$, $U\defeq \sbr{u_i}_{i=1}^d$ is a set of core test events, and  $\PP\rbr{U|x_h}$ is the predictive state representation at step $h$. Then, the forward observation dynamics can be represented in a PSR via Bayes' rule, $\PP\rbr{o_{h+2:k}|x_h, a_{h:h+k-1}, o_{h+1}} = \frac{\inner{\omega_{y_{h+2:k}}}{\PP\rbr{U|x_h}}}{\inner{\omega_{y_{h+1}}}{\PP\rbr{U|x_h}}}$, which introduces a \emph{nonlinear} operation, making planning and exploration both extremely difficult. 

\vspace{-2mm}
\subsection{Learning with Exploration}\label{subsec:algorithm}
\vspace{-2mm}

\begin{algorithm}[t] 
\caption{Online Exploration for $L$-step decodable POMDPs 
with Latent Variable Representation} \label{alg:lvrep_pomdp}
\begin{algorithmic}[1]
    \State \textbf{Input:} Model Class $\mathcal{M}=\{\{(p_h(z|x_h, a_h), p_h(o_{h+1}|z)\}_{h\in[H]}\}$, Variational Distribution Class $\mathcal{Q} = \{\{q_h(z|x_h, a_h, o_{h+1})\}_{h\in[H]}\}$, Episode Number $K$.
    \State \textbf{Initialize} $\pi_0^h(s) = \mathcal{U}(\mathcal{A}), \forall h \in [H]$ where $\mathcal{U}(\mathcal{A})$ denotes the uniform distribution on $\mathcal{A}$; $\mathcal{D}_{0,h} = \emptyset, \mathcal{D}_{0, h}^\prime=\emptyset, \forall h\in [H]$.
    \For{episode $k=1,\cdots,K$}
    \State Initialize $\mathcal{D}_{k, h} = \mathcal{D}_{k-1, h}$, $\mathcal{D}_{k, h}^\prime = \mathcal{D}_{k-1, h}^\prime$
    \For{Step $h=1, \cdots, H$}
    \State Collect the transition $(x_h, a_h, o_{h+1}, a_{h+1}, \cdots, o_{h+L-1}, a_{h+L-1}, o_{h+L})$ where $x_h\sim d_{\mathcal{P}}^{\pi_k, h}$, $a_{h:h+L-1}\sim \mathcal{U}(\mathcal{A})$, $o_{h+i}\sim \mathbb{P}^{\mathcal{P}}(\cdot |x_{h+i-1}, a_{h+i-1}), \forall i\in [L]$. 
    \State $\mathcal{D}_{k, h} = \mathcal{D}_{k, h} \cup \{x_h, a_h, o_{h+1}\}$, $\mathcal{D}_{k, h + i}^\prime = \mathcal{D}_{k, h + i}^\prime \cup \{x_{h+i}, a_{h+i}, o_{h+i+1}\}$, $\forall i\in [L]$. \label{line:data_collection}
    \EndFor
    \State Learn the latent variable model $\hat{p}_k(z|x_h, a_h)$ with $\mathcal{D}_{k, h} \cup \mathcal{D}_{k, h}^\prime$ via maximizing the ELBO, and obtain the learned model $\widehat{\mathcal{P}}_k = \{(\hat{p}_{h, k}(z|x_h, a_h), \hat{p}_{h, k}(o_{h+1}|z))\}_{h\in[H]}$.\label{line:representation_online} 

    \State (Optional) Set the exploration bonus $\hat{b}_{k, h}(s,a)$. \label{line:bonus} 
    \State Update policy \label{line:online_plan}
    $ \pi_k=\mathop{\arg\max}_{\pi}V^{\pi}_{\widehat{\mathcal{P}}_k,r+\hat {b}_k}$.
    \EndFor
    \State \textbf{Return } $\pi^1,\cdots,\pi^K$.
\end{algorithmic}
\end{algorithm}

We have developed a linear representation for $Q_h^\pi$ that enables efficient planning. This section discusses how to learn and explore on top of this representation. The full algorithm is presented in Algorithm~\ref{alg:lvrep_pomdp}.

\vspace{-1em}
\paragraph{Variational Learning of~\algabb.} As we generally do not have the latent variable representation $p(\cdot|x_h, a_h)$ beforehand, it is essential to perform  representation learning with online collected data. 
One straightforward idea is to apply maximum likelihood estimation on $\PP^\pi \rbr{x_{h+k}|x_h, a_h}$. Although this is theoretically correct, 
due to the overlap between $x_{h+k}$ and $x_h$, a naive parametrization unnecessarily wastes memory and computational costs. Recall that we only need $p\rbr{z_h|x_h}$ to represent $Q_h^\pi$, and from 
\begin{align}\label{eq:obs_dynamics}
& \quad p\rbr{{o_{h+1:h+l}}|x_h, a_h} = \int_{\mathcal{Z}} p(z_h|x_h, a_h) \\
& \cdot\underbrace{\prod_{i=1}^l\left[\int_{\mathcal{Z}} \PP^\pi (z_{h+i}|z_{h+i-1}, a_i) p(o_{h+i}|z_{h+i}) d z_{h+i}\right]}_{\PP^\pi \rbr{{o_{h+1:h+l}}|z_h}} dz_h,\nonumber
\end{align}
we can obtain $p(\cdot|x_h, a_h)$ by performing maximum likelihood estimation (MLE) on $p\rbr{{o_{h+1:h+l}}|x_h, a_h}$ for arbitrary $l\in \NN_+$. 
To obtain a tractable surrogate for the MLE of the latent variable model~\eqref{eq:obs_dynamics}, 
we exploit the evidence lower bound~(ELBO)~\citep{ren2023latent}:
\begin{align}\label{eq:elbo}
    & \quad \log p\rbr{{o_{h+1:h+l}}|x_h, a_h} \nonumber \\ 
    &= \log \int_\Zcal  p(z_h|x_h, a_h) \PP^\pi \rbr{{o_{h+1:h+l}}|z_h} \nonumber \\
    &= \resizebox{0.95\hsize}{!}{$\log \int_\Zcal\frac{p(z_h|x_h, a_h) \PP^\pi \rbr{{o_{h+1:h+l}}|z_h}}{q(z|x_{h}, a_h, o_{h+1:h+l})}q(z|x_{h}, a_h, o_{h+1:h+l})$} \nonumber \\
    &=\max_{q\in\Delta\rbr{\Zcal}}\EE_{q(\cdot|x_{h}, a_h, o_{h+1:h+l})}\sbr{\log  \PP^\pi \rbr{{o_{h+1:h+l}}|z_h}} \nonumber \\ 
    & \quad - D_{KL}\rbr{q(z|x_{h}, a_h, o_{h+1:h+l})|| p(z_h|x_h)},
\end{align}
where the last equation comes from Jensen's inequality, with equality holding when $q(z|x_{h}, a_h, o_{h+1:h+l}) \propto p(z_h|x_h, a_h) \PP^\pi \rbr{{o_{h+1:h+l}}|z_h}$. One can use~\eqref{eq:elbo} with data to fit the~\algabb. For the ease of the presentation, we choose $l=1$ in Algorithm~\ref{alg:lvrep_pomdp}. 

\vspace{-1em}
\paragraph{Practical Parametrization of $Q^\pi$ with~\algabb.} With~\algabb, we can represent $Q_h^\pi\rbr{x_h, a_h} = \inner{p(z|x_h)}{w_h^\pi(z)}_{L_2(\mu)}$. If the latent variable $z$ in $p(z|x_h)$ is an enumeratable discrete variable, $Q^\pi\rbr{x_h, a_h} = \sum_{i=m} w^\pi\rbr{z_i}p(z_i|x_h)$, can be simply represented. 

However, a discrete latent variable is not differentiable, which causes difficulty in learning. Therefore, we use a continuous latent variable $z$, which induces an infinite-dimensional $w(z)$.  To address this challenge, we follow the trick in LV-Rep~\citep{ren2023latent} that forms $Q^\pi\rbr{x_h, z_h}$ as an expectation: 
\[
Q^\pi\rbr{x_h, a_h} = \inner{p(z|x_h)}{w
^\pi(z)} = \EE_{p(z|x_h)}\sbr{w^\pi(z)}
\]
which can then be approximated by a Monte-Carlo method or random feature quadrature~\citep{ren2023latent}:
\begin{align*}
\resizebox{.95\hsize}{!}{
    $Q^\pi\rbr{x_h, a_h}\approx\frac{1}{m}\sum_{i=1}^m w^\pi(z_i)\,\, \text{or}\,\, \frac{1}{m}\sum_{i=1}^m \wtil^\pi(\xi_i)\varphi\rbr{z_i, \xi_i}$
    }
\end{align*}
with samples $z_i \sim p(z|x_h)$ and $\xi_i \sim P(\xi)$ as the random feature measure for the RKHS containing $w(z)$. Both approximations can be implemented by a neural network. Due to space limitations, we defer the derivation of the random feature quadrature to~\cref{appendix:tech_background}. 

\vspace{-1em}
\paragraph{Planning and Exploration with~\algabb.} Given an accurate estimator for $Q$ functions, we can perform planning with standard dynamic programming \citep[e.g.][]{munos2008finite}. However, dynamic programming involves an $\mathop{\arg\max}$ operation, which is only tractable when $|\mathcal{A}| < \infty$. To deal with  continuous actions, we leverage popular policy gradient methods like SAC~\citep{haarnoja2018soft}, with the critic parameterized by~\algabb.

To improve the exploration, we can leverage the idea of \citet{uehara2021representation, ren2023latent} and add an additional ellipsoid bonus to implement optimism in the face of uncertainty. Specifically, if we use random feature quadrature, we can compute such a bonus via:
\begin{align*}
    & \hat{\psi}_{h, k}(x_h, a_h) = \left[\varphi(z_i; \xi_i)\right]_{i\in [m]}, \quad \nonumber \\ 
    & \quad \text{where} \ \{z_i\}_{i\in [m]} \sim \hat{p}_{k, h}(z|x_h, a_h), \{\xi_i\}_{i\in [m]} \sim P(\xi), \nonumber\\
    & \hat{b}_{k, h}(s, a) = \alpha_k \hat{\psi}_{h, k}(x_h, a_h) \hat{\Sigma}_{k, h}^{-1} \hat{\psi}_{h, k}(x_h, a_h), 
\end{align*}
where $\hat{\Sigma}_{k,h} = \sum_{\mathcal{D}_{k, h}} \hat{\psi}_{k, h}(x_{h, i}, a_{h, i}) \hat{\psi}_{k, h}(x_{h, i}, a_{h, i})^\top + \lambda I$, and  $\alpha_k$, $\lambda$ are user-specified constants. {\color{black} Similarly, the bonus can be used to implement pessimism in the face of uncertainty for offline settings, which we defer to~\cref{appendix:offline}.} 
\vspace{-8mm}
\section{Theoretical Analysis}\label{sec:theory}
\vspace{-2mm}
In this section, we provide a formal sample complexity analysis of the proposed algorithm. We start from the following assumptions, that are commonly used in the literature~\citep[e.g.][]{agarwal2020flambe, uehara2021representation,ren2023latent}.
\begin{assumption}[Finite Candidate Class with Realizability]
    \label{assumption:function_class}
    $|\mathcal{M}| < \infty$ and $\{(p_h^*(z|x_h, a_h), p_h^*(o_{h+1}|z))\}_{h\in[H]} \in \mathcal{M}$. 
    Meanwhile, for all $(p_h(z|x_h, a_h), p(o_{h+1}|z)) \in \mathcal{M}$, $p_h(z|x_h, a_h, o_{h+1}) \in \mathcal{Q}$. 
\end{assumption}

\begin{assumption}[Normalization Conditions]
    \label{assumption:normalization}
    $\forall \mathcal{P}\in \mathcal{M}, (x_h, a_h) \in \mathcal{X}\times\mathcal{A}, \|p_h(\cdot|x_h, a_h)\|_{\mathcal{H}_K} \leq 1$ for some kernel $K$. 
    Furthermore, $\forall g:\mathcal{X} \to \mathbb{R}$ such that $\|g\|_{\infty} \leq 1$, we have $\left\|\int_{\mathcal{X}} p(x_{h+L}|\cdot) g(x_{h+L}) d x_{h+L}\right\|_{\mathcal{H}_K} \leq C$.
\end{assumption}
Then, we have the following sample complexity bound for \algabb, with detailed proofs given in~\cref{appendix:analysis}.
\begin{theorem}[PAC Guarantee, Informal version of Theorem~\ref{thm:pac_guarantee_online}]
    \label{thm:pac_online_informal}
    Assume the kernel $K$ satisfies the regularity conditions in Appendix~\ref{sec:technical_conditions}. If we properly choose the exploration bonus $\hat{b}_k(x, a)$, we can obtain an $\varepsilon$-optimal policy with probability at least $1-\delta$ after we interact with the environments for $N = \mathrm{poly}\left(C, H, |\mathcal{A}|^L, L, \varepsilon, \log(|\mathcal{M}|/\delta)\right)$ episodes.
\end{theorem}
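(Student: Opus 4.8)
The plan is to follow the standard optimism-based regret-to-PAC pipeline for representation learning in low-rank models, adapted to the policy-dependent factorization of \cref{eq:decomposition}. First I would establish a \emph{representation learning guarantee}: under \cref{assumption:function_class}, the ELBO objective \eqref{eq:elbo} is an exact surrogate for the MLE of the $L$-step observation model \eqref{eq:obs_dynamics} whenever the variational optimum lies in $\mathcal{Q}$, so maximizing the ELBO over the finite class $\mathcal{M}$ yields an approximate MLE. A standard MLE deviation bound then gives, with probability $1-\delta$, that the learned model $\hat{\mathcal{P}}_k$ predicts the $L$-step observation distribution to within expected squared-Hellinger (equivalently total-variation) error $\tilde{O}(\log(|\mathcal{M}|/\delta)/k)$, measured under the data-collection distribution of episode $k$.

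Next I would convert this model error into an error on the Bellman backups. Since the data is collected by running $\pi_k$ for the first $h$ steps and then $\mathcal{U}(\mathcal{A})$ for the subsequent $L$ steps, a change of measure from the uniform roll-out to the moment-matching policy $\nu_\pi$ (\cref{appendix:moment_matching}) transfers the guarantee to the quantity that actually appears in the linear representation of $Q^\pi$ derived in \cref{subsec:linear_repr}, at the cost of an $|\mathcal{A}|^L$ importance-weight factor --- this is precisely the source of the $|\mathcal{A}|^L$ dependence in the bound. The normalization conditions of \cref{assumption:normalization} bound the RKHS norm of $p(\cdot|x_h,a_h)$ and of the integrated weight $w^\pi$, so the least-squares policy-evaluation error is controlled by the model error times these norm bounds and the constant $C$.

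Then I would invoke \emph{optimism}. The ellipsoid bonus $\hat{b}_{k,h}$ built from the random-feature covariance $\hat{\Sigma}_{k,h}$ is calibrated so that, combined with the MLE concentration above, $V^{\pi_k}_{\hat{\mathcal{P}}_k, r+\hat{b}_k}$ upper-bounds $V^{*}$ with high probability. The suboptimality then telescopes by a simulation/value-difference lemma into (i) a nonpositive optimism term and (ii) the on-policy sum of bonuses plus residual model errors along the executed trajectories. Summing the bonus terms over the $K$ episodes via the elliptical-potential lemma yields a bound of order $\sqrt{K\, d_{\mathrm{eff}}}$, where $d_{\mathrm{eff}}$ is the effective dimension / information gain of the kernel $K$, finite by the regularity conditions of \cref{sec:technical_conditions}. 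Dividing by $K$ and returning $\pi$ drawn uniformly from $\{\pi_1,\dots,\pi_K\}$ converts the average regret into the claimed $\varepsilon$-optimality after $N=\mathrm{poly}(C,H,|\mathcal{A}|^L,L,\varepsilon,\log(|\mathcal{M}|/\delta))$ episodes.

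The main obstacle I anticipate is handling the \emph{infinite-dimensional, policy-dependent} representation rigorously. Unlike a linear MDP, the factor $\mathbb{P}^{\nu_\pi}(x_{h+L}|\cdot)$ depends on the policy, so the representation learned from data must be shown to remain valid after the change of measure to $\nu_\pi$; keeping the importance weight at $|\mathcal{A}|^L$ rather than something worse, and ensuring the ELBO-to-MLE gap contributes no additional error, are the delicate points. Establishing the elliptical-potential bound and a finite effective dimension in the RKHS (rather than in a finite-dimensional feature space) is the other technically demanding step, and is where the kernel regularity conditions of \cref{sec:technical_conditions} do the real work; the remaining arguments then assemble into the formal statement proved in \cref{appendix:analysis}.
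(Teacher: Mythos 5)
Your proposal follows essentially the same route as the paper's proof in \cref{appendix:analysis}: an MLE/ELBO concentration bound (Lemma~\ref{lem:mle}), an $L$-step back inequality built on the moment-matching policy that incurs the $|\mathcal{A}|^L$ importance factor (Lemmas~\ref{lem:back_true} and~\ref{lem:back_learned}), an (almost-)optimism argument with the elliptical bonus, and a simulation-lemma-plus-elliptical-potential regret bound converted to a PAC guarantee, with the kernel eigendecay conditions controlling the effective dimension. The only minor discrepancy is that the paper establishes optimism only up to an additive $\sqrt{|\mathcal{A}|^{L+1}\zeta_k}$ slack rather than exact optimism, but this residual is absorbed into the final bound exactly as your outline anticipates.
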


\vspace{-4mm}
\section{Related Work}\label{sec:related_work}
\vspace{-2mm}

Representation has been previously considered in partially observable reinforcement learning, but for different purposes. Vision-based representations~\citep{yarats2020image,seo2023masked} have been designed to extract compact features from raw pixel observations. We emphasize that this type of observation feature does not explicitly capture dynamics properties, and is essentially orthogonal to (but naturally compatible with) the proposed representation. 
Many dynamic-aware representation methods have been developed, such as bi-simulation~\citep{ferns2004metrics,gelada2019deepmdp,zhang2020learning}, successor features~\citep{dayan1993improving,barreto2017successor,kulkarni2016deep}, spectral representation~\citep{mahadevan2007proto,wu2018laplacian,duan2019state}, and contrastive representation~\citep{oord2018representation,nachum2021provable, yang2021trail}.
Our proposed representation for POMDPs has been inspired by  recent progress~\citep{jin2020provably, yang2020reinforcement,agarwal2020flambe,uehara2022provably} in revealing  low-rank structure in the transition kernel of MDPs, and inducing effective linear representations for the state-action value function for an arbitrary policy. These prior discoveries have led to a series of practical and provable RL algorithms in the MDP setting, achieving a delicate balance between learning, planning and exploration~\citep{ren2022free,zhang2022making,ren2023spectral, ren2023latent}. Although these algorithms demonstrate theoretical  and empirical benefits, they rely on the Markovian assumption, hence are not applicable to the POMDP setting we consider here. 

There have been several attempts to exploit the low-rank representation in POMDPs to reduce statistical complexity~\citep{efroni2022provable}. \citet{azizzadenesheli2016reinforcement} and \citet{guo2016pac} exploit spectral learning for model estimation without exploration;~\citet{jin2020sample} explores within an spectral estimation set; \citet{uehara2022provably} builds upon the Bellman error ball; 
\citet{zhan2022pac,liu2022partially} consider the MLE confidence set for low-rank structured models; and~\citep{huang2023provably} construct a UCB-type algorithm upon the MLE ball of PSR. However, these algorithms rely on  intractable oracles for planning, and fewer works consider exploiting  low-rank structure to achieve computationally tractable planning. 
One exception is~\citep{zhang2023energybased,guo2023provably}, which still includes intractable operations, \ie, infinite-dimensional operations or integrals.
\begin{figure*}[ht]
    \vspace{-3mm}
\begin{center}
    \subfigure{\includegraphics[width=0.245\textwidth]{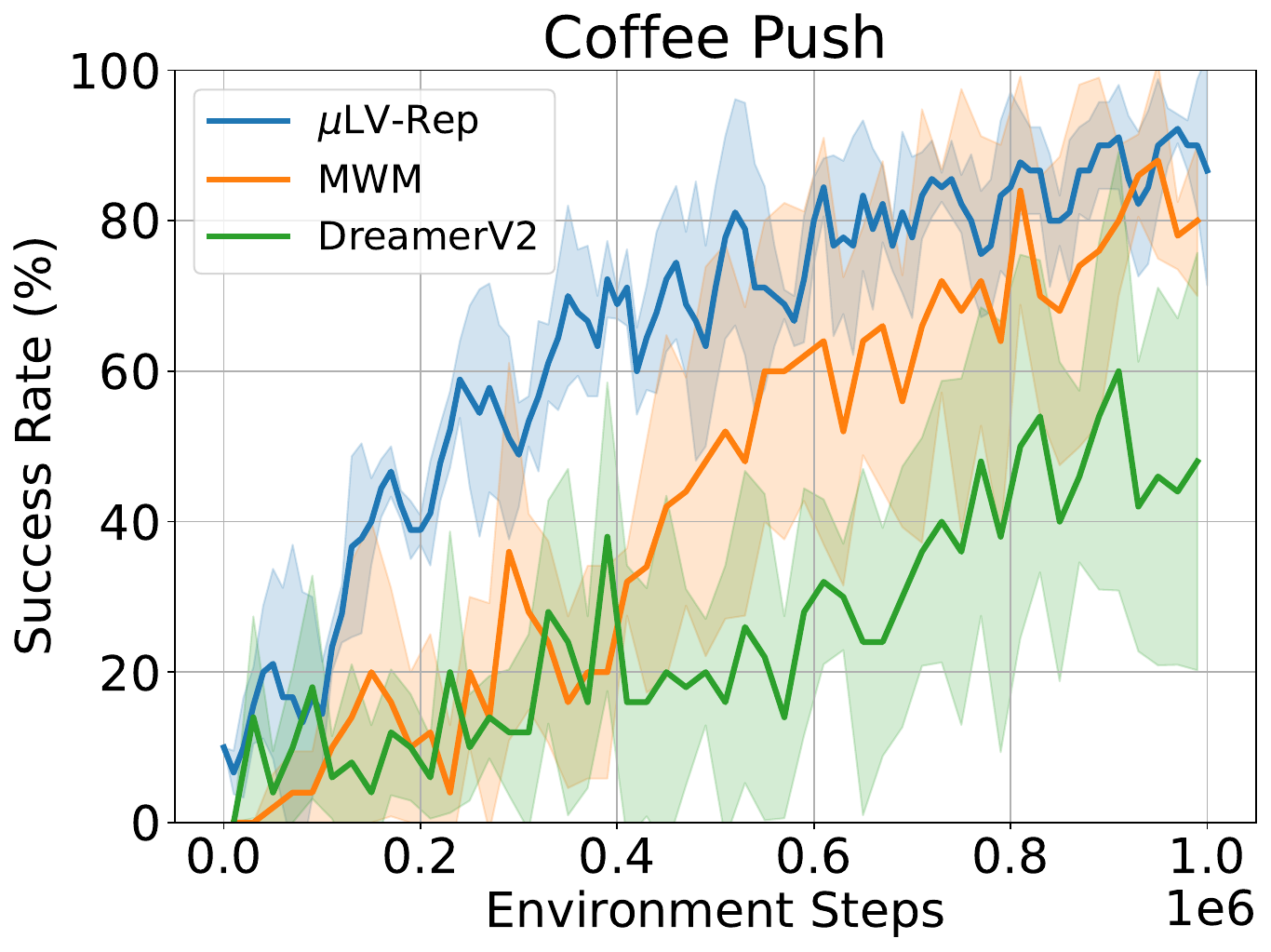}}
    \subfigure{\includegraphics[width=0.245\textwidth]{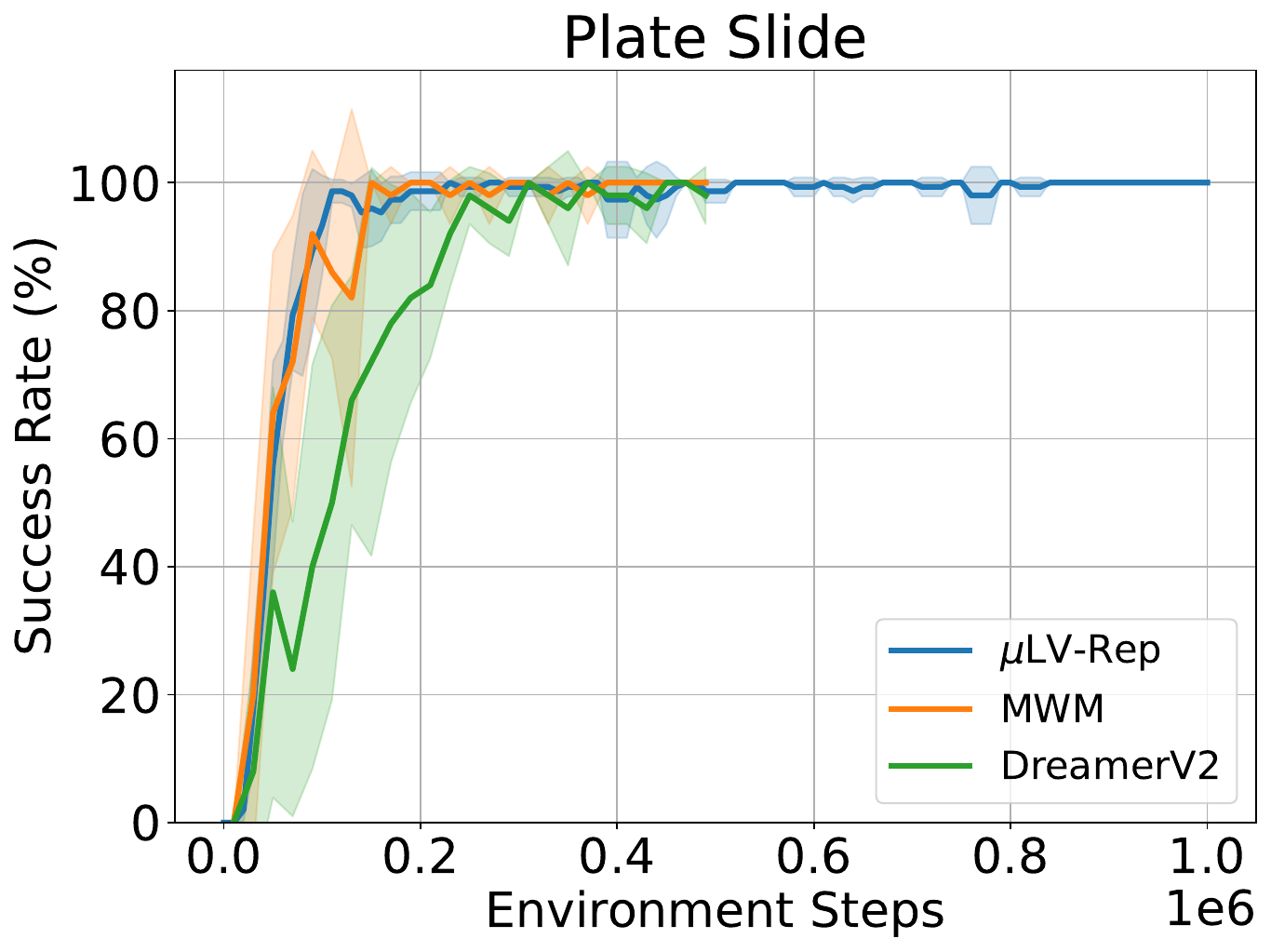}}
    \subfigure{\includegraphics[width=0.245\textwidth]{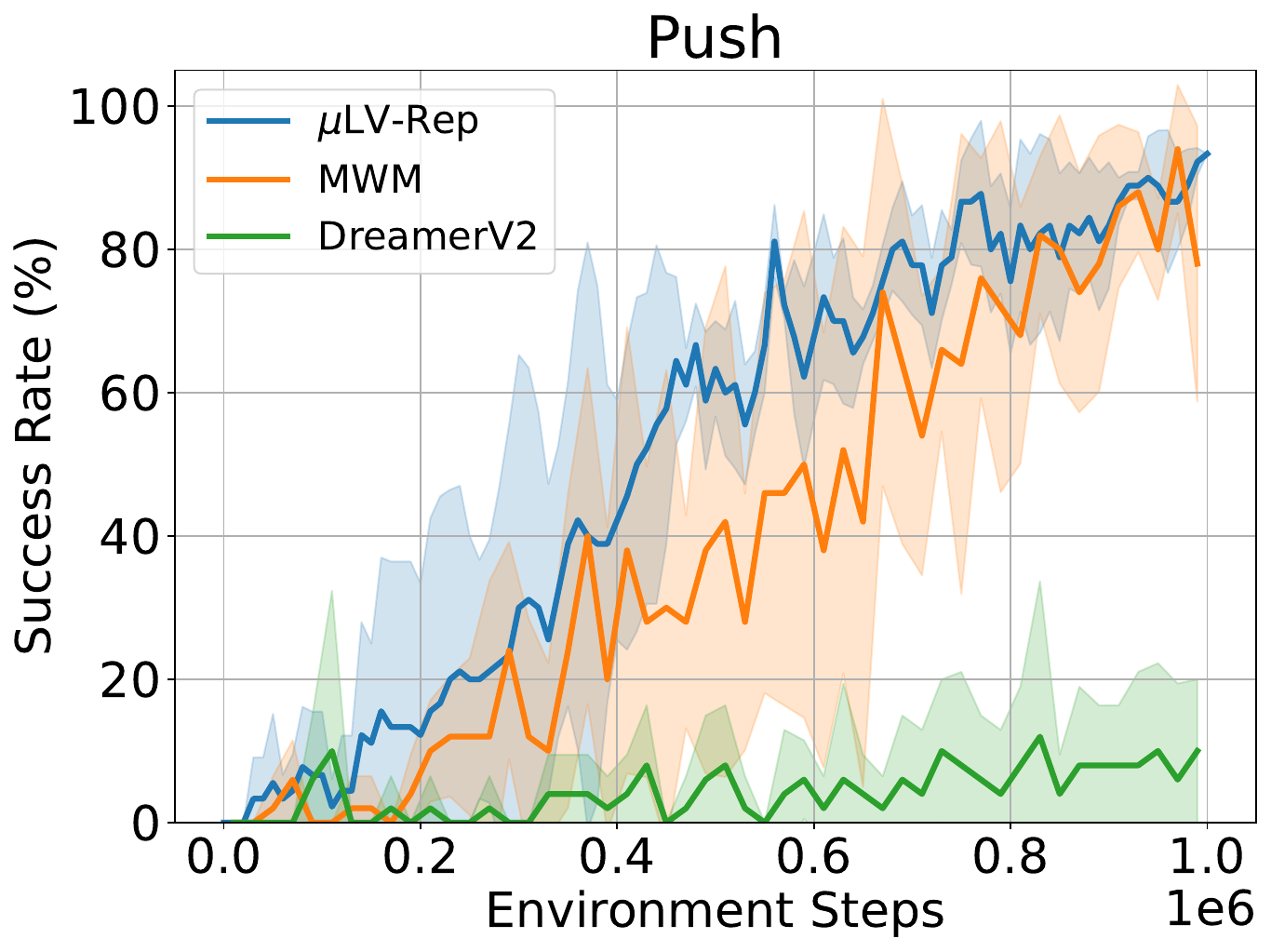}}
    \subfigure{\includegraphics[width=0.245\textwidth]{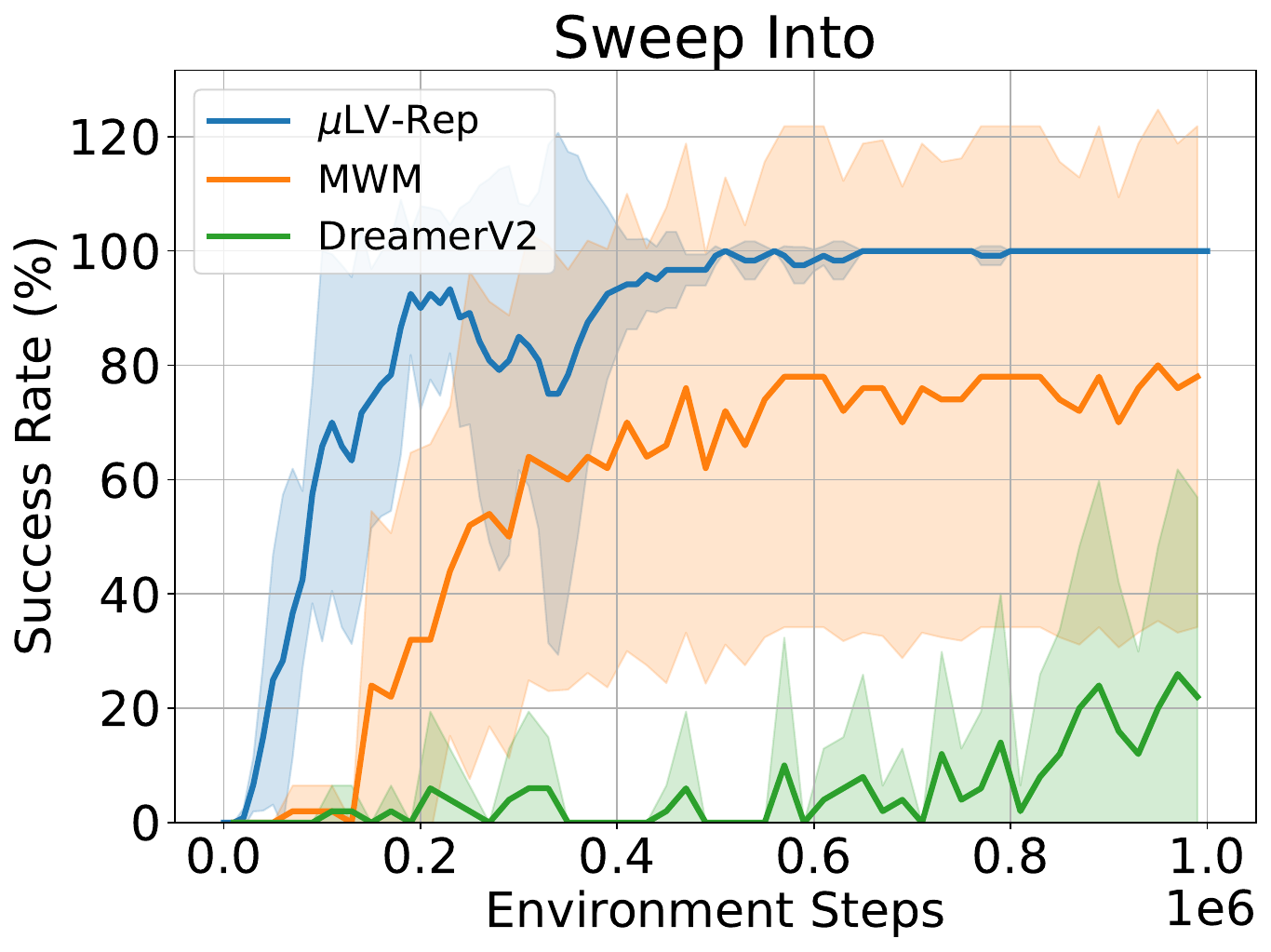}}
    \vspace{-4mm}
    \caption{
    Learning curves on visual robotic manipulation tasks from Meta-world measured by success rate. 
    Our method shows better or comparable sample efficiency compared to baseline methods. 
    Learning curves on all 50 tasks are reported in \cref{sec:imp detail}. 
    \normalsize}
    \label{fig:metaworld}
\end{center}
\vspace{-2mm}
\end{figure*}

\begin{figure*}[h]
    \vspace{-2mm}
\begin{center}
    \subfigure{\includegraphics[width=1\textwidth]{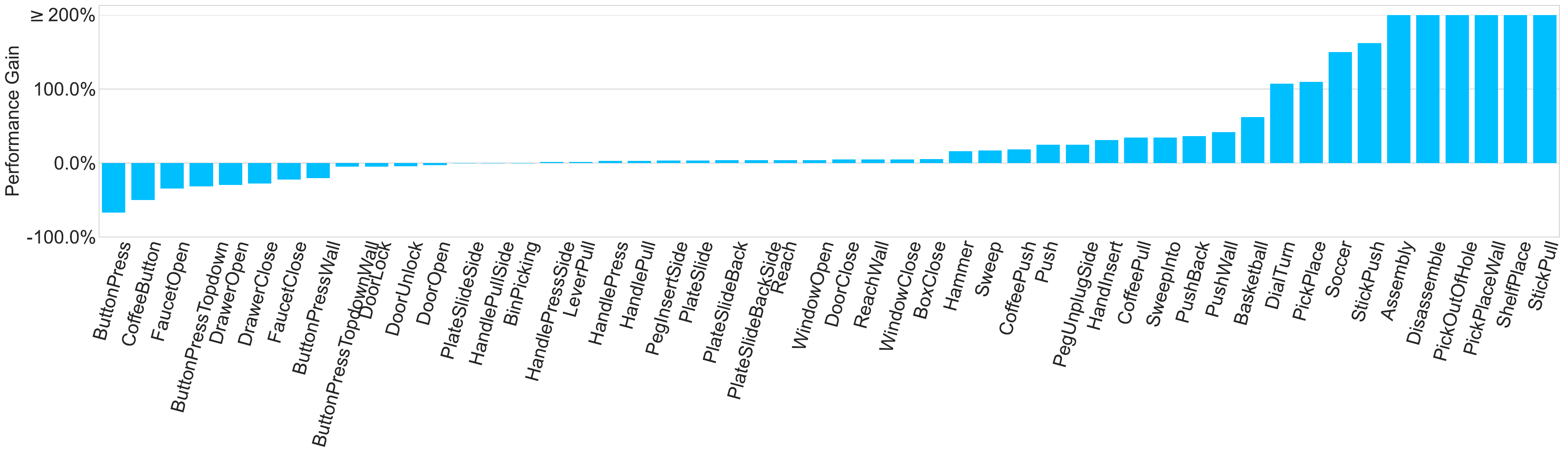}}
    \vspace{-10mm}
    \caption{
    The performance gain on 50 Meta-world tasks after 1 million interactions. 
    Our results surpass or are comparable to (with a difference of less than or equal to 10\%) the best baselines on 41 out of the 50 tasks.
    \normalsize}
    \label{fig:metaworld final_performance}
\end{center}
\vspace{-6mm}
\end{figure*}

\vspace{-3mm}
\section{Experimental Evaluation}\label{sec:exp}
\vspace{-2mm}

We evaluate the proposed method on Meta-world~\citep{yu2019meta}, which is an open-source simulated benchmark consisting of 50 distinct robotic manipulation tasks with visual observations. 
We also provide experiment results on partial observable control problems constructed based on OpenAI gym MuJoCo~\citep{todorov2012mujoco} in \cref{sec:extra-experiments}. 
Comparison on classic POMDP benchmarks, such as the RockSample problem, is worth further study.~\citep{smith2012heuristic,kurniawati2009sarsop}.

We notice that directly acquiring a robust control representation through predicting future visual observations can be challenging due to the redundancy of information in images for effective decision-making. As a result, a more advantageous approach involves obtaining an image representation first and subsequently learning a latent representation based on this initial image representation.
In particular, we employ visual observations with dimensions of 64 × 64 × 3 and apply a variational autoendocer (VAE)~\citep{kingma2013auto} to learn a representation of these visual observations.  
The VAE is 
trained during the online learning procedure. 
This produces compact vector representations for the images, which are then forwarded as input to the representation learning method. 
As detailed in \cref{subsec:algorithm}, we learn the latent representations by making predictions about future outcomes using a history of length $L$.  
To achieve this, we employ a continuous latent variable model similar to \citep{ren2023latent,Hafner2020Dream}, approximating distributions with Gaussians parameterized by their mean and variance.  
We then apply Soft Actor-Critic (SAC) as the planner \citep{haarnoja2018soft}, which takes the learned representation as input of the critic.
We apply $L=3$ across all domains (with an ablation study  provided in \cref{appendix:ablation}). 
More implementation details, including network architectures and hyper-parameters, are provided in \cref{sec:imp detail}. 

We consider two baseline methods, DreamerV2~\citep{hafner2021mastering} and the Masked World Model (MWM)~\citep{seo2023masked}. 
DreamerV2 is designed to acquire latent representations, which are subsequently input into a recurrent neural network for environment modeling. 
MWM utilizes an autoencoder equipped with convolutional layers and vision transformers (ViT) rather than reconstructing visual observations. This autoencoder is employed to reconstruct pixels based on masked convolutional features, allowing MWM to learn a latent dynamics model that operates on the representations derived from the autoencoder.

\cref{fig:metaworld} presents the learning curves of all algorithms as measured
on the success rate for 1M environment steps, averaged over 5 random seeds. 
We observe that \algabb exhibits superior performance compared to both DreamerV2 and MWM, demonstrating faster convergence and achieving higher final performance across all four tasks reported. We also provide the learning curves for all 50 Meta-world tasks in \cref{sec:extra-experiments}. 
It is noteworthy that while MWM achieves comparable sample efficiency with \algabb in certain tasks, it incurs higher computational costs and longer running times due to the incorporation of ViT network in its model. In our experimental configurations, \algabb demonstrates a training speed of 21.3 steps per second, outperforming MWM, which achieves a lower training speed of 8.1 steps per second. This highlights the computational efficiency of the proposed method. 
\cref{fig: metaworld performance} illustrates the final performance of \algabb across 50 Meta-world tasks (blue bars), comparing it with the best performance achieved by DreamerV2 and MWM (gray bars). We observe that \algabb achieves better than 90\% success on 33 tasks. It also performs better or equivalent to (within 10\% difference) the best of DreamerV2 and MWM on 41 tasks.

\vspace{-2mm}
\section{Conclusion}\label{sec:conclusion}
\vspace{-1mm}

In this paper, we aimed to develop a practical RL algorithm for structured POMDPs that obtained efficiency in terms of both statistical and computational complexity. 
We revealed some of the challenges in computationally exploiting the low-rank structure of a POMDP,
then derived
a linear representation for the $Q^\pi$-function, which automatically implies a practical learning method, with tractable planning and exploration, as in~\algabb. We theoretically analyzed the sub-optimality of the proposed~\algabb, and empirically demonstrated its advantages on several benchmarks.

\section*{Impact Statement}
This paper presents work whose goal is to advance the field of Machine Learning. There are many potential societal consequences of our work, none which we feel must be specifically highlighted here.

\bibliography{example_paper}
\bibliographystyle{icml2024}

\newpage
\appendix
\onecolumn

\section{More Related Work}\label{appendix:more_related}

\paragraph{Partially Observable RL.} 
The majority of existing practical RL algorithms for partially observable settings can also be categorized into model-based \textit{vs}. model-free.

The model-based algorithms~\citep{kaelbling1998planning} for partially observed scenarios are naturally derived based on the definition of POMDPs, where both the emission and transition models are learned from data. The planning procedure for optimal policy is conducted over the posterior of latent state, \ie, beliefs, which is approximately inferred based on learned dynamics and emission model. With different model parametrizations, (ranging from Gaussian processes to deep models), and different planning methods, a family of algorithms has been proposed~\citep{deisenroth2012solving,igl2018deep, gregor2019shaping, zhang2019solar, lee2020stochastic, hafner2021mastering}. 
However, due to the compounding errors from {\bf i)}, mismatch in model parametrization, {\bf ii)}, inaccurate beliefs calculation,  {\bf iii)}, approximation in planning over nonlinear dynamics, and {\bf iv)}, neglecting of exploration, such methods might suffer from sub-optimal performances in practice. 

As we discussed in~\cref{sec:intro}, the memory-based policy and value function have been exploited to extend the MDP-based model-free RL algorithms to handle the non-Markovian dependency induced by partial observations. For example, the value-based algorithms introduces memory-based neural networks to Bellman recursion, including temporal difference learning with explicit concatenation of 4 consecutive frames as input~\citep{mnih2013playing} or recurrent neural networks for longer windows~\citep{bakker2001reinforcement,hausknecht2015deep,zhu2017improving}, and DICE~\citep{nachum2020reinforcement} with features extracted from transformer~\citep{jiang2021towards}; the policy gradient-based algorithms have been extended to partially observable setting by introducing recurrent neural network for policy parametrization~\citep{schmidhuber1990reinforcement,wierstra2007solving,heess2015memory,ni2021recurrent}. The actor-critic approaches exploits memory-based value and policy together~\citep{ni2021recurrent,meng2021memory}.
Despite their simplicity in the algorithm extension, these algorithms demonstrate potentials in real-world applications. However, the it has been observed that the sample complexity for purely model-free RL with partial observations is very high~\citep{mnih2013playing,barth2018distributed,yarats2021improving}, and the exploration remains difficult, and thus, largely neglected. 

\section{Observability Approximation}\label{appendix:observability}

Although the proposed~\algabb is designed based on the $L$-step decodability in POMDPs,~\citet{golowich2022learning} shows that the $\gamma$-observable POMDPs can be $\epsilon$-approximated with a $L = \Otil\rbr{\gamma^{-4}\log\rbr{{\abr{S}}/{\epsilon}}}$-step decodable POMDP. By exploiting the low-rank structure in the latent dynamics, this result has been extend with function approximator~\citep{uehara2022provably}. Specifically, 
\begin{theorem}[Proprosition 7~\citep{guo2023provably}, Lemma 12~\citep{uehara2022provably}]\label{thm:obs_approx}
    Given a $\gamma$-observable POMDP with $d$-rank latent transition, there exists an $L$-step decodable POMDP $\Mcal$ with $L=\Otil\rbr{\gamma^{-4}\log\rbr{d/\epsilon}}$, $\forall \epsilon>0$, such that
    \begin{equation}
        \EE_{a_{1:h}, o_{2:h}\sim \pi}\sbr{\nbr{\PP_h\rbr{o_{h+1}|o_{1:h}, a_{1:h}} - \PP_h^\Mcal\rbr{o_{h+1}|x_h, a_h}}_1}\le \epsilon. 
    \end{equation}
    where $\pi_h\in \Delta\rbr{\prod_{h=1}^H \Acal^{\mathscr{H}_h}}$ with $\Hscr_h\defeq \Acal^{h-1}\times \Ocal^{h}$, is mapping the whole history to a distribution of action. 
\end{theorem}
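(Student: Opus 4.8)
The plan is to exhibit the approximating $L$-step decodable POMDP $\Mcal$ explicitly through a \emph{truncated Bayesian filter}, and then control the incurred error by a filter-stability argument driven by $\gamma$-observability. Concretely, I would define the decoder of $\Mcal$ so that $p^*\rbr{x_h}$ equals the belief $\tilde b_h\rbr{\cdot|x_h}$ produced by running the exact belief recursion~\eqref{eq:belief} forward over only the window $x_h = \rbr{o_{h-L+1}, a_{h-L+1}, \ldots, o_h}$, initialized from a fixed reference prior $\bar b$ at step $h-L+1$ rather than from the true posterior $b_{h-L+1}\rbr{\cdot|\tau_{h-L+1}}$. By construction this decoder is a function of $x_h$ alone, so $\Mcal$ is $L$-decodable. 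Its emission model is then the one-step push-forward $\PP_h^\Mcal\rbr{o_{h+1}|x_h, a_h} = \inner{\OO}{\PP\rbr{\cdot|\tilde b_h, a_h}}$, so that $\PP_h$ and $\PP_h^\Mcal$ are the \emph{same} function of a belief, differing only in which belief is fed in.

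First I would reduce the observation-prediction error to a belief-estimation error. Since mapping a state distribution through the transition kernel $\PP$ and then the emission kernel $\OO$ is a composition of Markov kernels, it is an $\ell_1$ non-expansion, giving the pointwise bound
\begin{equation*}
    \nbr{\PP_h\rbr{o_{h+1}|\tau_h, a_h} - \PP_h^\Mcal\rbr{o_{h+1}|x_h, a_h}}_1 \le \nbr{b_h\rbr{\cdot|\tau_h} - \tilde b_h\rbr{\cdot|x_h}}_1 .
\end{equation*}
Taking $\EE_{a_{1:h}, o_{2:h}\sim \pi}$ on both sides, it then suffices to bound the expected disagreement between two filters fed the \emph{same} window of observations and actions but started from different initial beliefs.

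The heart of the proof, and the step I expect to be hardest, is the expected filter-contraction estimate: under $\gamma$-observability the two filters forget their distinct initializations and converge, so that $\EE_\pi\nbr{b_h - \tilde b_h}_1 \le \exp\rbr{-\Omega\rbr{\gamma^4 L}}$. The subtlety is that observability does \emph{not} guarantee a deterministic per-step contraction of the $\ell_1$ distance, since any particular observation realization may be uninformative; contraction holds only \emph{on average} over the observation process. I would therefore not track $\ell_1$ directly but instead invoke the divergence-based contraction machinery of \citet{golowich2022learning}, which bounds the expected progress of the filter via a martingale/data-processing argument and converts it back to total variation through Pinsker; the number of steps needed to drive the error below a constant factor scales as $\gamma^{-4}$, which is exactly the source of the stated rate.

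Finally, I would choose $L = \Otil\rbr{\gamma^{-4}\log\rbr{d/\epsilon}}$ so that the contracted error is at most $\epsilon$ uniformly over the window steps, and note that the logarithmic factor carries $\log d$ rather than $\log\abr{\mathcal{S}}$: exploiting the $d$-rank latent transition, the belief discrepancy and its filter dynamics live in an effective $d$-dimensional subspace, so the low-rank refinement of \citet{uehara2022provably} replaces the tabular state-cardinality dependence of the original observability bound by the rank $d$. Since the per-step bound is uniform in $h$, taking the expectation over $\pi$ preserves it and yields the claimed inequality.
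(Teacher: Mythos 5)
Your proposal is correct and follows essentially the same route as the paper, which in fact offers no independent argument for this statement and defers entirely to the belief-contraction proofs of \citet{golowich2022learning} (as refined for low-rank latent transitions by \citet{uehara2022provably} and \citet{guo2023provably}): construct the approximating decodable POMDP via a truncated filter over the length-$L$ window, reduce the one-step observation-prediction error to a belief error by the $\ell_1$ non-expansiveness of Markov kernels, and invoke the expected (not pointwise) filter-stability bound that yields the $\gamma^{-4}$ rate and the $\log d$ in place of $\log|\mathcal{S}|$. The only point to be careful about, which your sketch glosses over, is that the ``fixed reference prior'' initializing the truncated filter cannot be arbitrary --- it must be the (approximate) state marginal at step $h-L+1$, or at least a distribution dominating the true posterior with controlled density ratio, since the contraction machinery measures initial disagreement in a divergence that blows up otherwise.
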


With this understanding, the proposed~\algabb can be directly applied for $\gamma$-observable POMDPs, while still maintains theoretical guarantees. Due to the space limitation, please refer to~\cite{uehara2022provably, guo2023provably} for the details of the proofs. 
\section{Moment Matching Policy}\label{appendix:moment_matching}
We provide a formal definition of the moment matching policy here.
\begin{definition}[Moment Matching Policy \citep{efroni2022provable}]
    With the $L$-decodability assumption,
    for $h\in [H]$, $h^\prime \in [h-L+1, h]$ and $l = h^\prime - h + L - 1$, we can define the moment matching policy $\nu^{\pi, h} = \{\nu_{h^\prime}^{h, \pi} : \mathcal{S}^l \times \mathcal{O}^l \times \mathcal{A}^{l-1} \to \Delta(\mathcal{A})\}_{h^\prime = h-L + 1}^h$ introduced by~\citet{efroni2022provable} , such that
    \begin{align*}
        & \nu_{h^\prime}^{\pi, h}(a_{h^\prime} | (s_{h-L+1:h^\prime}, o_{h-L+1:h^\prime}, a_{h-L+1: h^\prime - 1}))\\
        & \qquad :=  \mathbb{E}_{\pi}^{\mathcal{P}} [\pi_{h^\prime}(a_{h^\prime}|x_{h^\prime})|(s_{h-L+1:h^\prime}, o_{h-L+1:h^\prime}, a_{h-L+1: h^\prime - 1})], \quad \forall h^\prime \leq h-1,
    \end{align*}
    and $\nu_h^{\pi, h} = \pi_h$. We further define $\tilde{\pi}^h$, which takes first $h-L$ actions from $\pi$ and the remaining $L$ actions from $\nu^{\pi, h}$. 
\end{definition}
The main motivation to define such moment matching policy is that, we want to define a policy that is conditionally independent from the past history for theoretical justification while indistinguishable from the history dependent policy to match the practical algorithm.
By Lemma B.2 in \citet{efroni2022provable}, under the $L$-decodability assumption, for a fixed $h\in [H]$, we have  $d_{h}^{\mathcal{P}, \pi}(x_h) = d_{h}^{\mathcal{P}, \tilde{\pi}_h}(x_h)$, for all $L$-step policy $\pi$ and $x_h \in \mathcal{X}_h$. As $\nu^{\pi, h}_h = \pi_h$.  we have $d_h^{\mathcal{P}, \pi}(x_h, a_h) = d_h^{\mathcal{P}, \tilde{\pi}^h}(x_h, a_h)$, and hence $\mathbb{E}_{\pi}^{\mathcal{P}}(x_h, a_h) = \mathbb{E}_{\tilde{\pi}^h}^{\mathcal{P}}(x_h, a_h)$. This enables the factorization in~\eqref{eq:linear_second} without the dependency of the overlap observation trajectory. 
\section{Pessimism in the Offline Setting}\label{appendix:offline}
Similar to \citet{uehara2021representation, ren2023latent}, the proposed algorithm can be directly extended to the offline setting by converting the optimism into the pessimism. Specifically, we can learn the latent variable model, set the penalty with the data and perform planning with the penalized reward. The whole algorithm is shown in Algorithm \ref{alg:lvrep_pomdp_offline}. Following the identical proof strategy from \citet{uehara2021representation, ren2023latent}, we can obtain a similar sub-optimal gap guarantee for $\hat{\pi}^*$. 
\begin{algorithm}[t] 
\caption{Offline Learning for $L$-step decodable POMDPs 
with Latent Variable Representation} \label{alg:lvrep_pomdp_offline}
\begin{algorithmic}[1]
    \State \textbf{Input:} Model Class $\mathcal{M}=\{\{(p_h(z|x_h, a_h), p_h(o_{h+1}|z)\}_{h\in[H]}\}$, Variational Distribution Class $\mathcal{Q} = \{\{q_h(z|x_h, a_h, o_{h+1})\}_{h\in[H]}\}$, Offline Dataset $\{\mathcal{D}_h\}_{h=1}^{H}$
    \State Learn the latent variable model $\hat{p}(z|x_h, a_h)$ with $\mathcal{D}_h$ via maximizing the ELBO, and obtain the learned model $\widehat{\mathcal{P}} = \{(\hat{p}_{h}(z|x_h, a_h), \hat{p}_{h}(o_{h+1}|z))\}_{h\in[H]}$.\label{line:representation_offline} 

    \State Set the exploitation penalty $\hat{b}_{h}(s,a)$ with $\mathcal{D}_{k}$.\label{line:penalty}
    \State Learn the policy \label{line:offline_plan}
    $ \hat{\pi}^*=\mathop{\arg\max}_{\pi}V^{\pi}_{\widehat{\mathcal{P}},r-\hat {b}_k}$.
    \State \textbf{Return } $\hat{\pi}^*$.
\end{algorithmic}
\end{algorithm}
\section{Technical Background}\label{appendix:tech_background}
In this section, we revisit several core concepts of the kernel and the reproducing kernel Hilbert space (RKHS) that will be used in the theoretical analysis. For a complete introduction, we refer the reader to \citet{ren2023latent}.

\begin{definition}[Kernel and Reproducing Kernel Hilbert Space (RKHS) \citep{aronszajn1950theory,paulsen2016introduction}]
    The function $k:\mathcal{X} \times \mathcal{X} \to \mathbb{R}$ is called a kernel on $\mathcal{X}$ if there exists a Hilbert space $\mathcal{H}$ and a mapping $\phi:\mathcal{X} \to \mathcal{H}$ (termed as a feature map), such that $\forall x, x^\prime\in\mathcal{X}$, $k(x, x^\prime) = \langle \phi(x), \phi(x^\prime) \rangle_{\mathcal{H}}$. The kernel $k$ is said to be positive semi-definite if $\forall n \geq 1, \{a_i\}_{i\in [n]}\subset \mathbb{R}$ and mutually distinct $\{x_i\}_{i\in [n]}$, we have
    \begin{align*}
        \sum_{i\in [n]} \sum_{j\in [n]} a_i a_j k(x_i, x_j) \geq 0.
    \end{align*}
    The kernel $k$ is said to be positive definite if the inequality is strict (which means we can replace $\geq$ with $>$).

    With a given kernel $k$, we can define the Hilbert space $\mathcal{H}_k$ consists of $\mathbb{R}$-valued function on $\mathcal{X}$ as a reproducing kernel Hilbert space associated with $k$ if both of the following conditions hold:
    \begin{itemize}
        \item $\forall x\in\mathcal{X}$, $k(x, \cdot) \in \mathcal{H}_k$.
        \item Reproducing Property: $\forall x\in \mathcal{X}, f\in \mathcal{H}_k, f(x) = \langle f, k(x, \cdot)\rangle_{\mathcal{H}_k}$. 
    \end{itemize}
    The RKHS norm of $f\in\mathcal{H}_k$ is induced by the inner product, i.e. $\|f\|_{\mathcal{H}_k} := \sqrt{\langle f, f \rangle_{\mathcal{H}_k}}$. 
\end{definition}

\begin{theorem}[Mercer's Theorem \citep{riesz2012functional, steinwart2012mercer}]
    \label{thm:mercer}
    Let $k$ be a continuous positive definite kernel defined on $\mathcal{X} \times \mathcal{X}$. There exists at most countable $\{\mu_i\}_{i\in I}$ such that $\mu_1 \geq \mu_2 \geq \cdots > 0$ and a set of orthonormal basis $\{e_i\}_{i\in I}$ on $L_2(\mu)$ where $\mu$ is a Borel measure on $\mathcal{X}$, such that
    \begin{align*}
        \forall x, x^\prime \in \mathcal{X}, \quad k(x, x^\prime) = \sum_{i\in I} \mu_i e_i(x) e_i(x^\prime),
    \end{align*}
    where the convergence is absolute and uniform. 
\end{theorem}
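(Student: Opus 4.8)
The plan is to realize the kernel $k$ as the integrating kernel of a compact, self-adjoint, positive operator on $L_2(\mu)$, extract its spectral decomposition, and then upgrade the resulting $L_2$-convergent expansion to the absolutely and uniformly convergent pointwise identity claimed. Throughout I assume the standard Mercer hypotheses captured by the regularity conditions of Appendix~\ref{sec:technical_conditions}, namely that $\mathcal{X}$ is a compact metric space and $\mu$ is a finite Borel measure of full support; continuity of $k$ on the compact set $\mathcal{X}\times\mathcal{X}$ then yields $M \defeq \sup_{x}k(x,x)<\infty$ and $k\in L_2(\mu\otimes\mu)$.

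First I would define the integral operator $T_k\colon L_2(\mu)\to L_2(\mu)$ by $(T_kf)(x)=\int_{\mathcal{X}}k(x,y)f(y)\,d\mu(y)$. Symmetry of $k$ (implied by positive definiteness) makes $T_k$ self-adjoint, positive definiteness makes $\inner{T_kf}{f}\ge 0$, and membership $k\in L_2(\mu\otimes\mu)$ makes $T_k$ Hilbert--Schmidt, hence compact. The spectral theorem for compact self-adjoint operators then supplies an at most countable family of eigenvalues, which positivity forces to be strictly positive on the nonzero part, ordered as $\mu_1\ge\mu_2\ge\cdots>0$ with $\mu_i\to 0$, together with an orthonormal system of eigenfunctions $\{e_i\}_{i\in I}$ in $L_2(\mu)$. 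Because $e_i=\mu_i^{-1}T_ke_i$ and $T_k$ maps $L_2(\mu)$ into continuous functions (again by continuity of $k$ and compactness), each $e_i$ admits a continuous representative, which I fix henceforth.

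The heart of the argument is controlling the remainder $r_n(x,y)\defeq k(x,y)-\sum_{i=1}^n\mu_ie_i(x)e_i(y)$. Its associated operator is $T_k$ precomposed with the projection onto $\mathrm{span}\{e_i\}_{i>n}$, which is again positive, so the continuous kernel $r_n$ has a nonnegative diagonal: $\sum_{i=1}^n\mu_ie_i(x)^2\le k(x,x)\le M$ for every $x$. Hence $\sum_{i}\mu_ie_i(x)^2$ converges pointwise, and Cauchy--Schwarz gives $\big|\sum_{i=m}^n\mu_ie_i(x)e_i(y)\big|^2\le\big(\sum_{i=m}^n\mu_ie_i(x)^2\big)\big(\sum_{i=m}^n\mu_ie_i(y)^2\big)$, establishing \emph{absolute} convergence of the double series at each $(x,y)$. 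To pin the diagonal sum to $k(x,x)$ exactly, I would invoke the trace formula for the trace-class operator $T_k$, $\int_{\mathcal{X}}k(x,x)\,d\mu=\mathrm{Tr}(T_k)=\sum_i\mu_i=\int_{\mathcal{X}}\sum_i\mu_ie_i(x)^2\,d\mu$; since the nonnegative continuous function $k(x,x)-\sum_i\mu_ie_i(x)^2$ integrates to zero against a full-support measure, it vanishes identically. With the diagonal identity in hand, the partial sums $S_n(x,x)=\sum_{i=1}^n\mu_ie_i(x)^2$ are continuous, nondecreasing in $n$, and converge pointwise to the continuous function $k(x,x)$ on the compact set $\mathcal{X}$, so Dini's theorem yields \emph{uniform} convergence on the diagonal. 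Feeding the uniform diagonal tail bound back into the Cauchy--Schwarz estimate above promotes the convergence of $S_n(x,y)$ to $k(x,y)$ to be uniform on $\mathcal{X}\times\mathcal{X}$, completing the proof.

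I expect the main obstacle to be the diagonal identity, i.e.\ upgrading the pointwise inequality $\sum_i\mu_ie_i(x)^2\le k(x,x)$ to the equality that both closes the expansion and licenses Dini's theorem; this is exactly where the trace-class identity $\mathrm{Tr}(T_k)=\int k(x,x)\,d\mu$ and the full-support assumption on $\mu$ are indispensable, and where a careless argument would only recover $L_2$ rather than uniform convergence.
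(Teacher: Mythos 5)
First, a point of context: the paper offers no proof of this statement at all --- it is quoted verbatim as classical background with citations to Riesz--Sz.-Nagy and Steinwart--Scovel --- so your proposal must be judged on its own merits. Your overall route (integral operator, Hilbert--Schmidt compactness, spectral decomposition, continuous representatives of eigenfunctions, positivity of the truncated remainder kernel, Cauchy--Schwarz for absolute convergence, Dini on the diagonal) is the standard one and those steps are sound. The genuine gap is exactly where you predicted trouble: the diagonal identity. Your argument there fails twice. (i) You call $k(x,x)-\sum_i\mu_i e_i(x)^2$ a ``nonnegative \emph{continuous} function,'' but at that stage $\sum_i\mu_i e_i(x)^2$ is only known to be lower semicontinuous (an increasing limit of continuous functions), so the difference is merely upper semicontinuous; continuity of the sum is, via Dini, essentially equivalent to the uniform convergence you are trying to establish, so asserting it begs the question. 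A nonnegative u.s.c.\ function can integrate to zero against a full-support atomless measure without vanishing identically (e.g.\ the indicator of a single point), so your vanishing argument only delivers $\sum_i\mu_i e_i(x)^2=k(x,x)$ for $\mu$-a.e.\ $x$, which does not license Dini's theorem. (ii) The trace identity $\mathrm{Tr}(T_k)=\int k(x,x)\,d\mu$ is itself usually obtained as a \emph{corollary} of Mercer's expansion, so invoking it is circular unless proved independently; an independent proof does exist (normalized indicators $u_j$ of a fine partition give $\int k(x,x)\,d\mu\le\sum_j\langle T_k u_j,u_j\rangle+\varepsilon\mu(\mathcal{X})\le\mathrm{Tr}(T_k)+\varepsilon\mu(\mathcal{X})$ by uniform continuity and Bessel, while your diagonal inequality already gives $\sum_{i\le n}\mu_i\le\int k(x,x)\,d\mu$), but nothing of the sort appears in your text --- and even granting it, point (i) still blocks the conclusion.

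The standard repair avoids the trace entirely and works one slice at a time. Fix $x$. The tail bound $\bigl|\sum_{i=m}^n\mu_i e_i(x)e_i(y)\bigr|\le\bigl(\sum_{i=m}^n\mu_i e_i(x)^2\bigr)^{1/2} M^{1/2}$, with $\sum_{i\ge m}\mu_i e_i(x)^2\to0$ at the fixed $x$, shows $\sum_i\mu_i e_i(x)e_i(\cdot)$ converges \emph{uniformly in $y$}, hence to a continuous function of $y$. This limit agrees with $k(x,\cdot)$ in $L_2(\mu)$: the coefficients against each $e_i$ match, since $\langle k(x,\cdot),e_i\rangle=(T_k e_i)(x)=\mu_i e_i(x)$, and $k(x,\cdot)$ has no component in $\ker T_k$, because $f\in\ker T_k$ makes $T_kf$ a continuous function vanishing $\mu$-a.e., hence everywhere by full support, so $\langle k(x,\cdot),f\rangle=(T_kf)(x)=0$. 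Two continuous functions of $y$ equal $\mu$-a.e.\ are equal everywhere (full support again), so $k(x,y)=\sum_i\mu_i e_i(x)e_i(y)$ for \emph{all} $y$; setting $y=x$ yields the pointwise diagonal identity, after which your Dini step and the Cauchy--Schwarz promotion to uniform convergence on $\mathcal{X}\times\mathcal{X}$ go through verbatim. Two small additional remarks: symmetry of $k$ does not follow from the quadratic-form inequality alone, but does follow from the paper's feature-map definition of a kernel; and the nonnegativity of the diagonal of $r_n$ deserves the one-line mollification argument (test $T_{r_n}$ against normalized indicators of small balls, using continuity and full support), which you asserted but did not state.
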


\begin{definition}[Random Feature]
    \label{def:random_feature_representation}
    The kernel $k:\mathcal{X} \times \mathcal{X} \to \mathbb{R}$ has a random feature representation if there exists a function $\psi:\mathcal{X} \times \Xi \to \mathbb{R}$ and a probability measure $P$ over $\Xi$ such that
    \begin{align*}
        k(x, x^\prime) = \int_{\Xi} \psi(x; \xi) \psi(x^\prime; \xi) dP(\xi).
    \end{align*}
\end{definition}

\paragraph{Remark (random feature quadrature):} We here justify the random feature quadrature~\citep{ren2023latent} for completeness. 

We can represent $Q_h^\pi$ as an expectation, 
\[
Q_h^\pi\rbr{x_h, a_h} = \inner{p(z|x_h)}{w_h^\pi(z)} = \EE_{p(z|x_h)}\sbr{w_h^\pi(z)}_{L_2(\mu)}
\]
Under the assumption that $w_h^\pi(\cdot)\in \Hcal_k$, where $\Hcal_k$ denoting some RKHS with some kernel $k\rbr{\cdot, \cdot}$. When $k\rbr{\cdot, \cdot}$ can be represented through random feature, \ie, 
\[
k\rbr{x, y} = \EE_{P(\xi)}\sbr{\psi\rbr{x; \xi}\psi\rbr{y; \xi}},
\]
the $w_h^\pi\rbr{z}$ admits a representation as
\[
w_h^\pi\rbr{z} = \EE_{P(\xi)}\sbr{\wtil^\pi_h\rbr{\xi}\psi\rbr{z; \xi}}. 
\]
Therefore, we plug this random feature representation of $w_h^\pi(z)$ to $Q_h^\pi\rbr{x_h, a_h}$, we obtain
\begin{align}\label{eq:random_feature}
    Q_h^\pi\rbr{x_h, a_h} = \EE_{p(z|x_h), P(\xi)}\sbr{\wtil_h^\pi(\xi)\psi\rbr{z; \xi}}.
\end{align}
Applying Monte-Carlo approximation to~\eqref{eq:random_feature}, we obtain the random feature quadrature.

\section{Theoretical Analysis}\label{appendix:analysis}
\subsection{Technical Conditions}
\label{sec:technical_conditions}
We adopt the following assumptions for the reproducing kernel, which have been used in \citet{ren2023latent} for the MDP setting.
\begin{assumption}[Regularity Conditions]
    \label{assump:trace}
    $\mathcal{Z}$ is a compact metric space with respect to the Lebesgue measure $\nu$ when $\mathcal{Z}$ is continuous. Furthermore, $\int_{\mathcal{Z}} k(z, z) d \nu \leq 1$.
\end{assumption}

\begin{assumption}[Eigendecay Conditions] Assume $\{\nu_i\}_{i\in I}$ defined in Theorem~\ref{thm:mercer} satisfies one of the following conditions:
\begin{itemize}[leftmargin=20pt, parsep=0pt, partopsep=0pt]
\item $\beta$-finite spectrum: for some positive integer $\beta$, we have $\nu_i = 0$, $\forall i > \beta$.
\item $\beta$-polynomial decay: $\nu_i \leq C_0 i^{-\beta}$ with absolute constant $C_0$ and $\beta > 1$.
\item $\beta$-exponential decay: $\nu_i \leq C_1 \exp(-C_2 i^\beta)$, with absolute constants $C_1$, $C_2$ and $\beta > 0$.
\end{itemize}
We will use $C_{\mathrm{poly}}$ to denote constants in the analysis of $\beta$-polynomial decay that only depends on $C_0$ and $\beta$, and $C_{\mathrm{exp}}$ to denote constants in the analysis of $\beta$-exponential decay that only depends on $C_1$, $C_2$ and $\beta$, to simplify the dependency of the constant terms. Both of $C_{\mathrm{poly}}$ and $C_{\mathrm{exp}}$ can be varied step by step.
\end{assumption}

\subsection{Formal Proof}
Before we proceed, we first define
\begin{align*}
    \rho_{k, h} = \frac{1}{k} \sum_{i\in[K]} d_{\mathcal{P}, h}^{\pi_k},
\end{align*}
and $\circ^L \mathcal{U}(\mathcal{A})$ means uniformly taking actions in the consecutive $L$ steps.
\begin{lemma}[$L$-step back inequality for the true model]
\label{lem:back_true}
    Given a set of functions $\left[g_h\right]_{h\in [H]}$, where $g_h:\mathcal{X}\times \mathcal{A} \to \mathbb{R}$, $\|g_h\|_{\infty} \leq B$, $\forall h\in [H]$, we have that $\forall \pi$, 
    \begin{align*}
        \sum_{h\in[H]} \mathbb{E}_{\pi}^{\mathcal{P}}[g(x_h, a_h)] \leq & \sum_{h\in [H]} \mathbb{E}_{(x_{h-L}, a_{h-L})\sim d_{\mathcal{P}, h-L}^\pi}^{\mathcal{P}}\left[\left\|p^*(\cdot|x_{h-L}, a_{h-L})\right\|_{L_2(\mu), \Sigma_{\rho_{k, h-L}, p^*}^{-1}}\right]\\
        & \cdot \sqrt{k|\mathcal{A}|^L \cdot \mathbb{E}_{(\tilde{x}_h, \tilde{a}_h) \sim \rho_{k, h-L} \circ^L \mathcal{U}(\mathcal{A})}\left[g(\tilde{x}_h, \tilde{a}_h)^2\right] + \lambda B^2  C}
    \end{align*}
\end{lemma}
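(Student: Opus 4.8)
The plan is to establish this $L$-step back inequality by exploiting the moment-matching policy and the linear representation structure developed in \cref{subsec:linear_repr}, combined with an elliptical-potential/Cauchy-Schwarz argument analogous to those used for linear MDPs. First I would recall from \cref{appendix:moment_matching} that under $L$-decodability, $d_h^{\mathcal{P}, \pi}(x_h, a_h) = d_h^{\mathcal{P}, \tilde{\pi}^h}(x_h, a_h)$ via the moment-matching policy $\nu^{\pi, h}$, so that $\mathbb{E}_\pi^{\mathcal{P}}[g(x_h, a_h)]$ can be rewritten as an expectation over $(x_{h-L}, a_{h-L})$ followed by $L$ steps under $\tilde{\pi}^h$. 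The decomposition in~\eqref{eq:decomposition} then lets me write the inner $L$-step expectation of $g$ as $\inner{p^*(\cdot|x_{h-L}, a_{h-L})}{w_g(\cdot)}_{L_2(\mu)}$, where $w_g(z) = \int \mathbb{P}^{\nu_\pi}(x_h, a_h|z) g(x_h, a_h)\, dx_h\, da_h$ is the policy-dependent weight vector, bounded in $\mathcal{H}_K$-norm by $C B$ via the normalization condition in \cref{assumption:normalization}.

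Next I would introduce the (regularized) second-moment operator $\Sigma_{\rho_{k, h-L}, p^*} = \mathbb{E}_{\rho_{k,h-L}}[p^*(\cdot|x_{h-L}, a_{h-L}) \otimes p^*(\cdot|x_{h-L}, a_{h-L})] + \lambda I$ in $L_2(\mu)$, and apply Cauchy-Schwarz in the geometry induced by $\Sigma^{-1}$ and $\Sigma$:
\begin{align*}
\inner{p^*(\cdot|x_{h-L}, a_{h-L})}{w_g} \le \nbr{p^*(\cdot|x_{h-L}, a_{h-L})}_{\Sigma^{-1}} \cdot \nbr{w_g}_{\Sigma}.
\end{align*}
The first factor is exactly the elliptical-norm term appearing in the statement. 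For the second factor, I would bound $\nbr{w_g}_{\Sigma}^2 = \inner{w_g}{\Sigma w_g}$ by splitting the regularized operator into its empirical part and the $\lambda I$ part. The empirical part contributes $\mathbb{E}_{\rho_{k,h-L}}[\inner{p^*(\cdot|x_{h-L}, a_{h-L})}{w_g}^2]$, which by re-applying the linear representation identity equals $\mathbb{E}_{\rho_{k,h-L}}$ of the squared $L$-step-back expectation of $g$; a change of measure to the uniform-action rollout $\rho_{k, h-L} \circ^L \mathcal{U}(\mathcal{A})$ at cost $|\mathcal{A}|^L$ (and Jensen to pull the square inside) yields $|\mathcal{A}|^L \mathbb{E}_{(\tilde{x}_h, \tilde{a}_h)}[g(\tilde{x}_h, \tilde{a}_h)^2]$. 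The $\lambda I$ part contributes $\lambda \nbr{w_g}_{L_2(\mu)}^2 \le \lambda B^2 C$ using \cref{assumption:normalization} together with $\nbr{g}_\infty \le B$. Multiplying through by the $1/k$ normalization hidden in $\rho_{k,h-L}$ (which produces the factor $k$ under the square root after writing $\rho_{k,h-L} = \frac1k\sum_i d^{\pi_i}_{\mathcal{P}, h-L}$) gives the claimed bound inside the square root, and summing over $h$ completes the proof.

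I expect the main obstacle to be the careful bookkeeping of the policy dependence when invoking the moment-matching equivalence: the weight $w_g$ is built from $\mathbb{P}^{\nu_\pi}$, which depends on $\pi$, so I must verify that the normalization bound in \cref{assumption:normalization} (stated for $g$ with $\nbr{g}_\infty \le 1$, hence $\nbr{g/B}_\infty \le 1$) genuinely applies to this policy-dependent object uniformly, and that the operator $\Sigma_{\rho_{k,h-L}, p^*}$ is formed from the \emph{true} representation $p^*$ rather than a learned one so the linear identity holds exactly. A secondary subtlety is ensuring the change of measure introducing $|\mathcal{A}|^L$ is valid: since the data collection in Algorithm~\ref{alg:lvrep_pomdp} takes $a_{h:h+L-1} \sim \mathcal{U}(\mathcal{A})$ precisely over these $L$ consecutive steps, the importance ratio between the $\tilde{\pi}^h$ rollout and the uniform rollout is uniformly bounded by $|\mathcal{A}|^L$, which is exactly why the algorithm is designed this way; I would make this ratio bound explicit as the crux linking the statistical quantity to the collected data.
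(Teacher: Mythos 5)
Your proposal follows essentially the same route as the paper's proof: rewrite $\mathbb{E}_\pi[g(x_h,a_h)]$ via the moment-matching policy as an $L$-step-back expectation factored through $\langle p^*(\cdot|x_{h-L},a_{h-L}), \cdot\rangle_{L_2(\mu)}$, apply Cauchy--Schwarz in the $\Sigma^{-1}/\Sigma$ geometry, split the regularized covariance into its empirical part (handled by Jensen plus a uniform-action change of measure costing $|\mathcal{A}|^L$) and its $\lambda$-regularizer part (handled by the normalization condition), exactly as in the paper. The only cosmetic difference is that you describe the regularizer contribution as an $L_2(\mu)$ norm where the paper uses the RKHS norm induced by $\lambda T_K^{-1}$, which is what actually matches \cref{assumption:normalization}; this does not change the argument.
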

\begin{proof}
    The proof can be adapted from the proof of Lemma 6 in \citet{ren2023latent}, and we include it for the completeness. Recall the moment matching policy $\nu^{\pi}$ Since $\nu^{\pi, h}$ does not depend on $(x_{h-L}, a_{h-L})$, we can make the following decomposition:
    \begin{align*}
        &\mathbb{E}_{\tilde{\pi}^h}^{\mathcal{P}}(x_h, a_h) \\
        = & \mathbb{E}_{(x_{h-L}, a_{h-L}) \sim \pi}^{\mathcal{P}}\left[\int_{s_{h-L+1}} \langle p^*(\cdot|x_{h-L}, a_{h-L}),  p^*(s_{h-L+1}|\cdot) \rangle_{L_2(\mu)} \cdot \mathbb{E}_{a_{h-L+1:h \sim \nu^{\pi, h}}}^{\mathcal{P}}[g(x_h, a_h)|s_{h - L + 1}] d s_{h-L+1}\right]\\
        \leq & \mathbb{E}_{(x_{h-L}, a_{h-L}) \sim \pi}^{\mathcal{P}}\left\|p^*(\cdot|x_{h-L}, a_{h-L})\right\|_{L_2(\mu), \Sigma_{\rho_{k, h-L}, p^*}^{-1}}\\
        & \cdot \left\|\int_{s_{h-L+1}} p^*(s_{h-L+1}|\cdot) \mathbb{E}[g(x_h, a_h)|s_{h-L+1}, \nu^{\pi, h}] d s_{h-L+1}\right\|_{L_2(\mu), \Sigma_{\rho_{k, h-L}, p^*}}. 
    \end{align*}
    Direct computation shows that
    \begin{align*}
        & \left\|\int_{s_{h-L+1}} p^*(s_{h-L+1}|\cdot) \mathbb{E}^{\mathcal{P}}[g(x_h, a_h)|s_{h-L+1}, \nu^{\pi, h}] d s_{h-L+1}\right\|_{L_2(\mu), \Sigma_{\rho_{k, h-L}, p^*}}\\
        = & k \mathbb{E}_{(\tilde{x}_{h-L}, \tilde{a}_{h-L} )\sim \rho_{k, h-L}} \left[\mathbb{E}_{s_{h-L+1}\sim \mathbb{P}_{h-L}^\mathcal{P}(\cdot|x_{h-L}, a_{h-L})}^{\mathcal{P}}[g(x_h, a_h)|s_{h-L+1}, \nu^{\pi, h}]\right]^2 \\
        & + \left\|\int_{s_{h-L+1}} p^*(s_{h-L+1}|\cdot) \cdot \mathbb{E}^{\mathcal{P}}[g(x_h, a_h)|s_{h-L+1}, \nu^{\pi, h}] d s_{h-L+1}\right\|_{\mathcal{H}}^2\\
        \leq & k \mathbb{E}_{(\tilde{x}_{h-L}, \tilde{a}_{h-L}) \sim \rho_{k, h-L}} \mathbb{E}^{\mathcal{P}}_{s_{h-L+1}\sim \mathbb{P}_{h-L}^\mathcal{P}(\cdot|x_{h-L}, a_{h-L}), a_{h-L+1:h} \sim \nu^{\pi, h}} \left[g(x_h, a_h)\right]^2 + \lambda B^2 C \\
        \leq & k |\mathcal{A}|^L \mathbb{E}^{\mathcal{P}}_{(\tilde{x}_h, \tilde{a}_h) \sim \rho_{k, h-L} \circ^L \mathcal{U}(\mathcal{A})} [g(\tilde{x}_h, \tilde{a}_h)]^2 + \lambda B^2 C,
    \end{align*}
    which finishes the proof.
\end{proof}
\begin{lemma}[$L$-step back inequality for the learned model]
\label{lem:back_learned}
    Assume we have a set of functions $\left[g_h\right]_{h\in [H]}$, where $g_h:\mathcal{X}\times \mathcal{A} \to \mathbb{R}$, $\|g_h\|_{\infty} \leq B$, $\forall h\in [H]$. Given Lemma~\ref{lem:mle}, we have that $\forall \pi$, 
    \begin{align*}
        \sum_{h\in[H]} \mathbb{E}_{\pi}^{\widehat{\mathcal{P}}_k}[g(x_h, a_h)] \leq & \sum_{h\in [H]} \mathbb{E}_{(x_{h-L}, a_{h-L})\sim d_{\widehat{\mathcal{P}}_k, h-L}^\pi]}^{\widehat{\mathcal{P}}_k}\left[\left\|\hat{p}(\cdot|x_{h-L}, a_{h-L})\right\|_{L_2(\mu), \Sigma_{\rho_{k, h-2L} \circ^L \mathcal{U}(\mathcal{A}), \hat{p}}^{-1}}\right]\\
        & \cdot \sqrt{k |\mathcal{A}|^{L}  \cdot \mathbb{E}_{(\tilde{x}_h, \tilde{a}_h) \sim \rho_{k, h-2L} \circ^{2L} \mathcal{U}(\mathcal{A})}\left[g(\tilde{x}_h, \tilde{a}_h)^2\right] + \lambda B^2  C + kL|\mathcal{A}|^{L-1} B^2 \zeta_k}
    \end{align*}
\end{lemma}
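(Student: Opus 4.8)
The plan is to follow the same three-step template as the proof of Lemma~\ref{lem:back_true} --- (i) roll back $L$ steps via the moment matching policy, (ii) split by Cauchy--Schwarz in $L_2(\mu)$ against the covariance $\Sigma_{\cdot,\hat p}$, and (iii) expand the resulting $\Sigma$-weighted factor --- but now every propagation step is taken under the \emph{learned} model $\widehat{\mathcal{P}}_k$ rather than the true model $\mathcal{P}$. Concretely, I would first replace $\tilde\pi^h$ and $\nu^{\pi,h}$ by the analogues constructed from the dynamics of $\widehat{\mathcal{P}}_k$ (legitimate since $\widehat{\mathcal{P}}_k\in\mathcal{M}$ is itself $L$-decodable), so that $\mathbb{E}_\pi^{\widehat{\mathcal{P}}_k}[g(x_h,a_h)]$ decomposes at the $(h-L)$-th step as an inner product $\langle \hat p(\cdot|x_{h-L},a_{h-L}),\, \hat p(s_{h-L+1}|\cdot)\rangle_{L_2(\mu)}$ against the conditional expectation of $g$ under $\nu^{\pi,h}$, exactly as in the displayed decomposition of Lemma~\ref{lem:back_true}.

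Next I would apply Cauchy--Schwarz with respect to the covariance $\Sigma_{\rho_{k,h-2L}\circ^L\mathcal{U}(\mathcal{A}),\hat p}$, which yields the left factor $\mathbb{E}_{d^\pi_{\widehat{\mathcal{P}}_k,h-L}}[\|\hat p(\cdot|x_{h-L},a_{h-L})\|_{L_2(\mu),\Sigma^{-1}}]$ verbatim in the claimed form, together with a right factor equal to the $\Sigma$-weighted $L_2(\mu)$ norm of $\int \hat p(s_{h-L+1}|\cdot)\,\mathbb{E}^{\widehat{\mathcal{P}}_k}[g(x_h,a_h)\mid s_{h-L+1},\nu^{\pi,h}]\,ds_{h-L+1}$. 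As in Lemma~\ref{lem:back_true}, expanding this norm via the definition of $\Sigma$ (an RKHS covariance of the form $k\,\mathbb{E}_\rho[\hat p\,\hat p^{\top}]+\lambda I$) produces a data-expectation term plus the regularization term $\lambda B^2 C$, where the latter uses Assumption~\ref{assumption:normalization} and $\|g\|_\infty\le B$.

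The crucial difference --- and the source of the extra summand $kL|\mathcal{A}|^{L-1}B^2\zeta_k$ --- appears in the data-expectation term. In Lemma~\ref{lem:back_true} the inner conditional expectation is already propagated under the true dynamics, so it matches the data roll-in $\rho_{k,\cdot}$ directly; here the conditional expectation of $g$ is taken under $\widehat{\mathcal{P}}_k$, whereas $\rho_{k,\cdot}$ is generated under $\mathcal{P}$. I would therefore change measure over the intervening $L$-step block, replacing propagation under $\widehat{\mathcal{P}}_k$ by propagation under $\mathcal{P}$, and invoke the MLE guarantee of Lemma~\ref{lem:mle}, which controls the expected total variation between the $L$-step observation laws of $\widehat{\mathcal{P}}_k$ and $\mathcal{P}$ by $\zeta_k$. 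The swap then costs at most $kL|\mathcal{A}|^{L-1}B^2\zeta_k$: the factor $L$ telescopes the per-step model error across the block, $|\mathcal{A}|^{L-1}$ is the importance-weight inflation incurred when bounding the moment-matching rollout by the uniform-action rollout, and $B^2$ bounds $g^2$. Because aligning the two models requires an additional $L$-step observation window, the roll-in index drops from $h-L$ to $h-2L$ and the inner expectation is taken over $\rho_{k,h-2L}\circ^{2L}\mathcal{U}(\mathcal{A})$, matching the statement. Collecting the data term, the regularization term, and the MLE term under a single square root and summing over $h\in[H]$ then gives the claimed inequality.

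I expect the main obstacle to be step (iii): rigorously justifying the change of measure between the learned-model and true-model propagations while simultaneously switching from the moment-matching policy to uniform actions, and verifying that $\zeta_k$ aggregates with exactly the $kL|\mathcal{A}|^{L-1}B^2$ prefactor rather than an exponential-in-$L$ blow-up. This forces one to chain Lemma~\ref{lem:mle} across the $L$ steps so that the total-variation error stays additive, which is precisely what dictates the $2L$-step back-up and the doubled uniform-action window in the final bound.
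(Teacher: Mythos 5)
Your proposal matches the paper's proof essentially step for step: the same moment-matching decomposition under $\widehat{\mathcal{P}}_k$, the same Cauchy--Schwarz against $\Sigma_{\rho_{k,h-2L}\circ^L\mathcal{U}(\mathcal{A}),\hat p}$, the same expansion into a data term plus $\lambda B^2 C$, the importance-weight factor $|\mathcal{A}|^L$ from switching to uniform actions, and the per-step telescoping of the MLE guarantee (Lemma~\ref{lem:mle}) to swap propagation under $\widehat{\mathcal{P}}_k$ for the true-model roll-out, yielding the $kL|\mathcal{A}|^{L-1}B^2\zeta_k$ term and the $2L$-step back-up. You also correctly isolate the change-of-measure step as the only genuinely new ingredient relative to Lemma~\ref{lem:back_true}, which is exactly where the paper's proof departs from it.
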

\begin{proof}
    The proof can be adapted from the proof of Lemma 5 in \citet{ren2023latent}, and we include it for the completeness. We define a similar moment matching policy and make the following decomposition: 
    \begin{align*}
        &\mathbb{E}_{\pi}^{\widehat{\mathcal{P}}_k}(x_h, a_h) \\
        = & \mathbb{E}_{(x_{h-L}, a_{h-L}) \sim \pi}^{\widehat{\mathcal{P}}_k}\left[\int_{s_{h-L+1}} \langle \hat{p}(\cdot|x_{h-L}, a_{h-L}),  \hat{p}(s_{h-L+1}|\cdot) \rangle_{L_2(\mu)} \cdot \mathbb{E}^{\widehat{\mathcal{P}}_k}[g(x_h, a_h)|s_{h-L+1}, \nu^{\pi, h}] d s_{h-L+1}\right]\\
        \leq & \mathbb{E}_{(x_{h-L}, a_{h-L}) \sim \pi}^{\widehat{\mathcal{P}}_k}\left\|\hat{p}(\cdot|x_{h-L}, a_{h-L})\right\|_{L_2(\mu), \Sigma_{\rho_{k, h-2L} \circ^L \mathcal{U}(\mathcal{A}), \hat{p}}^{-1}} \\
        & \cdot \left\|\int_{s_{h-L+1}} \hat{p}(s_{h-L+1}|\cdot) \mathbb{E}^{\widehat{\mathcal{P}}_k}[g(x_h, a_h)|s_{h-L+1}, \nu^{\pi, h}] d s_{h-L+1}\right\|_{L_2(\mu), \Sigma_{\rho_{k, h-2L}\circ^{L} \mathcal{U}(\mathcal{A}), \hat{p}}}. 
    \end{align*}

Direct computation shows that
\begin{align*}
    & \left\|\int_{s_{h-L+1}} \hat{p}(s_{h-L+1}|\cdot) \mathbb{E}^{\widehat{\mathcal{P}}_k}[g(x_h, a_h)|s_{h-L+1}, \nu^{\pi, h}] d s_{h-L+1}\right\|_{L_2(\mu), \Sigma_{\rho_{k, h-2L}\circ^{L}\mathcal{U}(\mathcal{A}), \hat{p}}}^2\\
    = & k \mathbb{E}_{(\tilde{x}_{h-L}, \tilde{a}_{h-L} )\sim \rho_{k, h-2L} \circ^{L} \mathcal{U}(\mathcal{A})} \left[\mathbb{E}_{s_{h-L+1} \sim \mathbb{P}_{h-L}^{\widehat{\mathcal{P}}_k}(\cdot|\tilde{x}_{h-L}, \tilde{a}_{h-L})} \mathbb{E}^{\widehat{\mathcal{P}}_k}[g(x_h, a_h)|s_{h-L+1},\nu^{\pi, h}]\right]^2 \\
    & + \left\|\int_{s_{h-L+1}} \hat{p}(s_{h-L+1}|\cdot) \mathbb{E}[g(x_h, a_h)|s_{h-L+1}, \nu^{\pi, h}] d s_{h-L+1}\right\|_{\mathcal{H}}^2\\
    \leq & k \mathbb{E}_{(\tilde{x}_{h-L}, \tilde{a}_{h-L}) \sim \rho_{k, h-2L} \circ^{L} \mathcal{U}(\mathcal{A})}  \mathbb{E}_{s_{h-L+1} \sim \mathbb{P}_{h-L}^{\widehat{\mathcal{P}}_k}(\cdot|\tilde{x}_{h-L}, \tilde{a}_{h-L}), \nu^{\pi, h}}^{\widehat{\mathcal{P}}_k}[g(x_h, a_h)]^2 +  \lambda B^2 C \\
    \leq &  k|\mathcal{A}|^{L} \mathbb{E}_{(\tilde{x}_{h-L}, \tilde{a}_{h-L}) \sim \rho_{k, h-2L} \circ^{L} \mathcal{U}(\mathcal{A})}  \mathbb{E}^{\widehat{\mathcal{P}}_k}_{a_{h-L+1:h} \sim \circ^L\mathcal{U}(\mathcal{A})}[g(x_h, a_h)]^2 +  \lambda B^2 C \\
    \leq & k |\mathcal{A}|^{L} \mathbb{E}_{(\tilde{x}_h, \tilde{a}_h) \sim \rho_{k, h-2L}\circ^{2L} \mathcal{U}(\mathcal{A})} [g(\tilde{x}_h, \tilde{a}_h)]^2 + k L |\mathcal{A}|^{L-1} B^2 \zeta_k + \lambda B^2 C,
\end{align*}
where we use the MLE guarantee for each individual step to obtain the last inequality. This finishes the proof. 
\end{proof}
\begin{lemma}[Almost Optimism]\label{lem:optimism}
    For episode $k\in [K]$, set 
    \begin{align*}
        \hat{b}_{k, h} = \min\left\{\alpha_k \|\hat{p}_{k}(\cdot|x_{h-L}, a_{h-L})\|_{L_2(\mu), \hat{\Sigma}_{k,h,\hat{p}_{k}}^{-1}}, 2\right\},
    \end{align*}
    with $\alpha_k = \frac{\sqrt{5kL |\mathcal{A}|^L \zeta_k + 4\lambda d}}{c}$,
    \begin{align*}
    \hat{\Sigma}_{k,h,\hat{p}_k}:L_2(\mu) \to L_2(\mu), \quad \hat{\Sigma}_{k,h, \hat{p}_k} := \sum_{(x_{h, i}, a_{h, i}) \in \mathcal{D}_{k, h}} \left[\hat{p}_{k}(z|x_{h, i}, a_{h, i}) \hat{p}_{k}(z|x_{h, i}, a_{h, i})^\top\right] + \lambda T_K^{-1}
    \end{align*}
    where $T_K$ is the integral operator associated with $K$ (i.e. $T_K f = \int f(x) K(x, \cdot) dx$) and $\lambda$ is set for different eigendecay of $K$ as follows:
    \begin{itemize}
        \item $\beta$-finite spectrum: $\lambda = \Theta(\beta\log K + \log (K|\mathcal{P}|/\delta))$
        \item $\beta$-polynomial decay: $\lambda = \Theta(C_{\mathrm{poly}}K^{1/(1 + \beta)} + \log (K|\mathcal{P}|/\delta))$;
        \item $\beta$-exponential decay: $\lambda = \Theta(C_{\mathrm{exp}}(\log K)^{1/\beta} + \log (K|\mathcal{P}|/\delta))$;
    \end{itemize}
    $c$ is an absolute constant, then with probability at least $1-\delta$, $\forall k\in[K]$ we have
    \begin{align*}
        V^{\pi^*, \widehat{\mathcal{P}}_k, r + \widehat{b}_k} - V^{\pi^*, \mathcal{P}, r} \geq -\sqrt{|\mathcal{A}|^{L+1} \zeta_k}
    \end{align*}
    \begin{proof}
        With Lemma~\ref{lem:simulation}, we have that
        \begin{align*}
            & V^{\pi^*, \widehat{\mathcal{P}}_k, r + \widehat{b}^k} - V^{\pi^*, \mathcal{P}, r} \\
            = & \sum_{h\in[H]} \mathbb{E}_{(x_h, a_h)\sim d_{\widehat{\mathcal{P}}_k, h}^{\pi^*}}\left[\widehat{b}_h^k(x_h, a_h) + \mathbb{E}_{o^\prime \sim \mathbb{P}_h^{\widehat{\mathcal{P}}_k}(\cdot|x_h, a_h)}[V_{h+1}^{\pi^*, \mathcal{P}, r}(x_{h+1}^\prime)] - \mathbb{E}_{o^\prime \sim \mathbb{P}_h^{\mathcal{P}}(\cdot|x_h, a_h)}[V_{h+1}^{\pi^*, \mathcal{P}, r}(x_{h+1}^\prime)]\right]\\
            \geq & \sum_{h\in[H]} \mathbb{E}_{(x_h, a_h)\sim d_{\widehat{\mathcal{P}}_k, h}^{\pi^*}}\left[\min\left[c\alpha_k \left\|\hat{p}(\cdot|x_{h-L}, a_{h-L})\right\|_{L_2(\mu), \Sigma_{\rho_{k, h-L}, \hat{p}}^{-1}}, 2\right] + \mathbb{E}_{o^\prime \sim \mathbb{P}_h^{\widehat{\mathcal{P}}_k}(\cdot|x_h, a_h)}[V_{h+1}^{\pi^*, \mathcal{P}, r}(x_{h+1}^\prime)] \right.\\
            & \left. - \mathbb{E}_{o^\prime \sim \mathbb{P}_h^{\mathcal{P}}(\cdot|x_h, a_h)}[V_{h+1}^{\pi^*, \mathcal{P}, r}(x_{h+1}^\prime)]\right],
        \end{align*}
        where in the last step we replace the empirical covariance with the population counterpart thanks to Lemma 17 in \citet{ren2023latent}. Define
        \begin{align*}
            g_h(z_h, a_h) = \mathbb{E}_{o^\prime \sim \mathbb{P}_h^{\mathcal{P}}(\cdot|x_h, a_h)}[V_{h+1}^{\pi^*, \mathcal{P}, r}(x_{h+1}^\prime)] - \mathbb{E}_{o^\prime \sim \mathbb{P}_h^{\widehat{\mathcal{P}}_k}(\cdot|x_h, a_h)}[V_{h+1}^{\pi^*, \mathcal{P}, r}(x_{h+1}^\prime)],
        \end{align*}
        With H\"older's inequality, we have that $\|g_h\|_{\infty} \leq 2$.

        Furthermore, with Lemma~\ref{lem:back_learned}, we have that
        \begin{align*}
            & \sum_{h\in [H]}\mathbb{E}_{(x_h, a_h) \sim d_{\widehat{\mathcal{P}}_k, h}^{\pi^*}} [g_h(x_h, a_h)]\\
            \leq & \sum_{h\in [H]} \mathbb{E}_{(x_{h-L}, a_{h-L})\sim d_{\widehat{\mathcal{P}}_k, h}^{\pi^*}} \left[\left\|\hat{p}(\cdot|x_{h-L}, a_{h-L})\right\|_{L_2(\mu), \Sigma_{\rho_{k, h-L}, \hat{p}}^{-1}}\right] \\
            & \cdot \sqrt{k |\mathcal{A}|^L \cdot \mathbb{E}_{(\tilde{x}_h, \tilde{a}_h) \sim \rho_{h-2L}\circ^{2L} \mathcal{U}(\mathcal{A})}\left[g(\tilde{x}_h, \tilde{a}_h)^2\right] + 4\lambda  C + 4kL|\mathcal{A}|^{L-1}  \zeta_k}\\
            \leq &  \sum_{h\in [H]} \mathbb{E}_{(x_{h-L}, a_{h-L})\sim d_{\widehat{\mathcal{P}}_k, h}^{\pi^*}} \left[c \alpha_k\left\|\hat{p}(\cdot|x_{h-L}, a_{h-L})\right\|_{L_2(\mu), \Sigma_{\rho_{k, h-L}, \hat{p}}^{-1}}\right],
        \end{align*}
        where we use Lemma~\ref{lem:mle} in the last step. Now we deal with the case with $h\in [L]$. Note that, $\forall h\in [L]$
        \begin{align*}
            & \mathbb{E}_{(x_h, a_h) \sim \pi}[g_h(x_h, a_h)] \\
            \leq & |\mathcal{A}|^h\mathbb{E}_{x_1\sim d_1, a_{1:h} \sim \circ^h\mathcal{U}(\mathcal{A})} \left\|\mathbb{P}^{\widehat{\mathcal{P}}_k}_{h}(\cdot|x_h, a_h) - \mathbb{P}^{\mathcal{{P}}}_h(\cdot|x_h, a_h)\right\|_1\\
            \leq & \sqrt{\mathbb{E}_{x_1 \sim d_1, a_{1:h}\circ^L \mathcal{U}(\mathcal{A})} \left\|\mathbb{P}^{\widehat{\mathcal{P}}_k}_{h}(\cdot|x_h, a_h) - \mathbb{P}^{\mathcal{{P}}}_h(\cdot|x_h, a_h)\right\|_1^2}\\
            \leq & \sqrt{|\mathcal{A}|^h \zeta_k},
        \end{align*}
        where in the last step we use Lemma~\ref{lem:mle}. We finish the proof by summing over $h\in [L]$.
    \end{proof}
    \begin{lemma}[Regret]\label{lem:regret}
        With probability at least $1-\delta$, we have that
        \begin{itemize}
            \item For $\beta$-finite spectrum, we have
            \begin{align*}
                \sum_{k=1}^K V^{\pi^*, \mathcal{P}, r} - V^{\pi_k, \mathcal{P}, r} \lesssim \sum_{k=1}^K V^{\pi^*, \mathcal{P}, r} - V^{\pi_k, \mathcal{P}, r} \lesssim H^2 \beta^{3/2} |\mathcal{A}|^L\log K \sqrt{CLK \log (K|\mathcal{M}|/\delta)};
            \end{align*}
            \item For $\beta$-polynomial decay, we have
            \begin{align*}
                \sum_{k=1}^K V^{\pi^*, \mathcal{P}, r} - V^{\pi_k, \mathcal{P}, r} \lesssim  C_{\mathrm{poly}}H^2  |\mathcal{A}|^L K^{\frac{1}{2} + \frac{1}{1 + \beta}}\sqrt{CL \log (K|\mathcal{M}|/\delta)};
            \end{align*}
            \item For $\beta$-exponential decay, we have
            \begin{align*}
                \sum_{k=1}^K V^{\pi^*, \mathcal{P}, r} - V^{\pi_k, \mathcal{P}, r} \lesssim  C_{\mathrm{exp}}H^2 |\mathcal{A}|^L(\log K)^{1 + \frac{3}{2\beta}} \sqrt{CLK \log (K|\mathcal{M}|/\delta)};
            \end{align*}
        \end{itemize}
    \end{lemma}
    \begin{proof}
        With Lemma~\ref{lem:optimism} and Lemma~\ref{lem:simulation}, we have
        \begin{align*}
            & V^{\pi^*, \mathcal{P}, r} - V^{\pi_k, \mathcal{P}, r}\\
            \leq & V^{\pi^*, \widehat{\mathcal{P}}_k, r + \widehat{b}^k} + \sqrt{|\mathcal{A}|^{L+1}\zeta_k} - V^{\pi_k, \mathcal{P}, r}\\
            \leq & V^{\pi^k, \widehat{\mathcal{P}}_k, r + \widehat{b}^k} + \sqrt{|\mathcal{A}|^{L+1}\zeta_k} - V^{\pi_k, \mathcal{P}, r}\\
            = & \sum_{h\in[H]} \mathbb{E}_{(x_h, a_h)\sim d_{\mathcal{P}, h}^{\pi_k}}\left[\widehat{b}_h^k(x_h, a_h) + \mathbb{E}_{o^\prime \sim \mathbb{P}_h^{\widehat{\mathcal{P}}_k}(\cdot|x_h, a_h)}\left[V_{h+1}^{\pi_k, \widehat{\mathcal{P}}_k, r + \widehat{b}_h^k}(x_{h+1}^\prime)\right] - \mathbb{E}_{o^\prime \sim \mathbb{P}_h^{\mathcal{P}}(\cdot|x_h, a_h)}\left[V_{h+1}^{\pi_k, \widehat{\mathcal{P}}_k, r + \widehat{b}_h^k}(x_{h+1}^\prime)\right]\right],\\
            & + \sqrt{|\mathcal{A}|^{L+1}\zeta_k}.
        \end{align*}
        Note that $\left\|\widehat{b}_h^k\right\|_{\infty} \leq 2$. Applying Lemma~\ref{lem:back_true}, we have that
        \begin{align*}
            & \sum_{h\in[H]} \mathbb{E}_{(x_h, a_h) \sim d_{\mathcal{P}, h}^{\pi_k}} \left[\widehat{b}_h^k(x_h, a_h)\right]\\
            \leq & \sum_{h\in[H]} \mathbb{E}_{(\tilde{x}_{h-L}, \tilde{a}_{h-L})\sim d_{\mathcal{P}, h}^{\pi_k}} \left[\left\|p^*(\cdot|x_{h-L}, a_{h-L})\right\|_{L_2(\mu), \Sigma_{\rho_{k, h-L}, p^*}^{-1}}\right]\\
            & \cdot \sqrt{k|\mathcal{A}|^L \cdot \mathbb{E}_{(\tilde{x}_h, \tilde{a}_h) \sim \rho_{k, h-L} \circ^L \mathcal{U}(\mathcal{A})}\left[\widehat{b}_{h}^k(\tilde{x}_h, \tilde{a}_h)^2\right] + 4\lambda C}
        \end{align*}
        Following the proof of Lemma 8 in \citet{ren2023latent}, we have that: 
        \begin{itemize}
            \item for $\beta$-finite spectrum, 
            \begin{align*}
                 k\mathbb{E}_{(\tilde{x}_h, \tilde{a}_h) \sim \rho_{k, h-L}\circ^L \mathcal{U}(\mathcal{A})}\left[\widehat{b}_{h}^k(\tilde{x}_h, \tilde{a}_h)^2\right] = O(\beta \log K);
            \end{align*}
            \item for $\beta$-polynomial decay,
            \begin{align*}
                 k\mathbb{E}_{(\tilde{x}_h, \tilde{a}_h) \sim \rho_{k, h-L}\circ^L \mathcal{U}(\mathcal{A})}\left[\widehat{b}_{h}^k(\tilde{x}_h, \tilde{a}_h)^2\right] = O\left(C_{\mathrm{poly}} K^{\frac{1}{2(1 + \beta)}}\log K\right);
            \end{align*}
            \item for $\beta$-exponential decay,
            \begin{align*}
                 k\mathbb{E}_{(\tilde{x}_h, \tilde{a}_h) \sim \rho_{k, h-L}\circ^L \mathcal{U}(\mathcal{A})}\left[\widehat{b}_{h}^k(\tilde{x}_h, \tilde{a}_h)^2\right] = O\left(C_{\mathrm{exp}}(\log K)^{1+ 1/\beta}\right).
            \end{align*}
        \end{itemize}
        We then consider
        \begin{align*}
            \sum_{h\in[H]} \mathbb{E}_{(x_h, a_h)\sim d_{\mathcal{P}, h}^{\pi_k}}\left[ \mathbb{E}_{o^\prime \sim \mathbb{P}_h^{\widehat{\mathcal{P}}_k}(\cdot|x_h, a_h)}\left[V_{h+1}^{\pi_k, \widehat{\mathcal{P}}_k, r + \widehat{b}_h^k}(x_{h+1}^\prime)\right] - \mathbb{E}_{o^\prime \sim \mathbb{P}_h^{\mathcal{P}}(\cdot|x_h, a_h)}\left[V_{h+1}^{\pi_k, \widehat{\mathcal{P}}_k, r + \widehat{b}_h^k}(x_{h+1}^\prime)\right]\right].
        \end{align*}
        Define
        \begin{align*}
            g(x_h, a_h) = \frac{1}{2H+1}\left[\mathbb{E}_{o^\prime \sim \mathbb{P}_h^{\widehat{\mathcal{P}}_k}(\cdot|x_h, a_h)}\left[V_{h+1}^{\pi_k, \widehat{\mathcal{P}}_k, r + \widehat{b}_h^k}(x_{h+1}^\prime)\right] - \mathbb{E}_{o^\prime \sim \mathbb{P}_h^{\mathcal{P}}(\cdot|x_h, a_h)}\left[V_{h+1}^{\pi_k, \widehat{\mathcal{P}}_k, r + \widehat{b}_h^k}(x_{h+1}^\prime)\right]\right].
        \end{align*}
        With H\"older's inequality and note that $\left\|\widehat{b}_h^k\right\|\leq 2$, we have that $\|g\|_{\infty} \leq 2$. With Lemma~\ref{lem:back_true}, we have that
        \begin{align*}
            & \sum_{h\in[H]} \mathbb{E}_{(x_h, a_h)\sim d_{\mathcal{P}, h}^{\pi_k}} [g(x_h, a_h)]\\
            \leq & \sum_{h\in [H]} \mathbb{E}_{(x_{h-L}, a_{h-L})\sim d_{\mathcal{P}, h}^{\pi_k}}^{\mathcal{P}}\left[\left\|p^*(\cdot|x_{h-L}, a_{h-L})\right\|_{L_2(\mu), \Sigma_{\rho_{k, h-L}, p^*}^{-1}}\right] \cdot \sqrt{k|\mathcal{A}|^L \mathbb{E}_{(\tilde{x}_h, \tilde{a}_h) \sim \rho_{k, h-L} \circ^L \mathcal{U}(\mathcal{A})}\left[g(\tilde{x}_h, \tilde{a}_h)^2\right] + 4\lambda  C}\\
            \leq & \sum_{h\in [H]} \mathbb{E}_{(x_{h-L}, a_{h-L})\sim d_{\mathcal{P}, h}^{\pi_k}}^{\mathcal{P}}\left[\left\|p^*(\cdot|x_{h-L}, a_{h-L})\right\|_{L_2(\mu), \Sigma_{\rho_{k, h-L}, p^*}^{-1}}\right] \cdot \sqrt{k|\mathcal{A}|^L \zeta_k + 4\lambda  C}\\
            \leq & c\alpha_k \sum_{h\in [H]} \mathbb{E}_{(x_{h-L}, a_{h-L})\sim d_{\mathcal{P}, h}^\pi}^{\mathcal{P}}\left[\left\|p^*(\cdot|x_{h-L}, a_{h-L})\right\|_{L_2(\mu), \Sigma_{\rho_{k, h-L}, p^*}^{-1}}\right]
        \end{align*}
        With Cauchy-Schwartz inequality, we know that
        \begin{align*}
            & \sum_{k\in [K]} \mathbb{E}_{(x_{h-L}, a_{h-L})\sim d_{\mathcal{P}, h}^{\pi_k}}^{\mathcal{P}}\left[\left\|p^*(\cdot|x_{h-L}, a_{h-L})\right\|_{L_2(\mu), \Sigma_{\rho_{k, h-L}, p^*}^{-1}}\right] \\
            \leq & \sqrt{K \sum_{k\in [K]} \mathbb{E}_{(x_{h-L}, a_{h-L})\sim d_{\mathcal{P}, h}^{\pi_k}}^{\mathcal{P}}\left[\left\|p^*(\cdot|x_{h-L}, a_{h-L})\right\|_{L_2(\mu), \Sigma_{\rho_{k, h-L}, p^*}^{-1}}^2\right]}.
        \end{align*}
        Following the proof of Lemma 8 in \citet{ren2023latent}, we have that
        \begin{itemize}
            \item for $\beta$-finite spectrum,
            \begin{align*}
                \sum_{k\in [K]} \mathbb{E}_{(x_{h-L}, a_{h-L})\sim d_{\mathcal{P}, h}^{\pi_k}}^{\mathcal{P}}\left[\left\|p^*(\cdot|x_{h-L}, a_{h-L})\right\|_{L_2(\mu), \Sigma_{\rho_{k, h-L}, p^*}^{-1}}^2\right] = O(\beta \log K);
            \end{align*}
            \item for $\beta$-polynomial decay,
            \begin{align*}
                \sum_{k\in [K]} \mathbb{E}_{(x_{h-L}, a_{h-L})\sim d_{\mathcal{P}, h}^{\pi_k}}^{\mathcal{P}}\left[\left\|p^*(\cdot|x_{h-L}, a_{h-L})\right\|_{L_2(\mu), \Sigma_{\rho_{k, h-L}, p^*}^{-1}}^2\right] = O\left(C_{\mathrm{poly}} K^{\frac{1}{2(1 + \beta)}}\log K\right);
            \end{align*}
            \item for $\beta$-exponential decay,
            \begin{align*}
                \sum_{k\in [K]} \mathbb{E}_{(x_{h-L}, a_{h-L})\sim d_{\mathcal{P}, h}^{\pi_k}}^{\mathcal{P}}\left[\left\|p^*(\cdot|x_{h-L}, a_{h-L})\right\|_{L_2(\mu), \Sigma_{\rho_{k, h-L}, p^*}^{-1}}^2\right] = O\left(C_{\mathrm{exp}}(\log K)^{1+ 1/\beta}\right).
            \end{align*}
        \end{itemize}
        Combine the previous steps and take the dominating term out, we have that
        \begin{itemize}
            \item for $\beta$-finite spectrum, 
            \begin{align*}
                \sum_{k=1}^K V^{\pi^*, \mathcal{P}, r} - V^{\pi_k, \mathcal{P}, r} \lesssim H^2 \beta^{3/2} |\mathcal{A}|^L\log K \sqrt{CLK \log (K|\mathcal{M}|/\delta)};
            \end{align*}
            \item for $\beta$-polynomial decay,
            \begin{align*}
                \sum_{k=1}^K V^{\pi^*, \mathcal{P}, r} - V^{\pi_k, \mathcal{P}, r} \lesssim C_{\mathrm{poly}}H^2  |\mathcal{A}|^L K^{\frac{1}{2} + \frac{1}{1 + \beta}}\sqrt{C L \log (K|\mathcal{M}|/\delta)};
            \end{align*}
            \item for $\beta$-exponential decay,
            \begin{align*}
                \sum_{k=1}^K V^{\pi^*, \mathcal{P}, r} - V^{\pi_k, \mathcal{P}, r} \lesssim C_{\mathrm{exp}}H^2 |\mathcal{A}|^L(\log K)^{1 + \frac{3}{2\beta}} \sqrt{CLK \log (K|\mathcal{M}|/\delta)};
            \end{align*}
        \end{itemize}
        which finishes the proof.
    \end{proof}
\end{lemma}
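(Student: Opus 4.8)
The plan is to bound the cumulative regret by the usual optimism argument, after which everything reduces to a cumulative information-gain (elliptical-potential) bound, specialized to the $L$-step POMDP structure and to each eigendecay regime. First I would fix an episode $k$ and bound the per-episode regret $V^{\pi^*,\mathcal{P},r}-V^{\pi_k,\mathcal{P},r}$. Using Almost Optimism (Lemma~\ref{lem:optimism}) I replace the true optimal value by the optimistic value, paying an additive $\sqrt{|\mathcal{A}|^{L+1}\zeta_k}$; since $\pi_k=\mathop{\arg\max}_\pi V^{\pi}_{\widehat{\mathcal{P}}_k,r+\widehat{b}^k}$, I then raise $\pi^*$ to $\pi_k$ inside the optimistic value. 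This collapses the per-episode regret to $V^{\pi_k,\widehat{\mathcal{P}}_k,r+\widehat{b}^k}-V^{\pi_k,\mathcal{P},r}+\sqrt{|\mathcal{A}|^{L+1}\zeta_k}$, i.e. an on-policy value gap between the optimistic learned model and the true model.

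Next I would apply the simulation lemma (Lemma~\ref{lem:simulation}) to telescope this gap, under the on-policy visitation $d_{\mathcal{P},h}^{\pi_k}$, into a sum over $h\in[H]$ of (i) the bonus $\widehat{b}_h^k(x_h,a_h)$ and (ii) the one-step model-mismatch of the bonus-augmented value. Both are uniformly bounded (the mismatch after normalizing by $2H+1$, which later reintroduces one factor of $H$; the outer $\sum_{h\in[H]}$ supplies the second, yielding the $H^2$ in the rate). To each step-$h$ expectation I apply the $L$-step back inequality for the true model (Lemma~\ref{lem:back_true}), transferring it onto the offset potential $\|p^*(\cdot|x_{h-L},a_{h-L})\|_{L_2(\mu),\Sigma_{\rho_{k,h-L},p^*}^{-1}}$ times a factor $\sqrt{k|\mathcal{A}|^L\,\mathbb{E}[g^2]+\lambda C}$. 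For the mismatch term I use the MLE guarantee (Lemma~\ref{lem:mle}) to replace $\mathbb{E}[g^2]$ by $\zeta_k$, and the choice of $\alpha_k$ in Lemma~\ref{lem:optimism} makes the resulting factor absorbable into $c\alpha_k$; for the bonus term the same inequality leaves $k\,\mathbb{E}[(\widehat{b}_h^k)^2]$ inside the square root, which is itself an information-gain quantity.

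Finally I would sum over $k=1,\dots,K$. Cauchy--Schwarz in $k$ pulls out $\sqrt{K}$ and leaves $\sqrt{\sum_k\mathbb{E}[\|p^*(\cdot|x_{h-L},a_{h-L})\|^2_{L_2(\mu),\Sigma^{-1}}]}$, the cumulative potential; I control both this sum and the cumulative $k\,\mathbb{E}[(\widehat{b}^k)^2]$ by the eigendecay-specific potential bounds established for LV-Rep (following the proof of Lemma~8 in~\citet{ren2023latent}), which give $O(\beta\log K)$, $O(C_{\mathrm{poly}}K^{1/(2(1+\beta))}\log K)$, and $O(C_{\mathrm{exp}}(\log K)^{1+1/\beta})$ in the three regimes. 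Each appears twice --- once through $\alpha_k$ (via $\lambda\asymp\beta\log K$ and the $\sqrt{\mathbb{E}[\widehat{b}^2]}$ factor) and once through the Cauchy--Schwarz potential --- producing the $\beta^{3/2}$ and its analogues; folding in $\zeta_k\asymp\log(K|\mathcal{M}|/\delta)/k$ and taking the dominating term yields the three stated rates.

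The hardest part is the bookkeeping around the $L$-step offset and the moment-matching policy. Unlike an MDP, Lemma~\ref{lem:back_true} does not relate step $h$ to its own feature but to the feature at step $h-L$, with the intervening $L$ transitions rolled out under uniform exploration $\circ^L\mathcal{U}(\mathcal{A})$; this is precisely what produces the $|\mathcal{A}|^L$ blow-up and what forces the covariance to be accumulated along the offset, uniform-action marginal $\rho_{k,h-L}\circ^L\mathcal{U}(\mathcal{A})$ rather than the on-policy visitation. The delicate point is using exactly this offset covariance consistently in the bonus, in Almost Optimism, and in the final potential summation, so that the elliptical-potential telescoping across episodes goes through without either breaking or spending an extra $|\mathcal{A}|^L$. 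A secondary subtlety is the boundary steps $h\in[L]$, where no offset of size $L$ exists; these I would handle directly, bounding the per-step model mismatch by a total-variation term $\sqrt{|\mathcal{A}|^h\zeta_k}$ from MLE, which contributes only lower-order terms.
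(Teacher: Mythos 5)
Your proposal follows the paper's own proof essentially step for step: per-episode optimism via Lemma~\ref{lem:optimism} plus the argmax property of $\pi_k$, the simulation lemma (Lemma~\ref{lem:simulation}) under $d_{\mathcal{P},h}^{\pi_k}$ splitting into bonus and bonus-augmented value mismatch (with the $\tfrac{1}{2H+1}$ normalization producing the $H^2$), the $L$-step back inequality (Lemma~\ref{lem:back_true}) combined with the MLE guarantee (Lemma~\ref{lem:mle}) so the mismatch term is absorbed into $c\alpha_k$, Cauchy--Schwarz over episodes, and the eigendecay-specific elliptical-potential bounds from Lemma~8 of \citet{ren2023latent} entering twice to give $\beta^{3/2}$, with the boundary steps $h\in[L]$ handled by the total-variation bound $\sqrt{|\mathcal{A}|^h\zeta_k}$ exactly as in the paper. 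I see no substantive deviation from the paper's argument.
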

\begin{theorem}[PAC Guarantee]
    \label{thm:pac_guarantee_online}
    After interacting with the environments for $K H$ episodes
    \begin{itemize}
        \item $K =  \Theta\left(\frac{C H^4 L \beta^3 |\mathcal{A}|^{2L} \log (|\mathcal{P}|/\delta)}{\varepsilon^2}\log^3 \left(\frac{C H^4 L \beta^3 |\mathcal{A}|^{2L} \log (|\mathcal{P}|/\delta)}{\varepsilon^2}\right)\right)$ for $\beta$-finite spectrum;
        \item $K = \Theta\left(C_{\mathrm{poly}}\left(\frac{H^2 L |\mathcal{A}|^L\sqrt{C\log (|\mathcal{P}|/\delta)}}{\varepsilon} \log^{3/2}\left(\frac{\sqrt{C}H^2 L |\mathcal{A}|^L\log (|\mathcal{P}|/\delta)}{\varepsilon} \right)\right)^{\frac{2(1+\beta)}{\beta - 1}}\right)$ for $\beta$-polynomial decay;
        \item $K =\Theta\left(\frac{C_{\mathrm{exp}}C H^4 L |\mathcal{A}|^{2L}\log (|\mathcal{P}|/\delta)}{\varepsilon^2} \log^{\frac{3 + 2\beta}{\beta}}\left(\frac{C H^4 L|\mathcal{A}|^{2L}\log (|\mathcal{P}|/\delta)}{\varepsilon^2} \right)\right)$ for $\beta$-exponential decay;
    \end{itemize} 
    we can obtain an $\varepsilon$-optimal policy with high probability.
\end{theorem}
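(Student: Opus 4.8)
The plan is to convert the cumulative regret bounds of Lemma~\ref{lem:regret} into a PAC sample-complexity statement through a standard online-to-batch argument. Since Algorithm~\ref{alg:lvrep_pomdp} returns the sequence $\pi^1,\dots,\pi^K$, I would output a policy $\hat\pi$ drawn uniformly at random from $\{\pi_k\}_{k\in[K]}$ (equivalently, by pigeonhole at least one $\pi_k$ is $\varepsilon$-optimal, and uniform selection is $\varepsilon$-optimal in expectation). On the high-probability event of Lemma~\ref{lem:regret}, the expected suboptimality of $\hat\pi$ equals the average per-episode regret,
\begin{align*}
V^{\pi^*,\mathcal{P},r}-\mathbb{E}\big[V^{\hat\pi,\mathcal{P},r}\big]=\frac{1}{K}\sum_{k=1}^K\left(V^{\pi^*,\mathcal{P},r}-V^{\pi_k,\mathcal{P},r}\right)\le \frac{\mathrm{Regret}(K)}{K}.
\end{align*}
Hence it suffices to pick $K$ large enough that $\mathrm{Regret}(K)/K\le\varepsilon$, and the three displayed sample complexities correspond exactly to the three eigendecay regimes of Lemma~\ref{lem:regret}.

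The second step is to solve the inequality $\mathrm{Regret}(K)/K\le\varepsilon$ for $K$ in each regime. For the $\beta$-finite-spectrum and $\beta$-exponential-decay cases the regret grows like $\sqrt{K}$ up to polylogarithmic factors, so $\mathrm{Regret}(K)/K$ decays like $K^{-1/2}$; setting this below $\varepsilon$ and squaring yields the $\varepsilon^{-2}$ dependence together with the stated products of $C,H,L,\beta,|\mathcal{A}|^{2L}$ and $\log(|\mathcal{P}|/\delta)$. For the $\beta$-polynomial-decay case the regret grows like $K^{1/2+1/(1+\beta)}$, so the average regret scales as $K^{(1-\beta)/(2(1+\beta))}$ with a negative exponent (using $\beta>1$); inverting this exponent is precisely what produces the power $\tfrac{2(1+\beta)}{\beta-1}$ appearing in the theorem.

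The main obstacle is the self-referential appearance of $K$ inside the logarithmic factors $\log K$ and $\log(K|\mathcal{M}|/\delta)$ on the right-hand side, which is what generates the extra $\log^3$, $\log^{3/2}$, and $\log^{(3+2\beta)/\beta}$ terms. I would dispatch this with the standard fixed-point lemma that an implicit bound of the form $K\ge c\,A\,\log^{q}(K)$ is satisfied by $K=\Theta\!\left(A\,\log^{q}A\right)$, substituting the non-logarithmic aggregate $A$ (e.g.\ $A=CH^4L\beta^3|\mathcal{A}|^{2L}\log(|\mathcal{P}|/\delta)/\varepsilon^2$ in the finite-spectrum case) and reading off the polylog order $q$ from the corresponding regret bound. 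The remaining work is bookkeeping: translating the $K$ outer iterations into the claimed $KH$ environment interactions, since each outer episode executes the length-$H$ inner loop of Algorithm~\ref{alg:lvrep_pomdp}, and observing that the high-probability event underlying Lemma~\ref{lem:regret} already holds with probability $1-\delta$, so the PAC guarantee inherits the same confidence with no additional union bound.
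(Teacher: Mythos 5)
Your proposal is correct and follows essentially the same route as the paper, which itself only states that the result is ``a direct extension of the proof of Theorem 9 in \citet{ren2023latent}'' --- namely the standard online-to-batch conversion of the regret bounds in Lemma~\ref{lem:regret}, followed by solving $\mathrm{Regret}(K)/K\le\varepsilon$ for $K$ in each eigendecay regime and resolving the implicit $\log K$ factors by the fixed-point substitution $K=\Theta(A\log^q A)$. Your write-up is in fact more explicit than the paper's one-line citation, and the exponent computations (e.g.\ $\tfrac{1}{1+\beta}-\tfrac12=\tfrac{1-\beta}{2(1+\beta)}$ yielding the power $\tfrac{2(1+\beta)}{\beta-1}$) match the stated complexities.
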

\begin{proof}
    This is a direct extension of the proof of Theorem 9 in \citet{ren2023latent}.
\end{proof}
\section{Technical Lemma}
\begin{lemma}[Simulation Lemma]
\label{lem:simulation}
    For two MDPs $\mathcal{M} = (P, r)$ and $\mathcal{M}^\prime = (P^\prime, r + b)$, we have
    \begin{align*}
        & V_{P^\prime, r+b}^\pi - V_{P, r}^\pi \\
        = & \sum_{h\in [H]} \mathbb{E}_{(s_h, a_h)\sim d_{P, \pi}^h} \left[b_h(s_h, a_h) + \mathbb{E}_{s_{h+1}\sim P^\prime(s_h, a_h)} \left[V_{P^\prime, r+b, h+1}^\pi(s_{h+1})\right] -  \mathbb{E}_{s_{h+1}\sim P(s_h, a_h)} \left[V_{P^\prime, r+b, h+1}^\pi(s_{h+1})\right]\right],
    \end{align*}
    and 
    \begin{align*}
        & V_{P^\prime, r+b}^\pi - V_{P, r}^\pi \\
        = & \sum_{h\in [H]} \mathbb{E}_{(s_h, a_h)\sim d_{P^\prime, \pi}^h} \left[b_h(s_h, a_h) + \mathbb{E}_{s_{h+1}\sim P^\prime(s_h, a_h)} \left[V_{P, r, h+1}^\pi(s_{h+1})\right] -  \mathbb{E}_{s_{h+1}\sim P(s_h, a_h)} \left[V_{P, r, h+1}^\pi(s_{h+1})\right]\right],
    \end{align*}
\end{lemma}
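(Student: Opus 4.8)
The plan is to establish each of the two identities by a single telescoping argument over the horizon, and to obtain the two forms from the \emph{same} device by swapping the roles of the two MDPs: in the first identity the trajectory is generated by $P$ while the value function $V^\pi_{P', r+b}$ is propagated, and in the second the trajectory is generated by $P'$ while $V^\pi_{P,r}$ is propagated. Throughout I adopt the terminal convention $V^\pi_{P', r+b, H+1}\equiv V^\pi_{P, r, H+1}\equiv 0$ and use the per-step Bellman recursions $V^\pi_{P,r,h}(s_h)=\mathbb{E}_{a_h\sim\pi}\!\left[r(s_h,a_h)+\mathbb{E}_{s_{h+1}\sim P(s_h,a_h)}[V^\pi_{P,r,h+1}(s_{h+1})]\right]$ and its primed analogue with $r+b$ and $P'$. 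I also use that the value at the root equals the occupancy-weighted expected reward, i.e.\ $V^\pi_{P,r}=\sum_{h\in[H]}\mathbb{E}_{(s_h,a_h)\sim d^h_{P,\pi}}[r(s_h,a_h)]$, and likewise $V^\pi_{P',r+b}=\sum_{h\in[H]}\mathbb{E}_{(s_h,a_h)\sim d^h_{P',\pi}}[(r+b)(s_h,a_h)]$.

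For the first identity, the key observation is that along a trajectory generated by $\pi$ and the \emph{baseline} kernel $P$ from the root $s_1$, the primed value function admits the telescoped representation
\begin{align*}
V^\pi_{P',r+b,1}(s_1) = \mathbb{E}^\pi_{P}\!\left[\sum_{h\in[H]}\Big(V^\pi_{P',r+b,h}(s_h)-\mathbb{E}_{s_{h+1}\sim P(s_h,a_h)}\big[V^\pi_{P',r+b,h+1}(s_{h+1})\big]\Big)\right],
\end{align*}
which collapses because the subtracted term at step $h$ cancels against $\mathbb{E}[V^\pi_{P',r+b,h+1}(s_{h+1})]$ at step $h+1$ under the $P$-trajectory, leaving only the root term (the terminal term vanishes by convention). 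I would then substitute the primed Bellman recursion for $V^\pi_{P',r+b,h}(s_h)$ inside the sum; the $\mathbb{E}_{s_{h+1}\sim P'}$ term it introduces no longer cancels, producing the difference $\mathbb{E}_{s_{h+1}\sim P'}[V^\pi_{P',r+b,h+1}]-\mathbb{E}_{s_{h+1}\sim P}[V^\pi_{P',r+b,h+1}]$ together with the reward $(r+b)(s_h,a_h)$. Recognizing that the expectation over the $P$-trajectory states $s_h$ and actions $a_h\sim\pi$ is exactly $d^h_{P,\pi}$, and finally subtracting $V^\pi_{P,r,1}(s_1)=\sum_h\mathbb{E}_{d^h_{P,\pi}}[r(s_h,a_h)]$ so that $r$ cancels leaving $b=(r+b)-r$, yields precisely the first displayed equation.

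For the second identity I would run the mirror computation: telescope the \emph{baseline} value function $V^\pi_{P,r,1}(s_1)$ along trajectories generated by $\pi$ and the primed kernel $P'$, substitute the \emph{baseline} Bellman recursion, identify the occupancy $d^h_{P',\pi}$, and subtract against $V^\pi_{P',r+b}=\sum_h\mathbb{E}_{d^h_{P',\pi}}[(r+b)]$; this produces the difference $\mathbb{E}_{s_{h+1}\sim P'}[V^\pi_{P,r,h+1}]-\mathbb{E}_{s_{h+1}\sim P}[V^\pi_{P,r,h+1}]$ plus the bonus $b$, matching the second displayed equation. There is no genuine analytic obstacle here—the lemma is a routine performance-difference decomposition—so the only thing requiring care is \emph{bookkeeping}: keeping straight which kernel generates the trajectory (hence which occupancy $d^h_{P,\pi}$ versus $d^h_{P',\pi}$ appears) versus which MDP's value function is being propagated in the telescope, and verifying that the telescoping cancellation is legitimate (measurability of the per-step conditional expectations and the terminal condition $V_{H+1}\equiv 0$). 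I would state the telescoping identity as an explicit intermediate claim to make the cancellation transparent, then instantiate it twice with the two role assignments.
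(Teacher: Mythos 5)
Your proof is correct and takes essentially the same route as the paper, which gives no inline argument and instead defers to the standard telescoping (performance-difference) proof in \citet{uehara2021representation}: telescope the propagated value function along trajectories generated by the \emph{other} kernel, substitute the appropriate Bellman recursion, and identify the resulting state-action marginals as the occupancies, once with each role assignment. Your bookkeeping checks out in both cases --- the sign pattern $\mathbb{E}_{s_{h+1}\sim P'}[\cdot]-\mathbb{E}_{s_{h+1}\sim P}[\cdot]$, the appearance of $d^h_{P,\pi}$ in the first identity versus $d^h_{P',\pi}$ in the second, and the cancellation of $r$ leaving $b$ all match the statement exactly.
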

For the proof, see \citet{uehara2021representation} for an example.

\begin{lemma}[MLE Guarantee]
\label{lem:mle}
For any episoode $k\in [K]$, step $h\in [H]$, define $\rho_h$ as the joint distribution of $(x_h, a_h)$ in the dataset $\mathcal{D}_{h, k}$ at episode $k$. Then with probability at least $1-\delta$, we have that
\begin{align*}
    \mathbb{E}_{(x_h, a_h) \sim \mathcal{D}_{h, k}} \left\|\mathbb{P}_h^{\mathcal{P}}(\cdot|x_h, a_h) - \mathbb{P}_h^{\widehat{\mathcal{P}}_k}(\cdot|x_h, a_h)\right\|_1^2 \leq \zeta_k,
\end{align*}
where $\zeta_k = O(\log (Hk|\mathcal{M}|/\delta)/k)$
\end{lemma}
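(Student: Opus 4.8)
The plan is to prove this as a standard maximum-likelihood generalization bound for a finite, realizable class of conditional densities, with two wrinkles specific to the present setting: the online data are not i.i.d., and the fit is produced by maximizing the ELBO rather than the raw likelihood. I first dispatch the second wrinkle. By the realizability part of Assumption~\ref{assumption:function_class}, the exact variational posterior lies in $\mathcal{Q}$, so the Jensen gap in \eqref{eq:elbo} vanishes at its optimum; consequently the ELBO maximizer $\widehat{\mathcal{P}}_k$ coincides with the genuine maximizer of the marginal log-likelihood $\sum_j \log \mathbb{P}_h^{\mathcal{P}}(o_{h+1,j}\,|\,x_{h,j},a_{h,j})$ over $\mathcal{M}$. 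I may therefore treat $\widehat{\mathcal{P}}_k$ as the MLE and work directly with the likelihood.

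The engine of the argument is the elementary squared-Hellinger identity. Writing $\mathrm{Hel}^2(p,q) := 1-\int\sqrt{pq}$, for any candidate $p$ and the truth $p^*$ one has $\mathbb{E}_{o\sim p^*(\cdot|x,a)}\big[\sqrt{p(o|x,a)/p^*(o|x,a)}\big] = 1-\mathrm{Hel}^2\big(p(\cdot|x,a),p^*(\cdot|x,a)\big) \le \exp(-\mathrm{Hel}^2)$, where I used $1-u\le e^{-u}$. Equivalently $\mathbb{E}_o\big[\exp\big(\tfrac12\log(p/p^*) + \mathrm{Hel}^2\big)\big]\le 1$, which turns a log-likelihood ratio into a squared-Hellinger discrepancy.

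Next I would address the dependence in $\mathcal{D}_{h,k}$: the pair $(x_h,a_h)$ in episode $j$ is drawn from the occupancy $d_{\mathcal{P},h}^{\pi_j}$ of a data-dependent policy, so a plain Chernoff bound is unavailable and I must set up a supermartingale. Let $\mathcal{F}_{j-1}$ be the filtration generated by the first $j-1$ collected samples; conditioning on $\mathcal{F}_{j-1}$ fixes $(x_{h,j},a_{h,j})$ while $o_{h+1,j}\sim p^*(\cdot|x_{h,j},a_{h,j})$, so the one-step identity above gives $\mathbb{E}\big[\exp\big(\tfrac12\log(p/p^*)+\mathrm{Hel}^2_j\big)\,\big|\,\mathcal{F}_{j-1}\big]\le 1$. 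Hence $\prod_{j}\exp(\tfrac12\log(p/p^*)+\mathrm{Hel}^2_j)$ is a supermartingale with mean at most one, and Markov's inequality followed by a union bound over the finite class $\mathcal{M}$ yields, with probability $1-\delta'$, that for every $p\in\mathcal{M}$, $\sum_{j\le k}\mathrm{Hel}^2_j(p,p^*) \le \log(|\mathcal{M}|/\delta') - \tfrac12\sum_{j\le k}\log(p/p^*)$. Specializing to $p=\hat p_{h,k}$ and invoking MLE optimality $\sum_j \log(\hat p/p^*)\ge 0$ collapses the right-hand side to $\log(|\mathcal{M}|/\delta')$, so the empirical average obeys $\frac1k\sum_{j\le k}\mathrm{Hel}^2(\hat p,p^*) \le \log(|\mathcal{M}|/\delta')/k$.

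To finish, I would convert Hellinger to the stated $\ell_1$ error via the standard inequality $\|p-q\|_1^2 \le 8\,\mathrm{Hel}^2(p,q)$, so that $\mathbb{E}_{(x_h,a_h)\sim\mathcal{D}_{h,k}}\|\mathbb{P}_h^{\mathcal{P}}-\mathbb{P}_h^{\widehat{\mathcal{P}}_k}\|_1^2 \lesssim \log(|\mathcal{M}|/\delta')/k$, and then choose the per-instance failure probability to handle all indices at once: a union bound over $h\in[H]$ together with an anytime union over episodes (allocating failure budget $\propto \delta/k^2$) turns $\log(1/\delta')$ into $O(\log(Hk|\mathcal{M}|/\delta))$, which is exactly $\zeta_k$. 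The main obstacle I anticipate is the non-i.i.d.\ structure: because $\pi_j$ is itself a function of the earlier data, the supermartingale must be built so that the conditional one-step bound holds under the adaptively chosen conditioning, which is what forces the martingale exponential inequality in place of a simple Chernoff argument. The ELBO-versus-likelihood reduction and the Hellinger-to-$\ell_1$ bookkeeping are comparatively routine once realizability of $\mathcal{Q}$ is in hand.
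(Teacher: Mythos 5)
Your proposal is correct and follows essentially the same route as the paper, which does not spell out a proof but defers entirely to the MLE generalization guarantee of \citet{agarwal2020flambe}; that reference's argument is exactly your supermartingale construction from $\mathbb{E}[\exp(\tfrac12\log(p/p^*))\,|\,\mathcal{F}_{j-1}]=1-\mathrm{Hel}^2\le e^{-\mathrm{Hel}^2}$, followed by Markov, a union bound over the finite realizable class, MLE optimality, and the Hellinger-to-$\ell_1$ conversion. Your additional observation that realizability of the exact posterior in $\mathcal{Q}$ collapses the ELBO maximizer to the true MLE is a detail the paper leaves implicit but is needed to invoke that guarantee here.
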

For the proof, see \citet{agarwal2020flambe}.
\section{Implementation Details on Image-based Continuous Control}\label{sec:imp detail}


We evaluate our method on Meta-world~\citep{yu2019meta}~\footnote{\url{https://github.com/Farama-Foundation/Metaworld}} and DeepMind Control Suites~\citep{tassa2018deepmind}~\footnote{\url{https://github.com/google-deepmind/dm\_control}} to demonstrate its capability for complex visual control tasks.
Meta-world is an open-source simulated benchmark consisting of 50 distinct robotic manipulation tasks. The DeepMind Control Suite is a set of continuous control tasks with a standardized structure and interpretable rewards, intended to serve as performance benchmarks for reinforcement learning agents.
The visualization of some tasks from the two domains are shown in \cref{fig:env dmc,fig:env metaworld}.
With only one frame of the visual observation, we will miss some information related to the task, for example the speed, thus these tasks are partially observable. The performance for Meta-world tasks are shown in \cref{fig: metaworld performance}.

\begin{figure}[htbp]
\begin{center}
    \subfigure[Coffee Pull]{\includegraphics[width=0.245\textwidth]{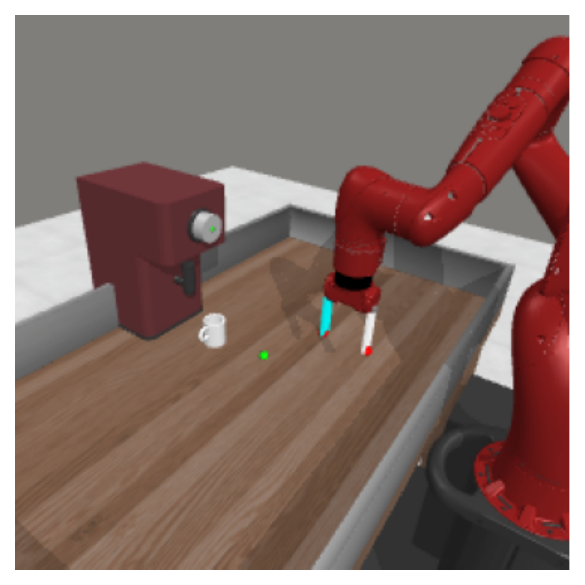}}
    \hspace{-0.1in}
    \subfigure[Hand Insert]{\includegraphics[width=0.245\textwidth]{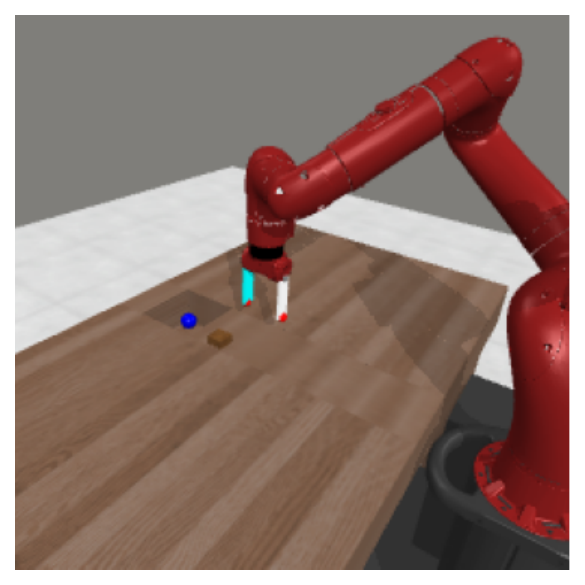}}
    \hspace{-0.1in}
    \subfigure[Push]{\includegraphics[width=0.245\textwidth]{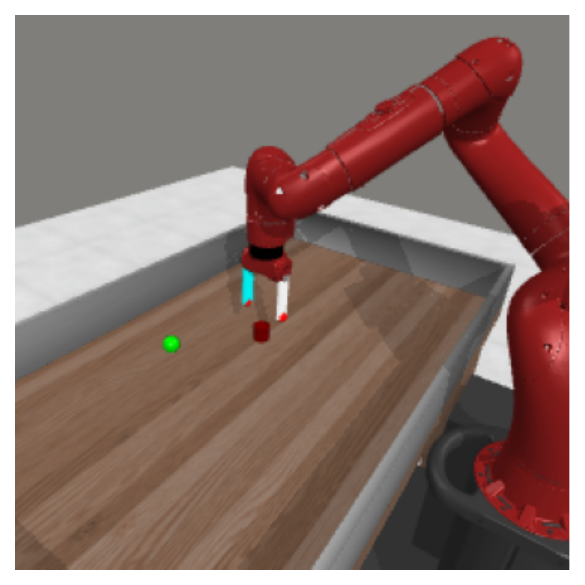}}
    \hspace{-0.1in}
    \subfigure[Plate Slide]{\includegraphics[width=0.245\textwidth]{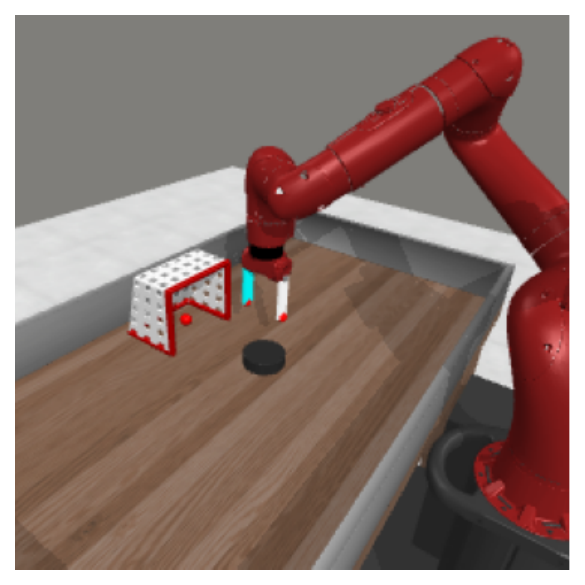}}
    \caption{\footnotesize
    Visualization of the visual robotic manipulation tasks in Meta-world. 
    \normalsize}
    \label{fig:env metaworld}
\end{center}
\end{figure}

\begin{figure}[htbp]
\begin{center}
    \subfigure[Acrobot Swingup]{\includegraphics[width=0.245\textwidth]{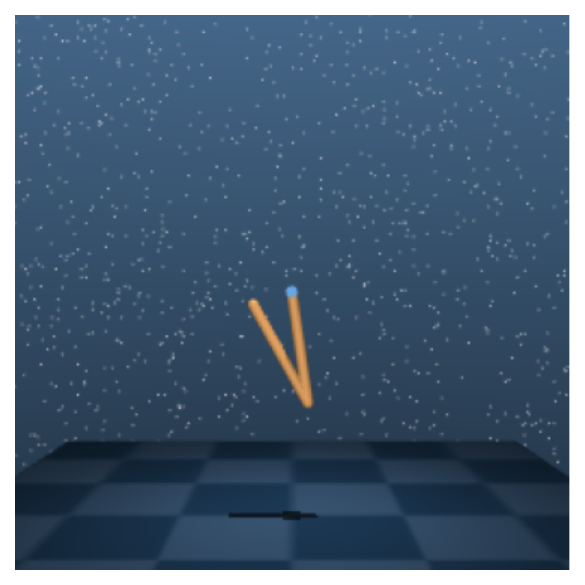}}
    \hspace{-0.1in}
    \subfigure[Reacher hard]{\includegraphics[width=0.245\textwidth]{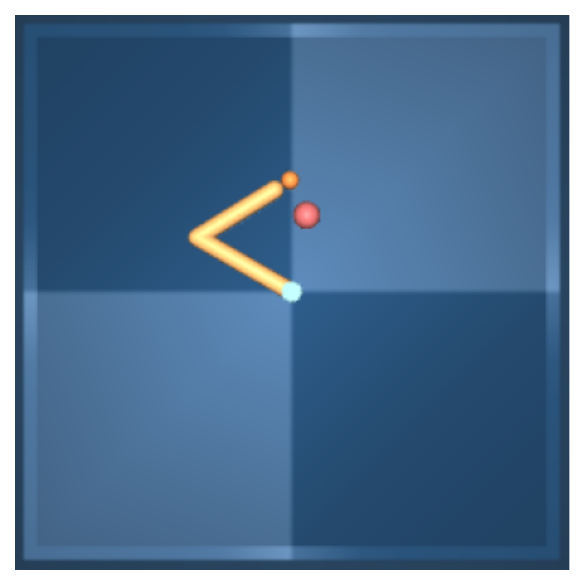}}
    \hspace{-0.1in}
    \subfigure[Reach Duplo]{\includegraphics[width=0.245\textwidth]{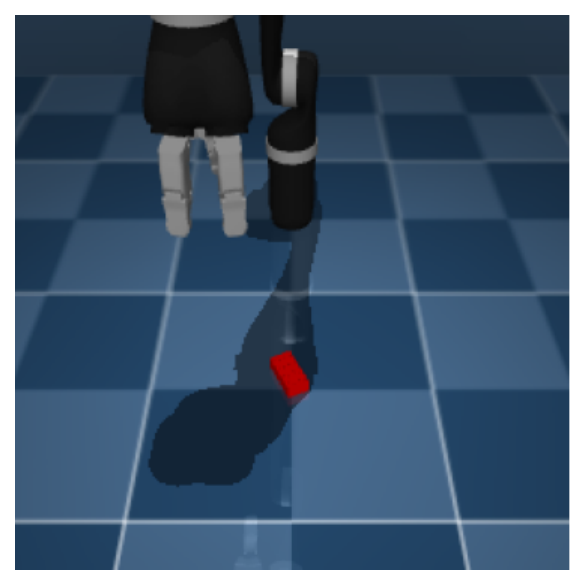}}
    \hspace{-0.1in}
    \subfigure[Quadruped Run]{\includegraphics[width=0.245\textwidth]{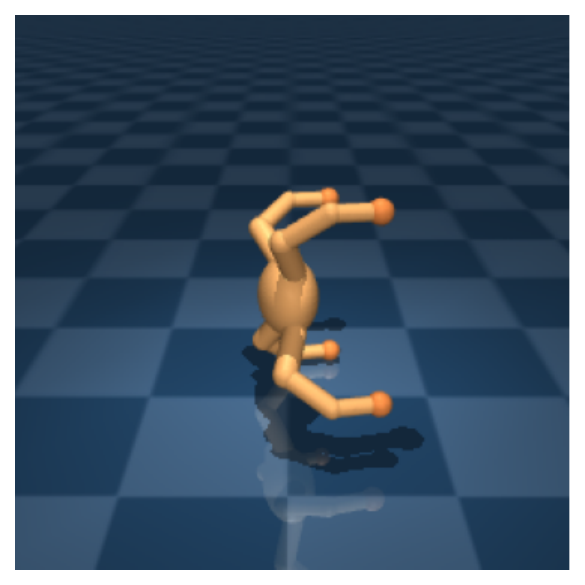}}
    \caption{\footnotesize
    Visualization of the visual control tasks in DeepMind Control Suites. 
    \normalsize}
    \label{fig:env dmc}
\end{center}
\end{figure}

\begin{figure}[htbp]
    \vskip -0.15in
\begin{center}
    \subfigure{\includegraphics[width=0.18\textwidth]{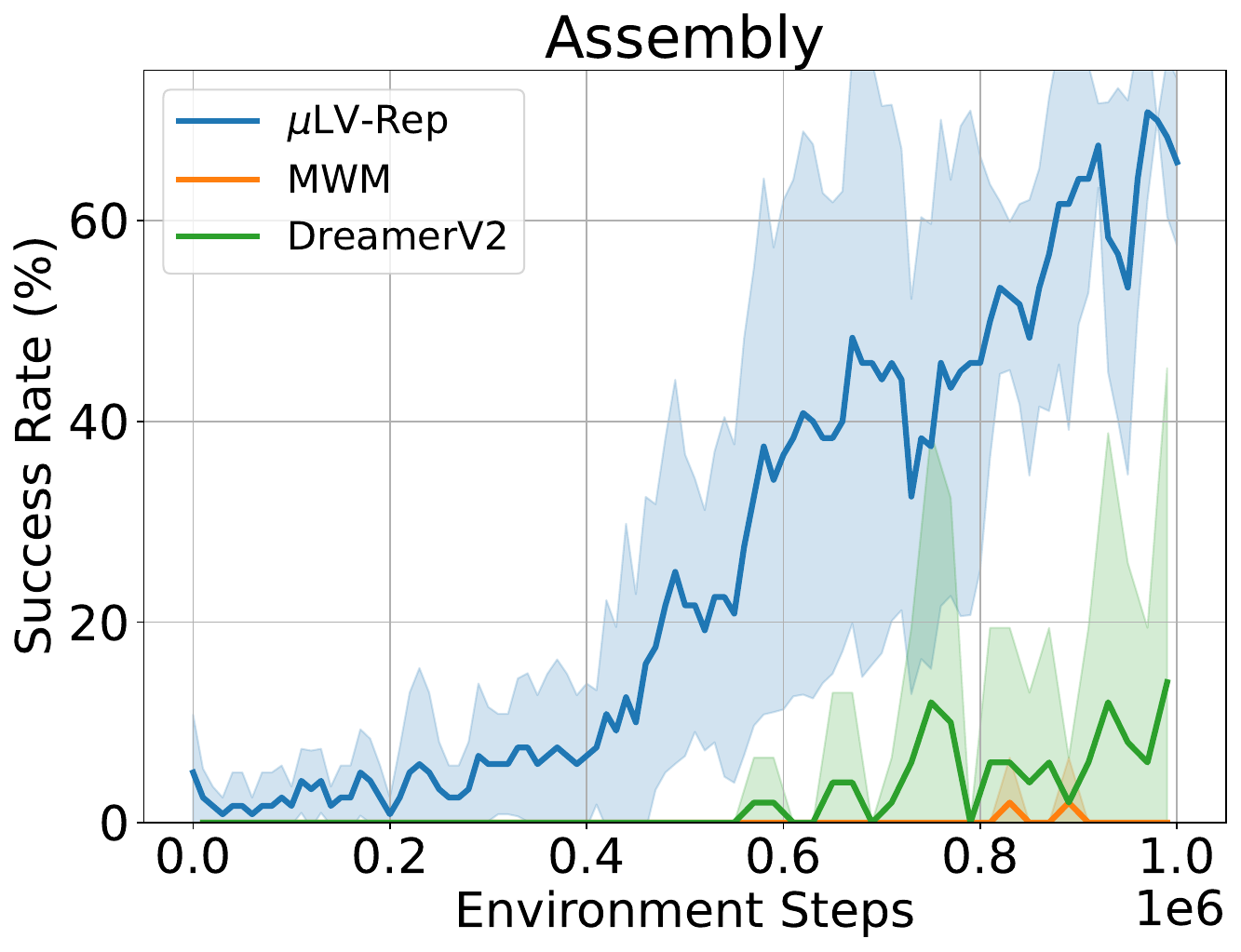}}
    \subfigure{\includegraphics[width=0.18\textwidth]{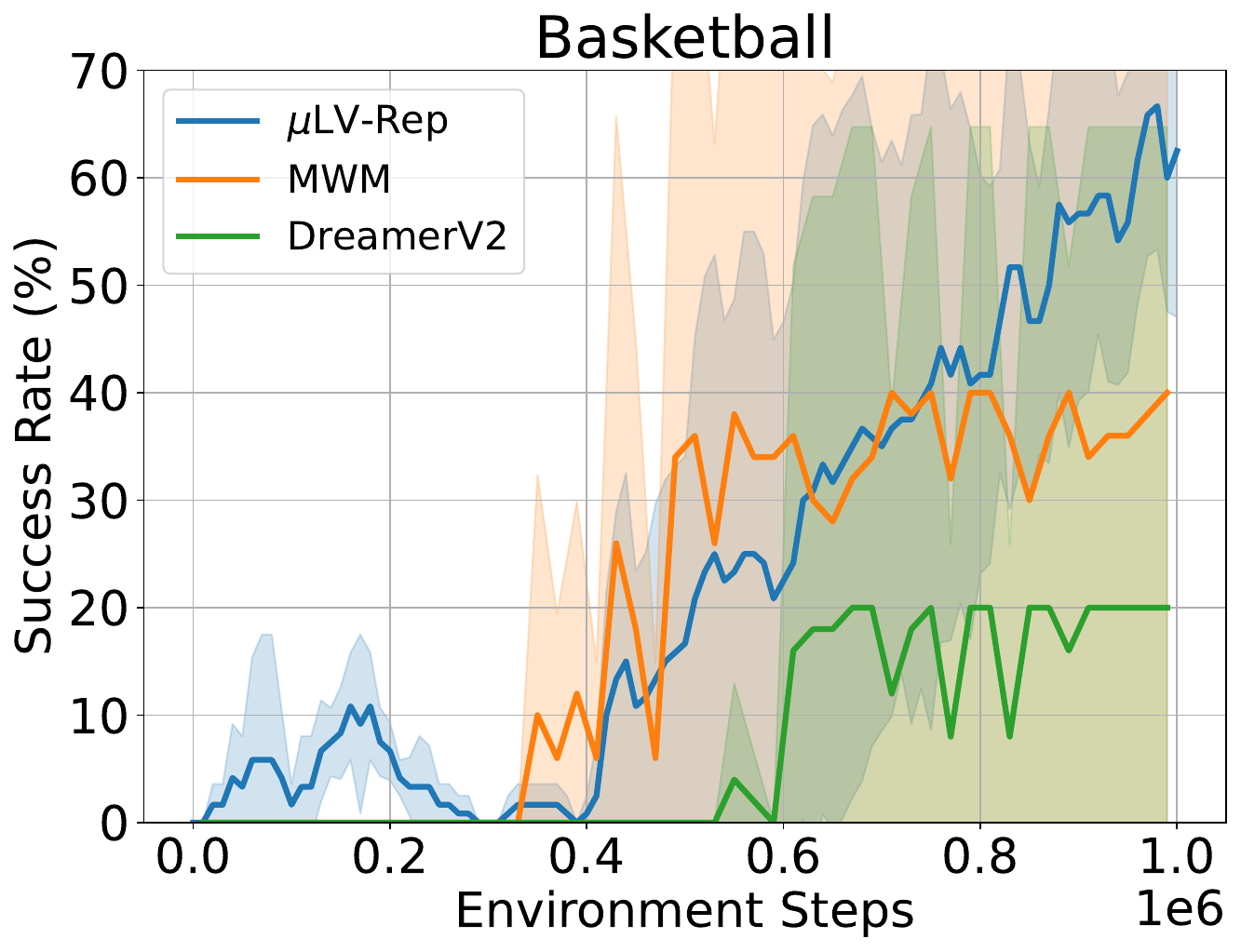}}
    \subfigure{\includegraphics[width=0.18\textwidth]{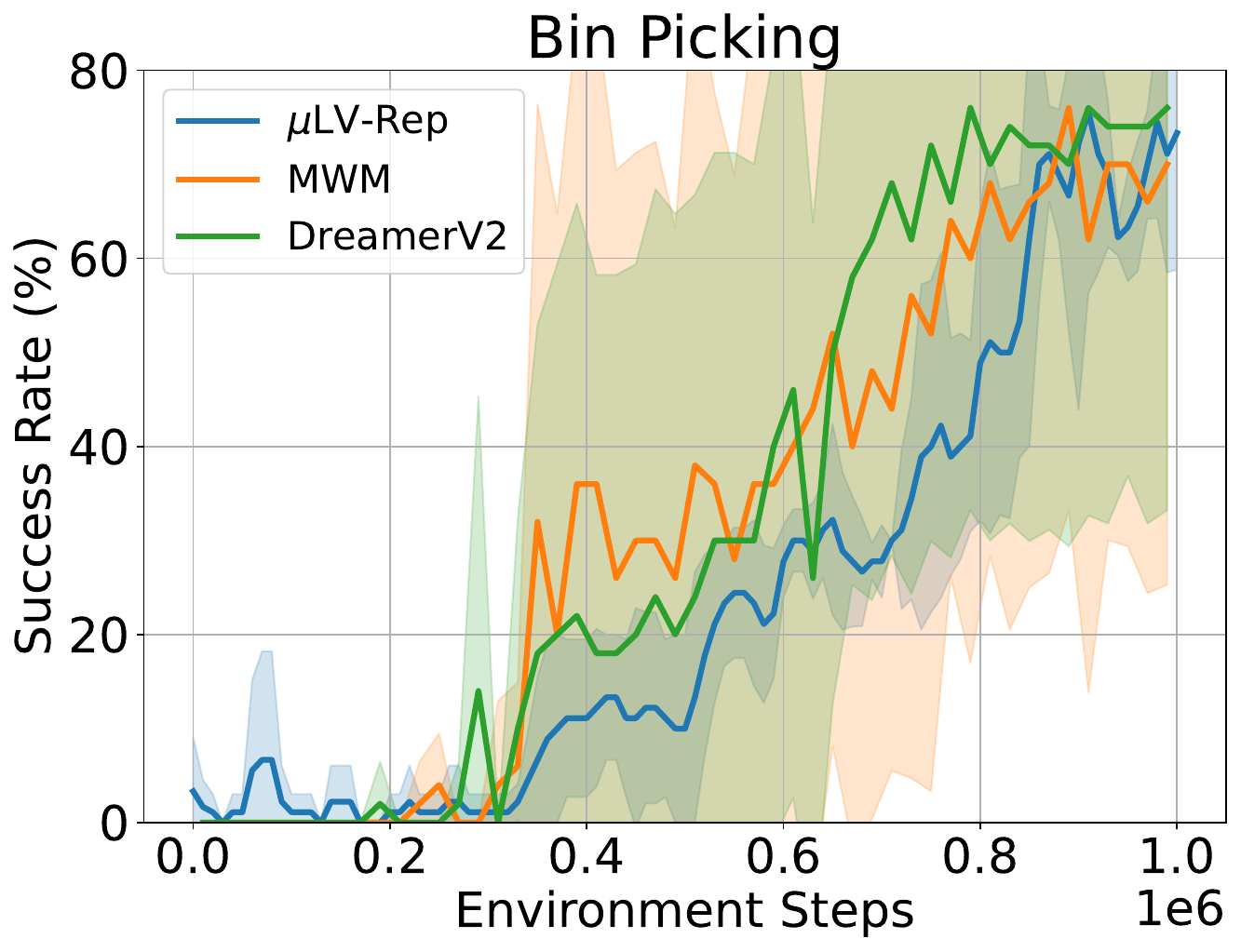}}
    \subfigure{\includegraphics[width=0.18\textwidth]{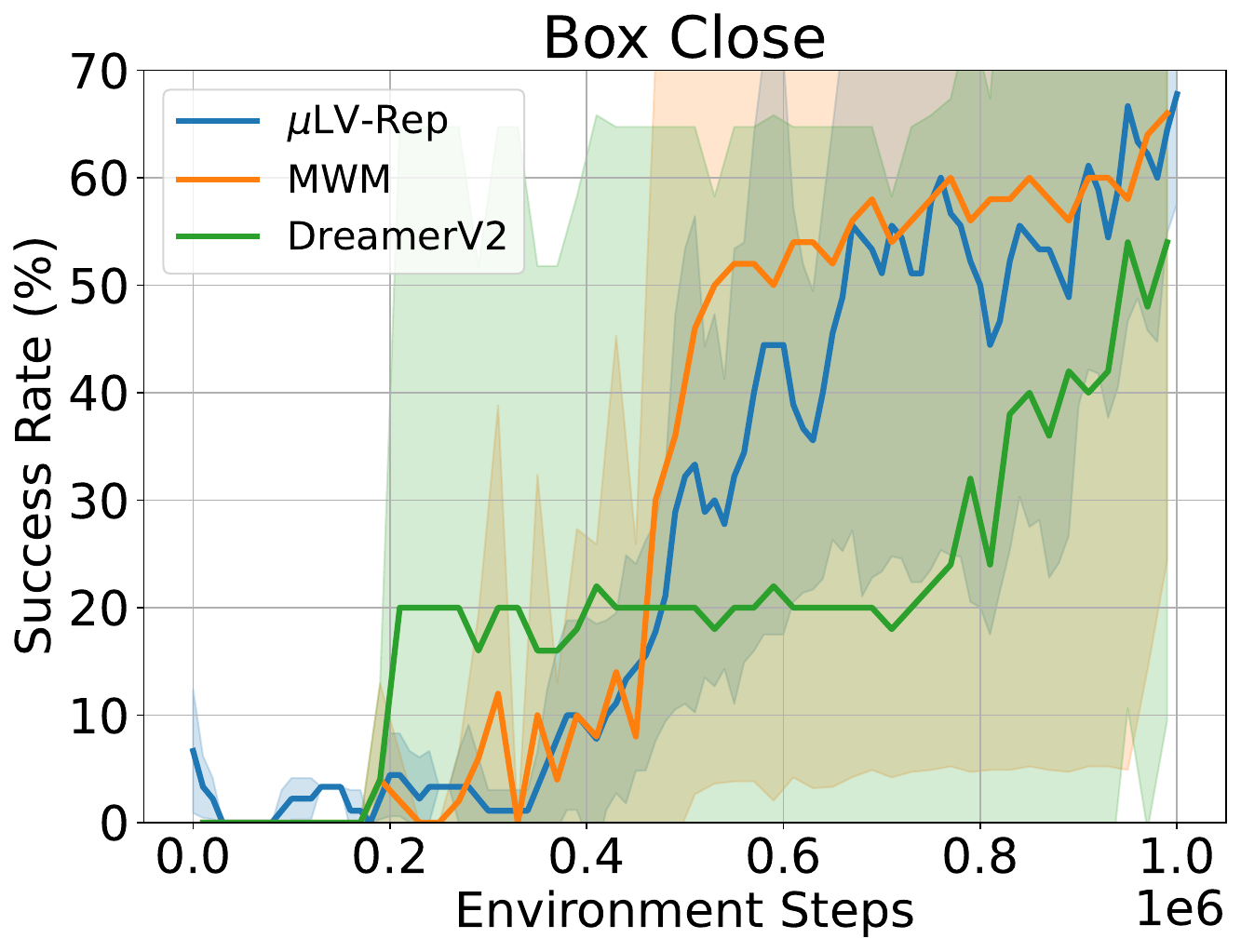}}
    \subfigure{\includegraphics[width=0.18\textwidth]{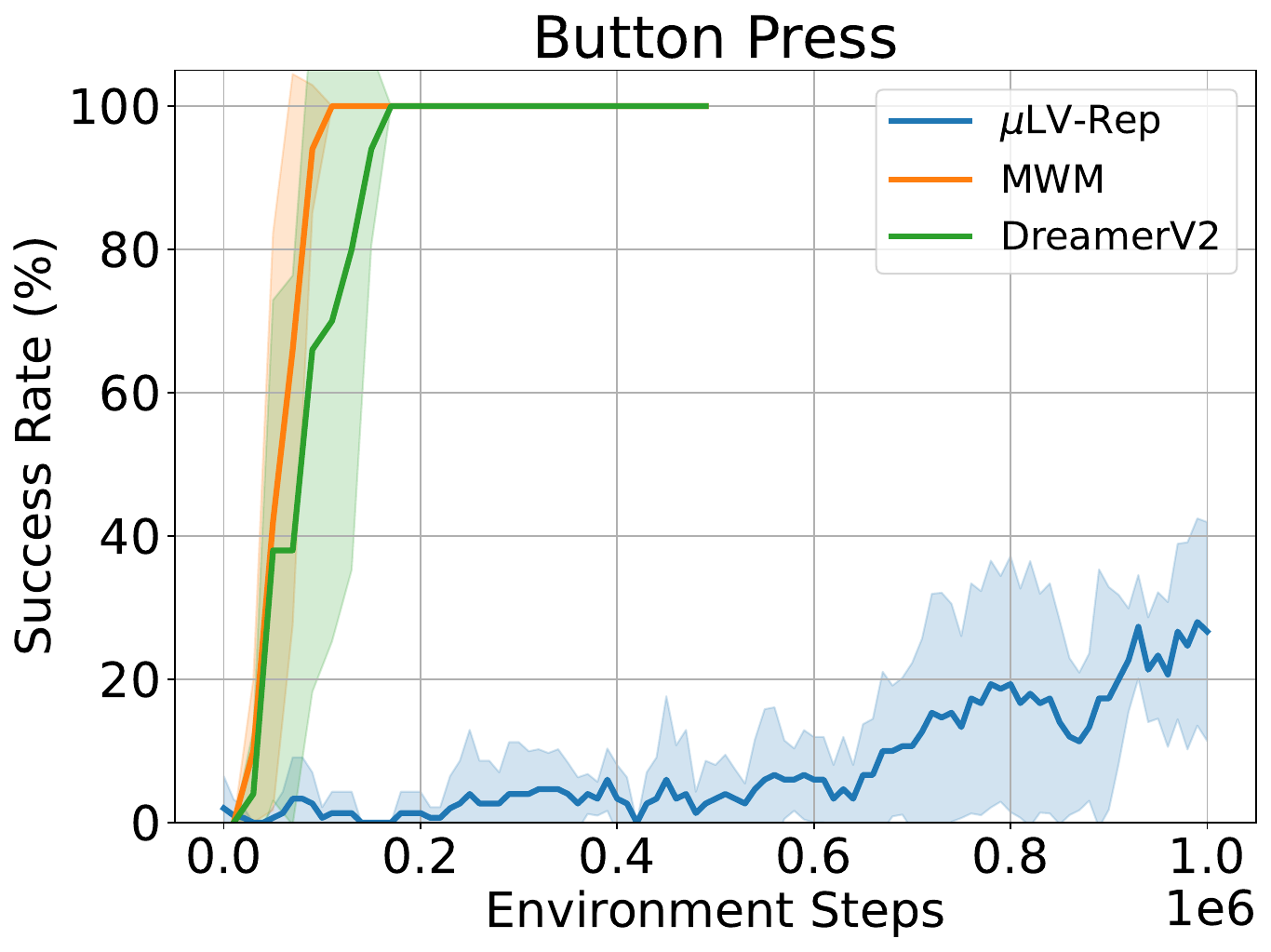}}
    \vskip -0.15in
    \subfigure{\includegraphics[width=0.18\textwidth]{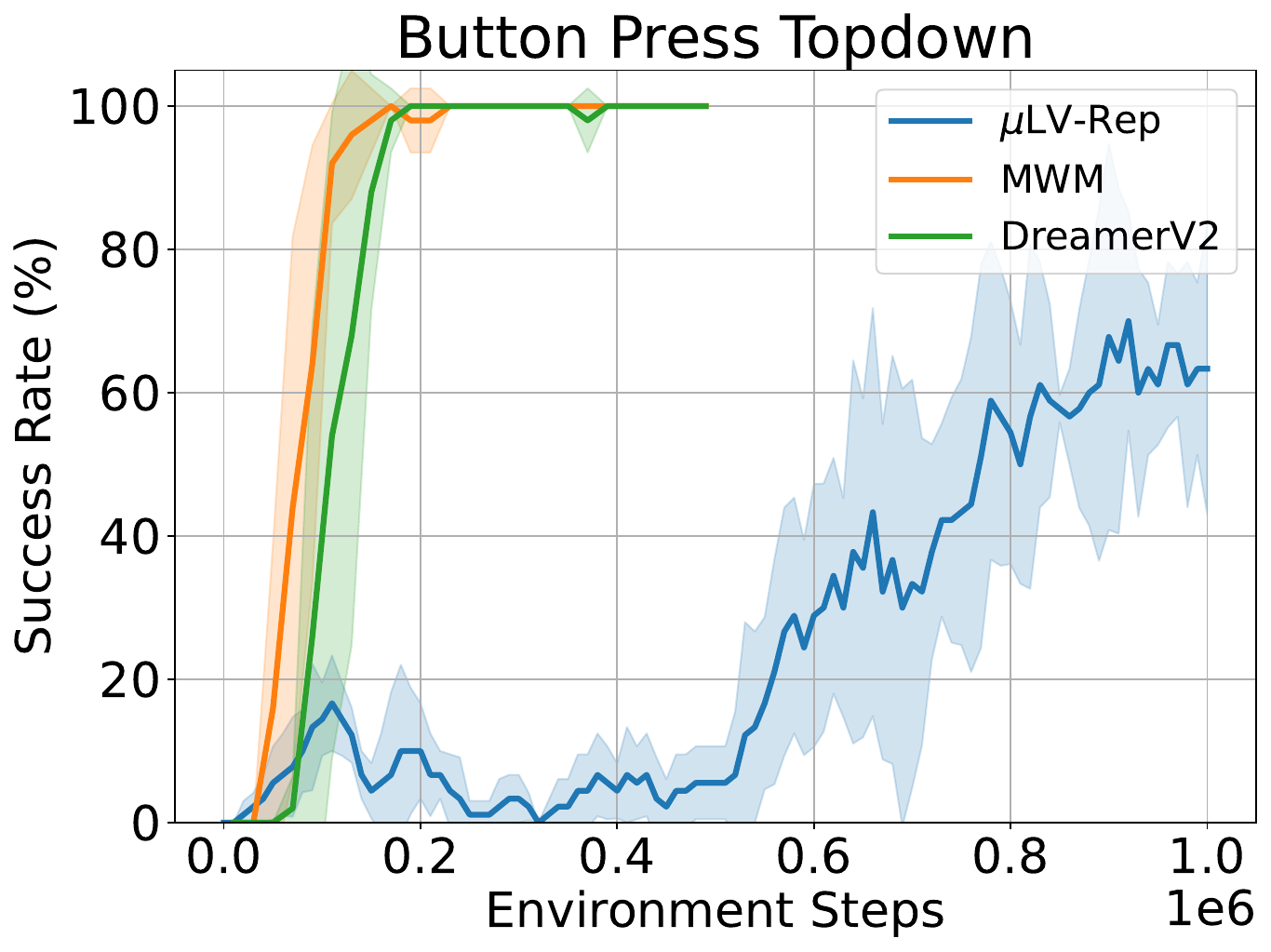}}
    \subfigure{\includegraphics[width=0.18\textwidth]{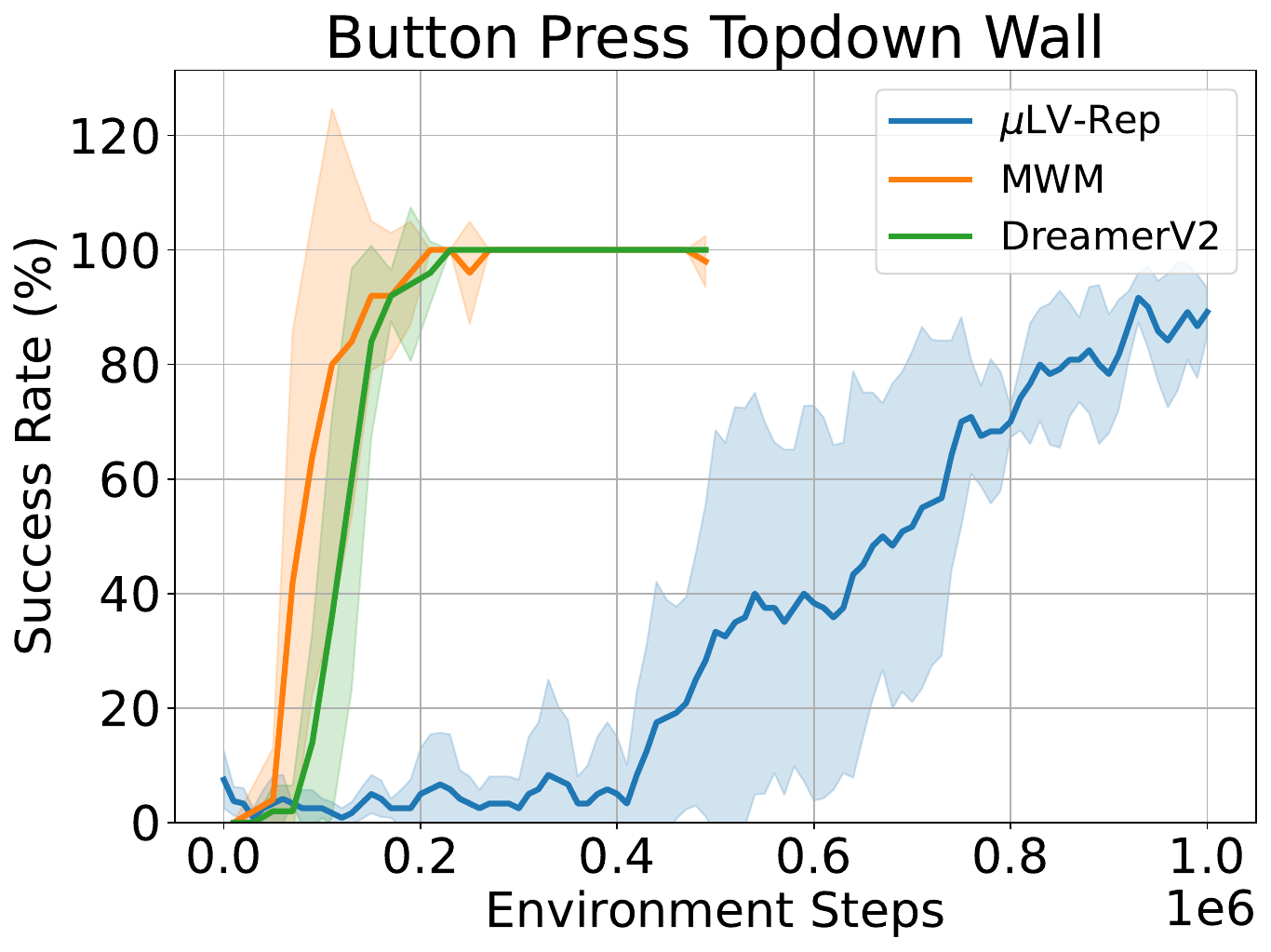}}
    \subfigure{\includegraphics[width=0.18\textwidth]{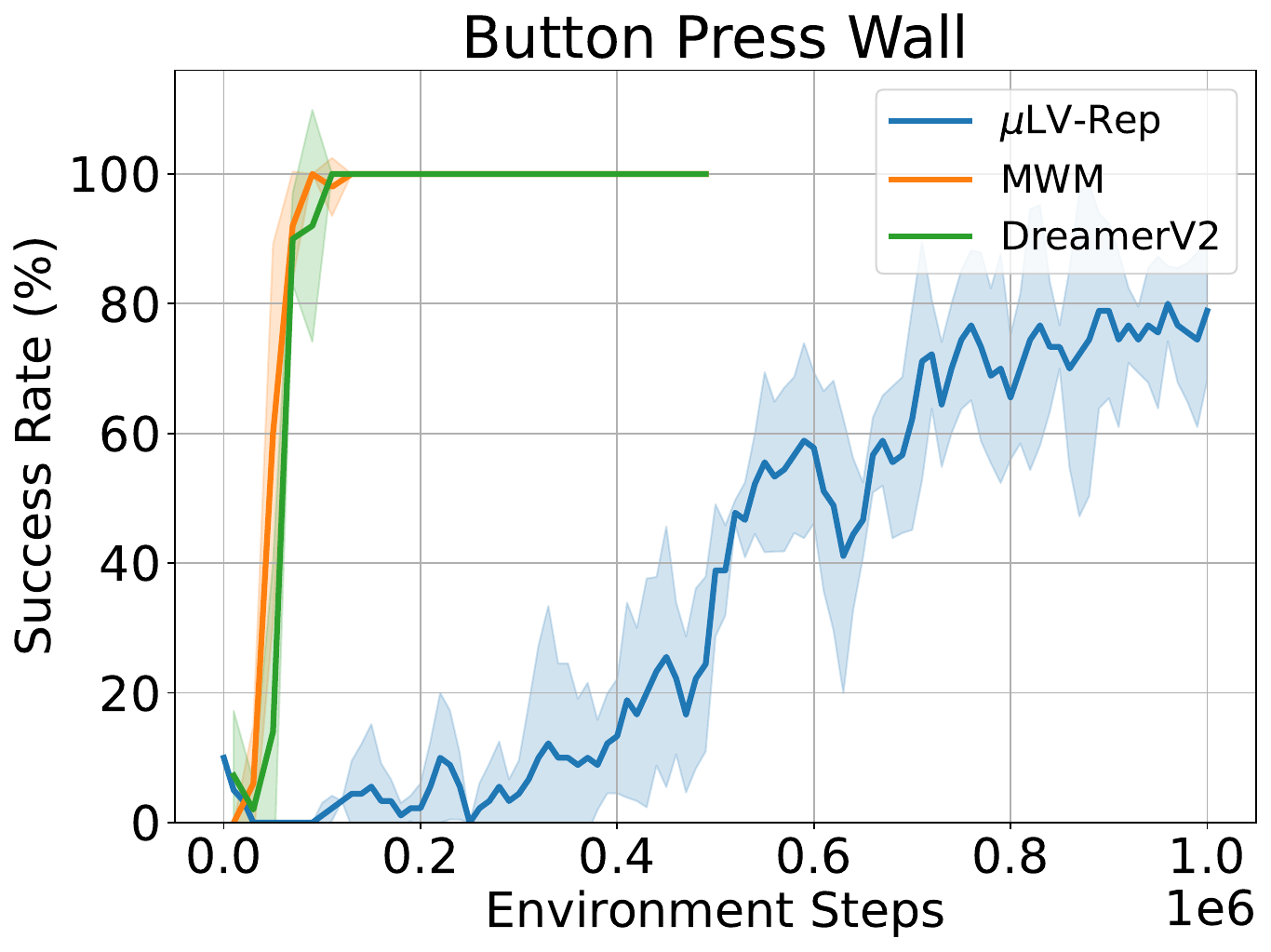}}
    \subfigure{\includegraphics[width=0.18\textwidth]{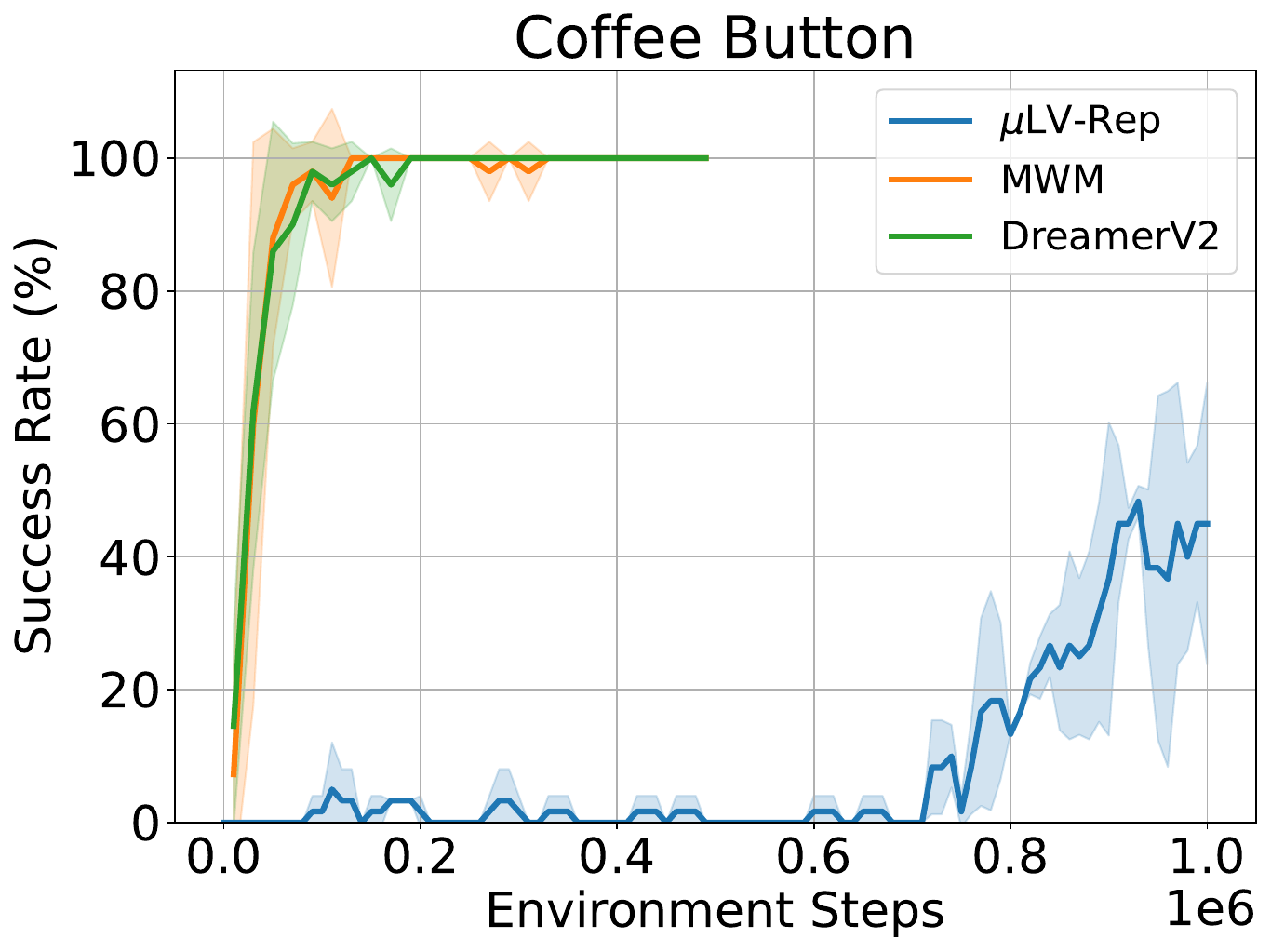}}
    \subfigure{\includegraphics[width=0.18\textwidth]{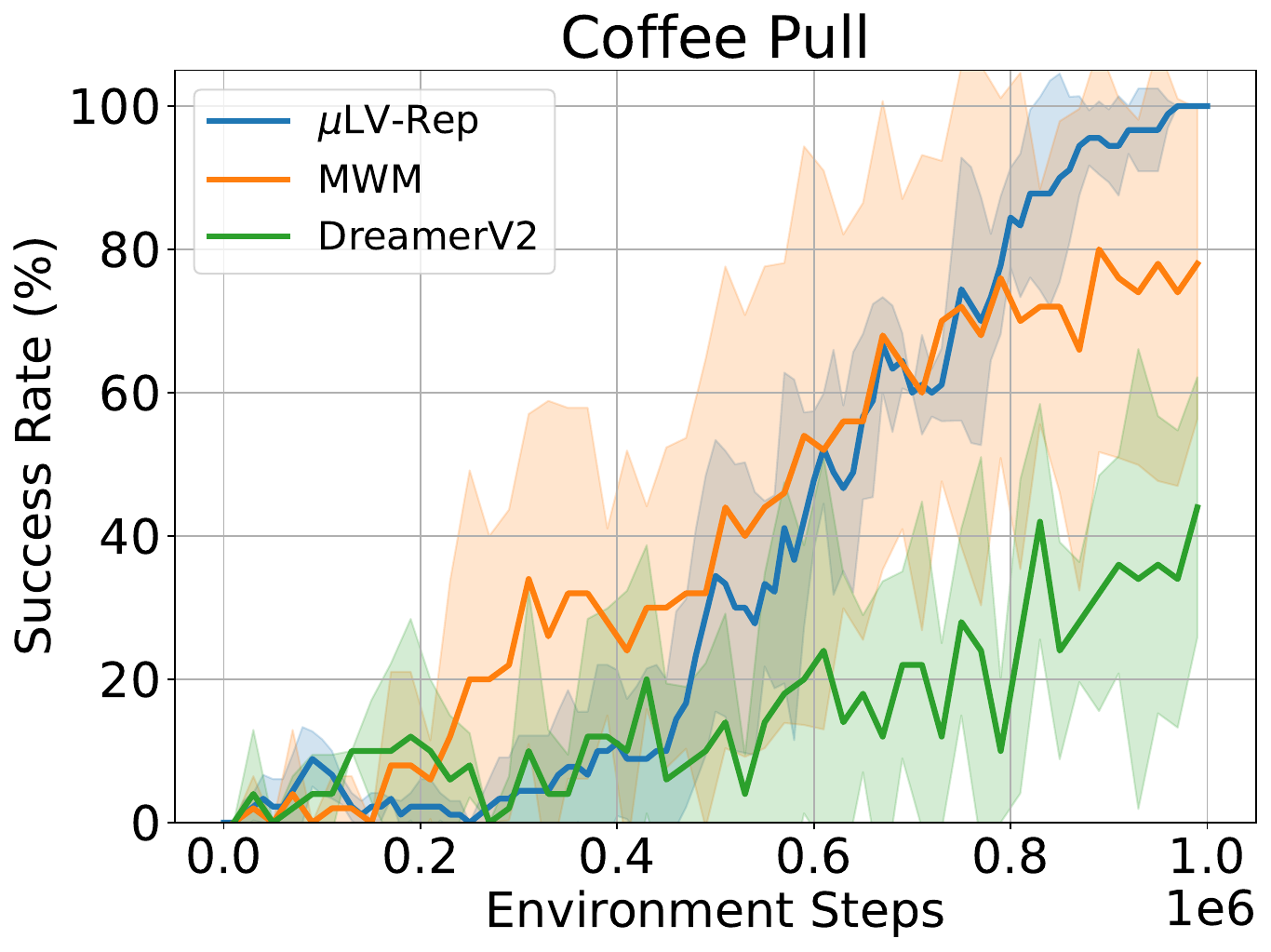}}
    \vskip -0.15in
    \subfigure{\includegraphics[width=0.18\textwidth]{pic/metaworld_coffee_push_performance.pdf}}
    \subfigure{\includegraphics[width=0.18\textwidth]{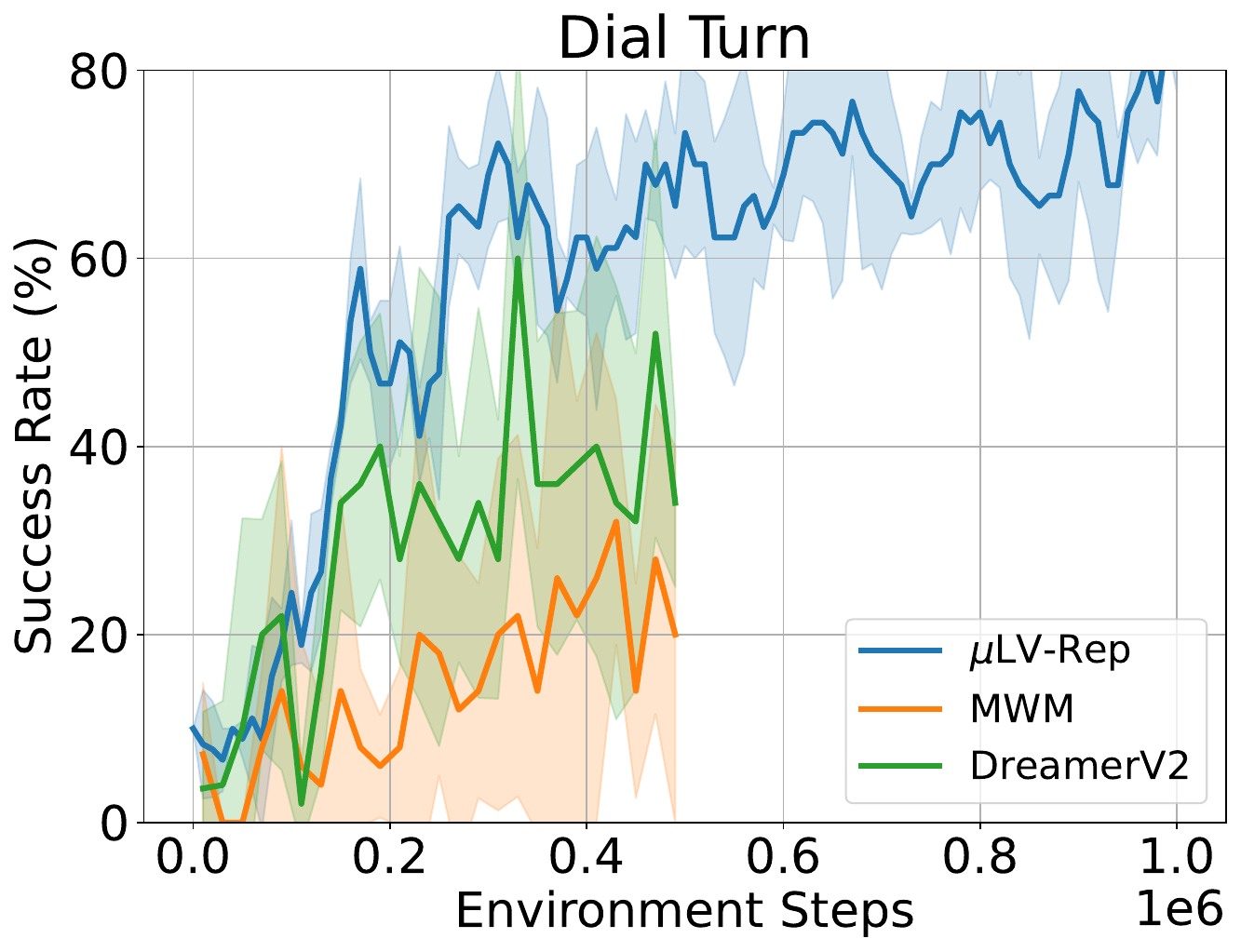}}    
    \subfigure{\includegraphics[width=0.18\textwidth]{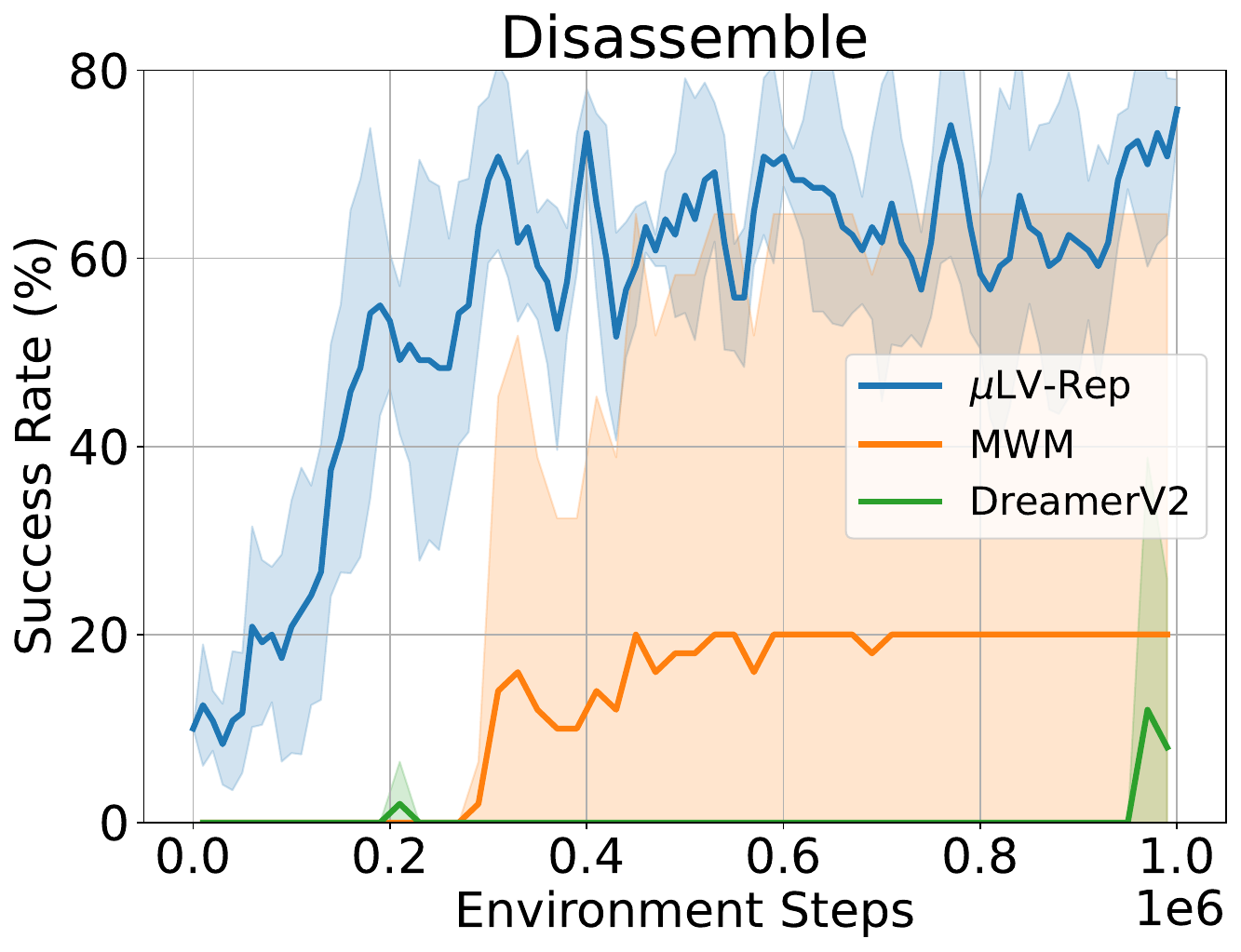}}
    \subfigure{\includegraphics[width=0.18\textwidth]{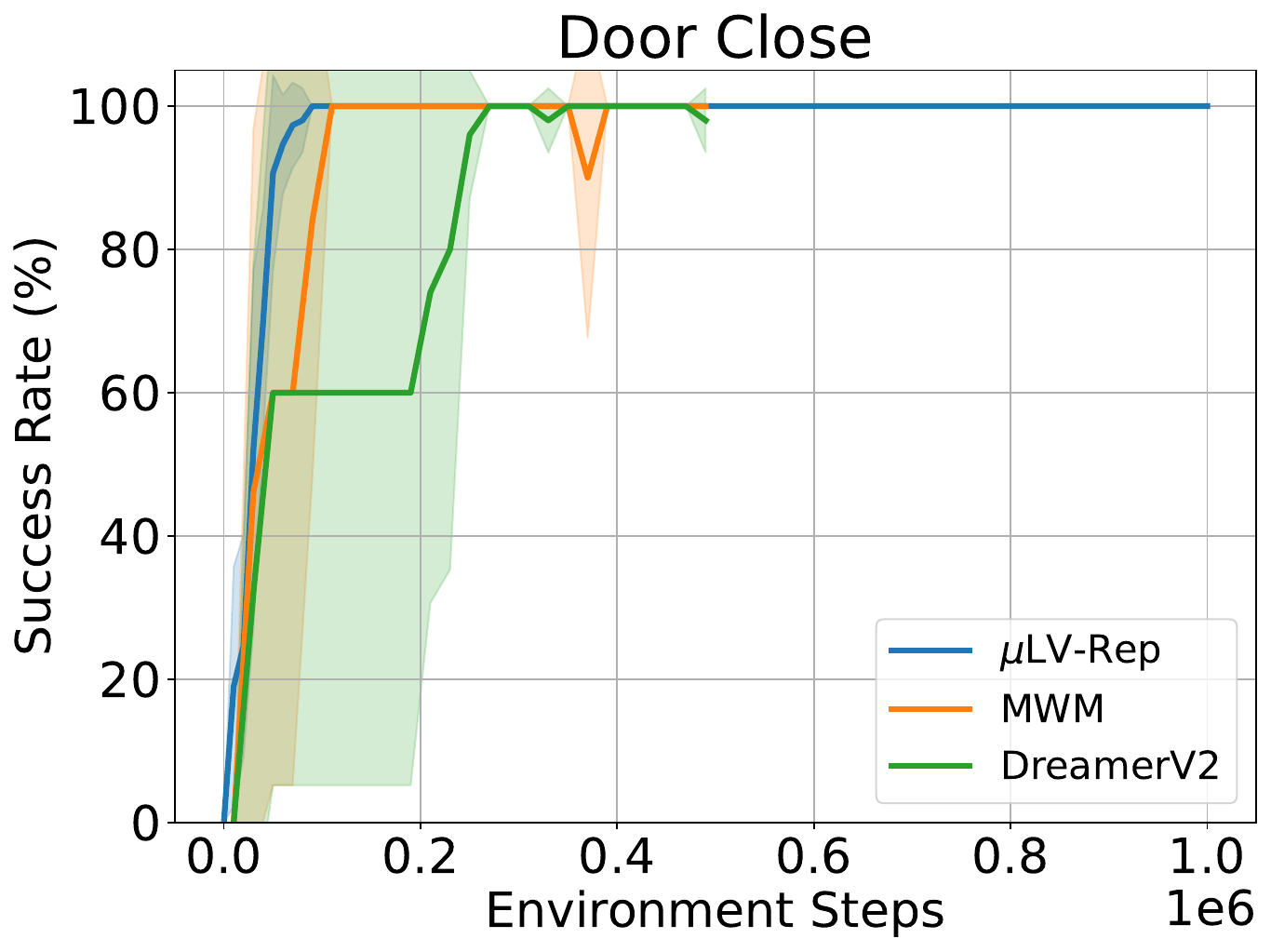}}
    \subfigure{\includegraphics[width=0.18\textwidth]{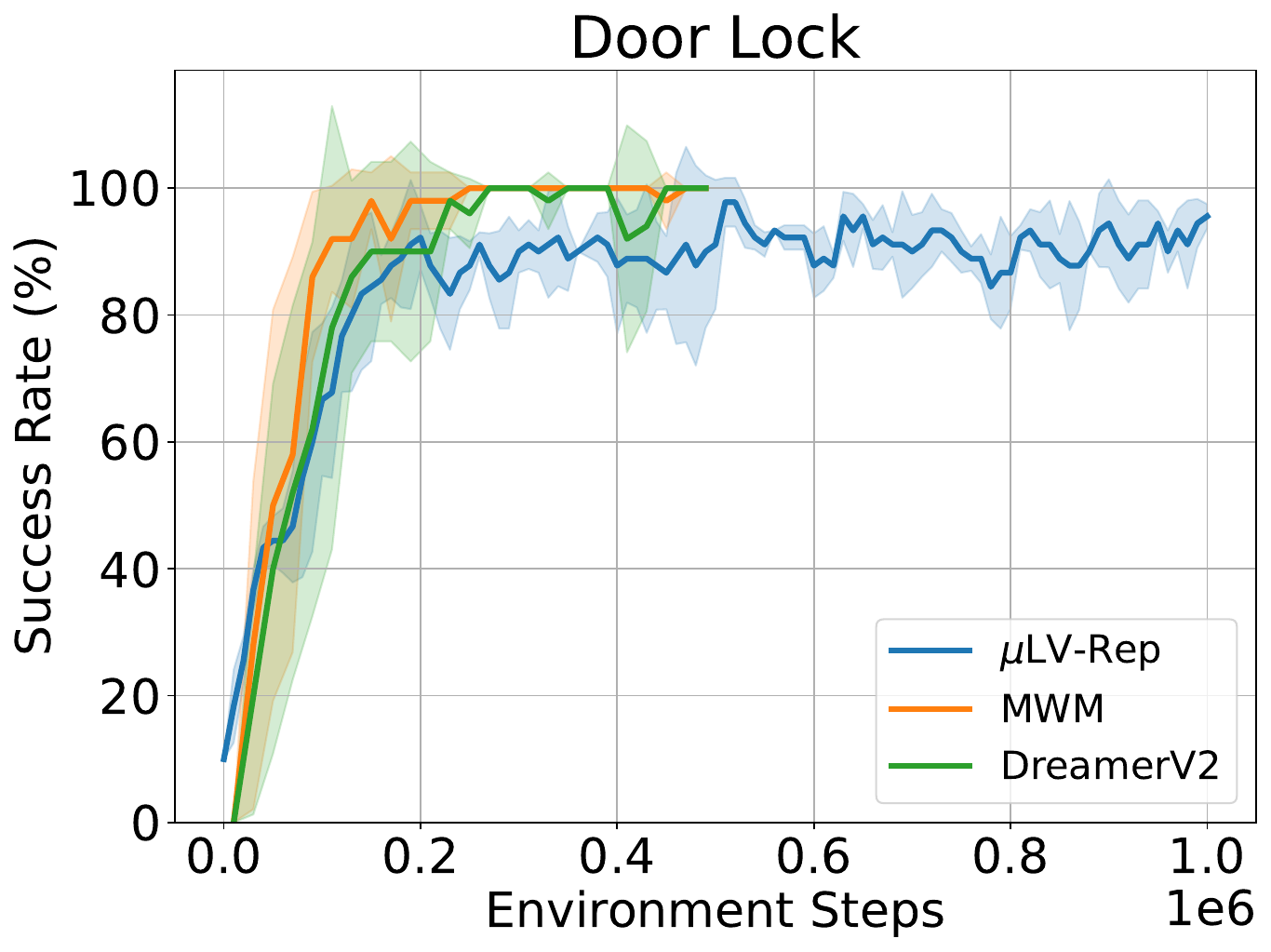}}
    \vskip -0.15in
    \subfigure{\includegraphics[width=0.18\textwidth]{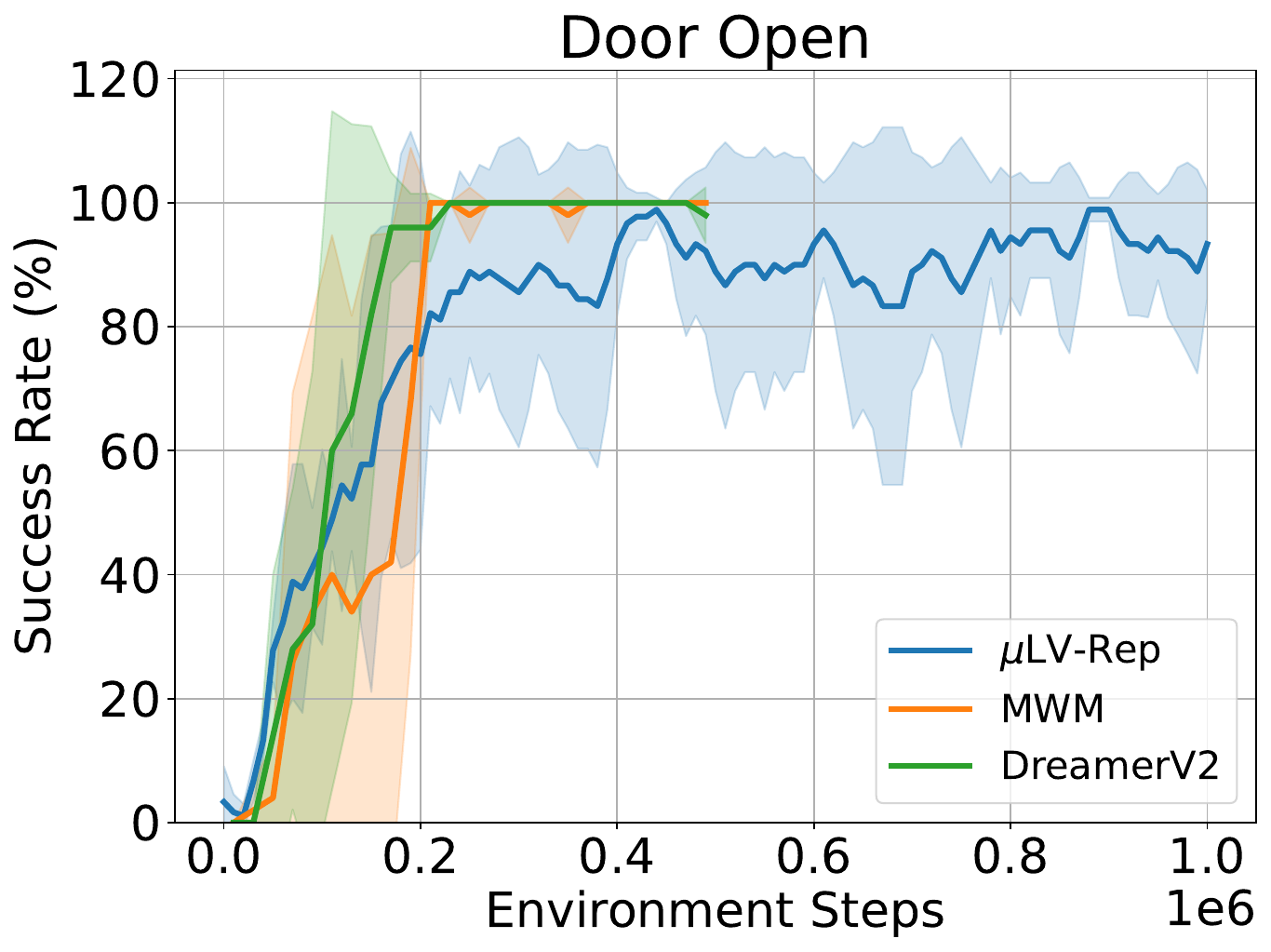}}
    \subfigure{\includegraphics[width=0.18\textwidth]{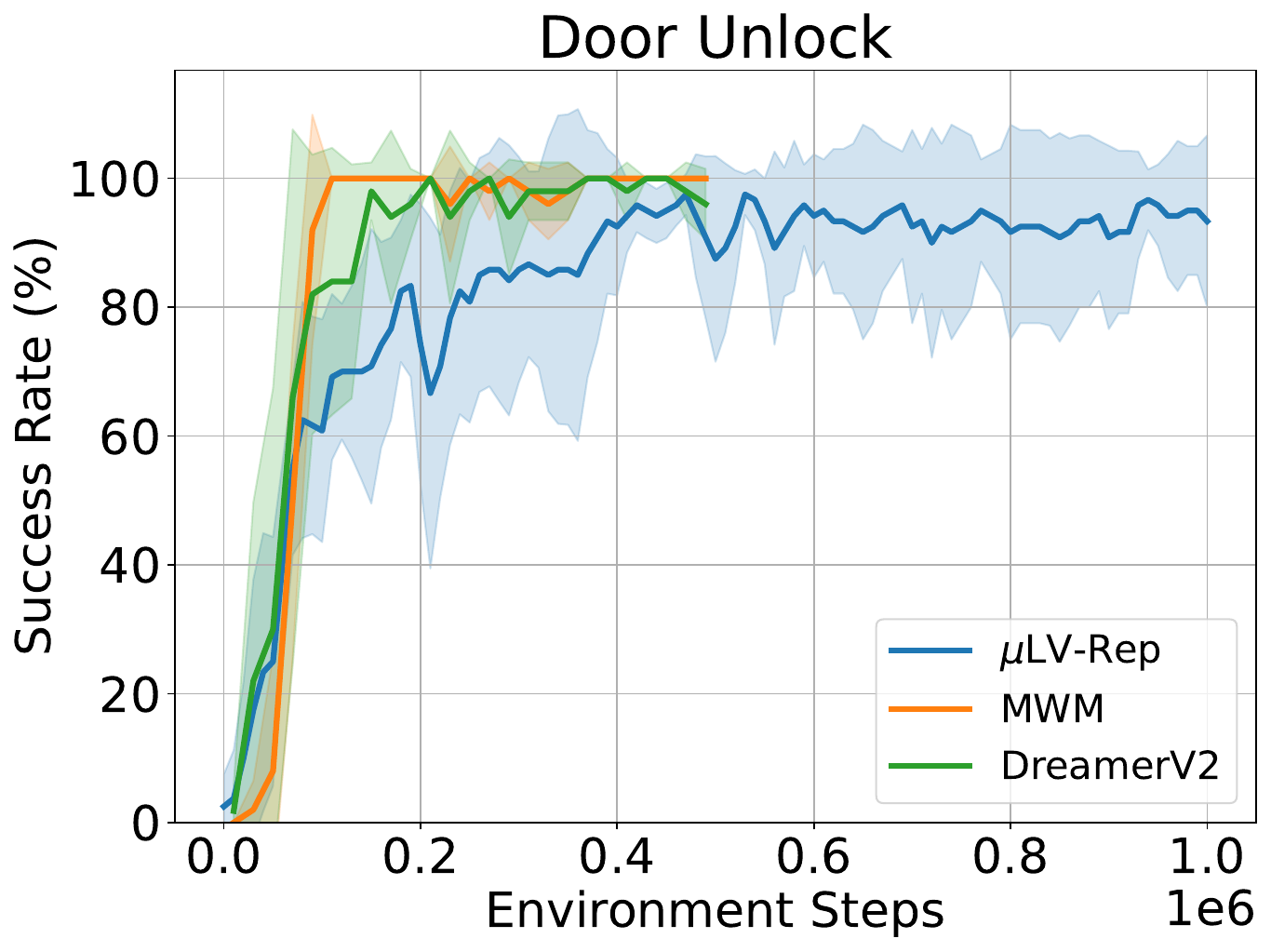}}
    \subfigure{\includegraphics[width=0.18\textwidth]{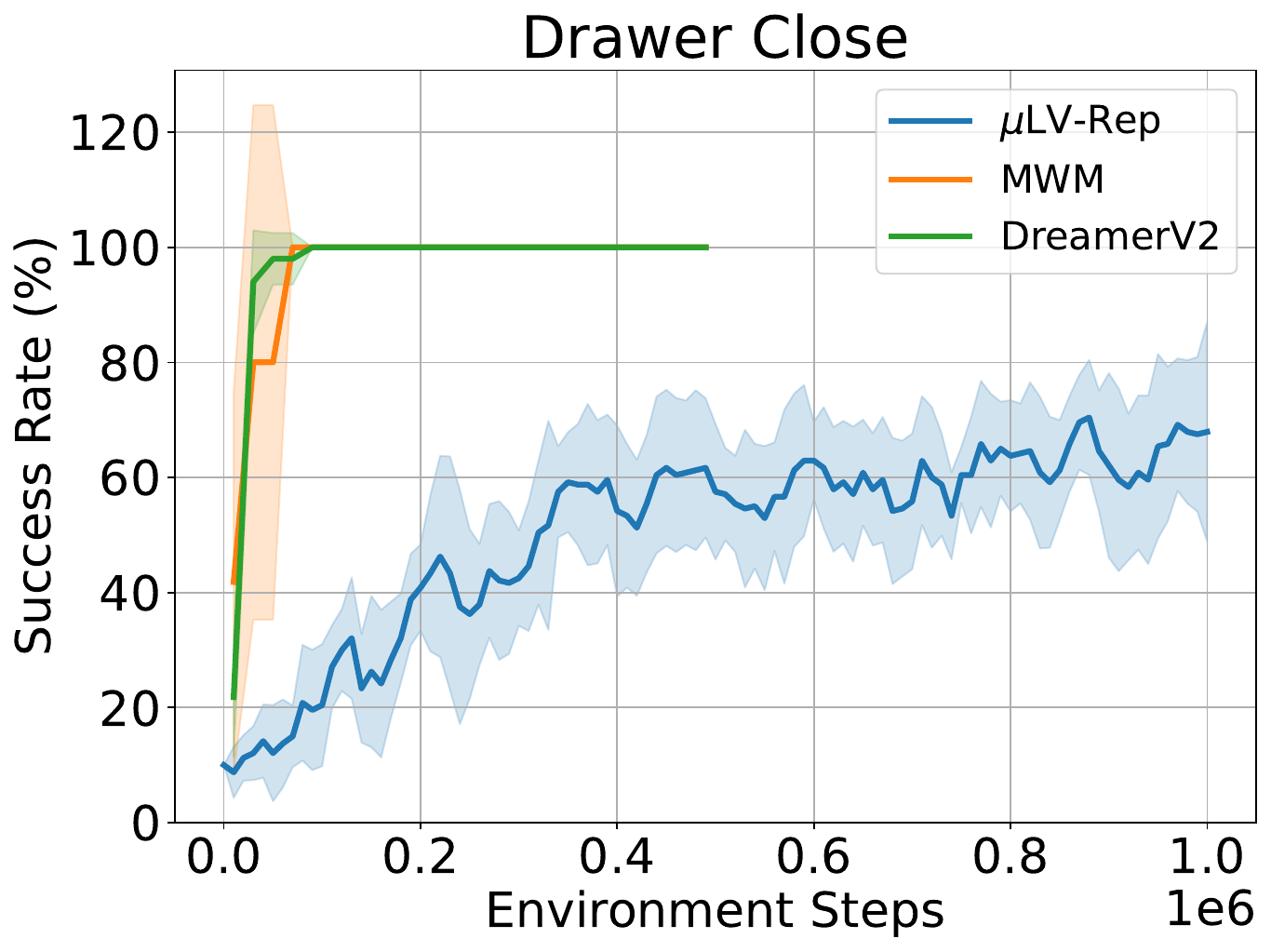}}
    \subfigure{\includegraphics[width=0.18\textwidth]{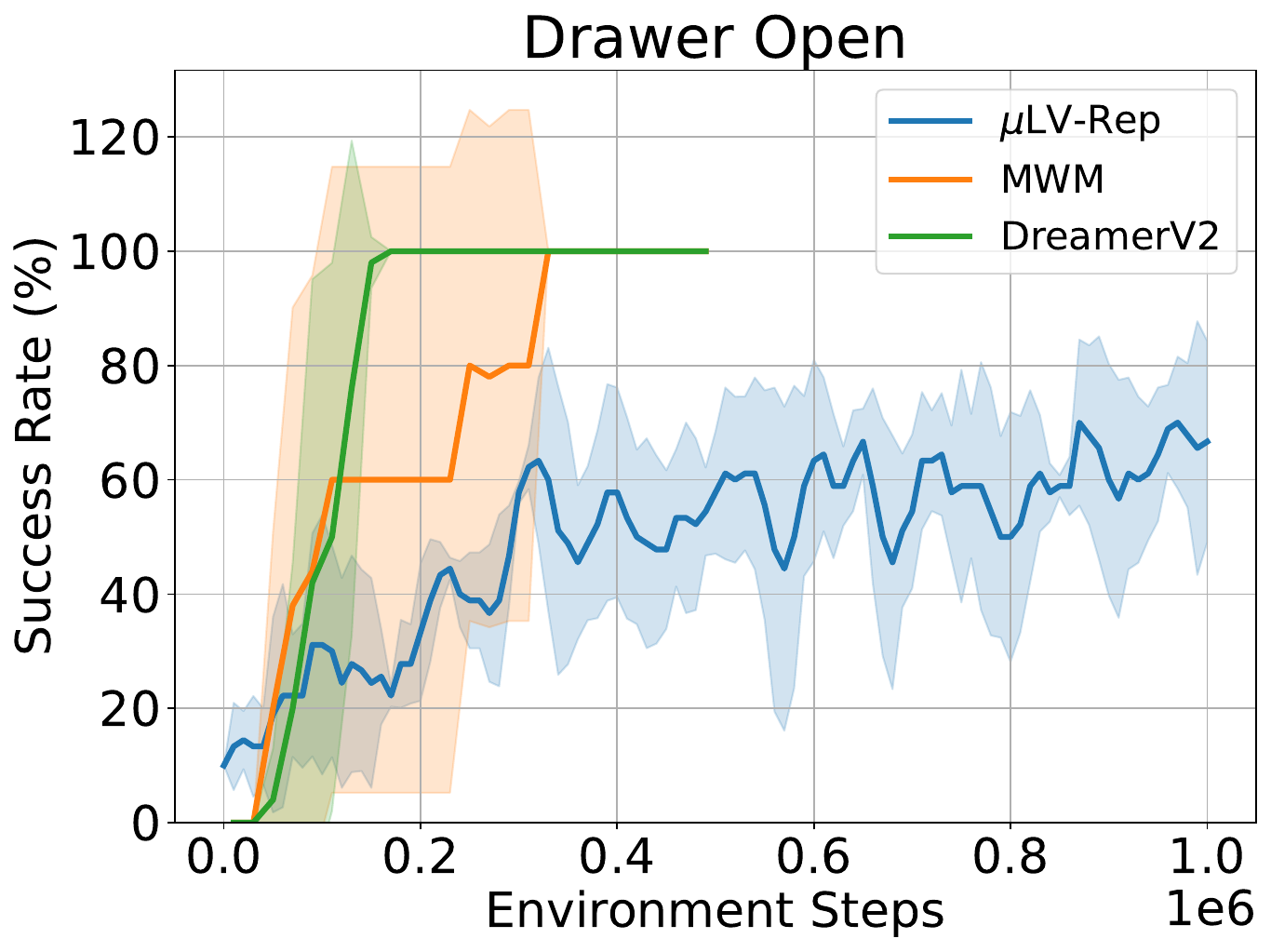}}
    \subfigure{\includegraphics[width=0.18\textwidth]{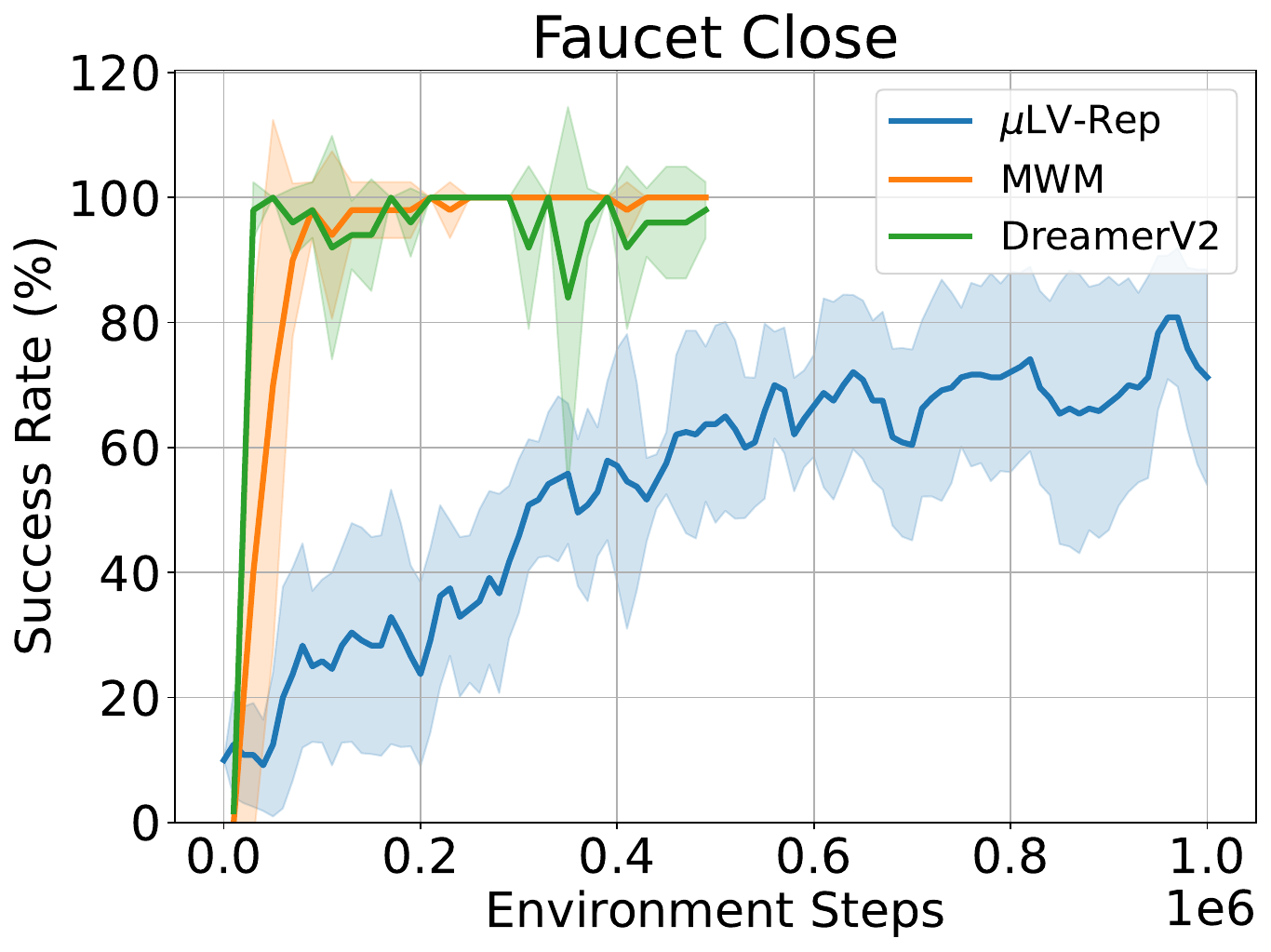}}
    \vskip -0.15in
    \subfigure{\includegraphics[width=0.18\textwidth]{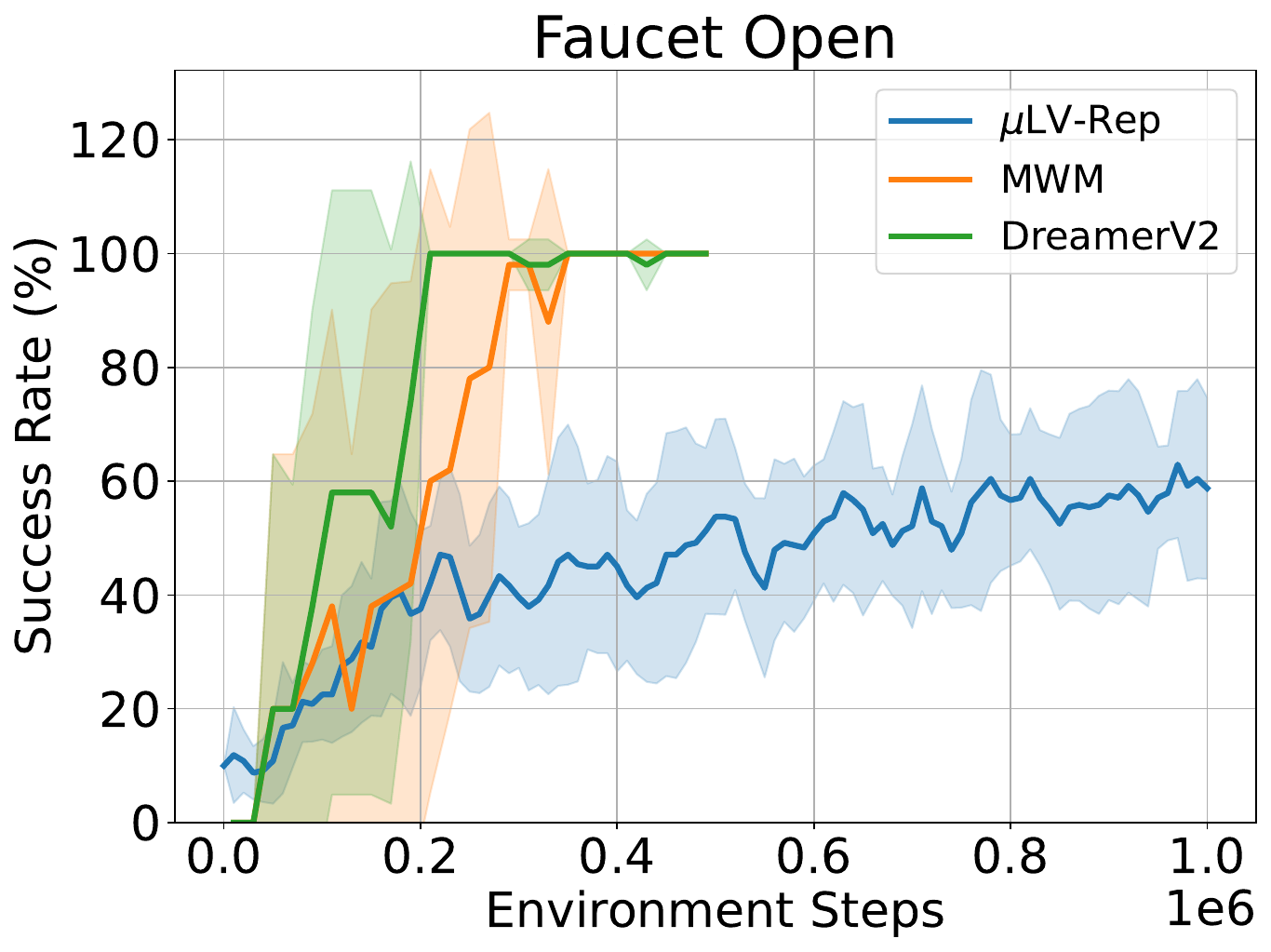}}
    \subfigure{\includegraphics[width=0.18\textwidth]{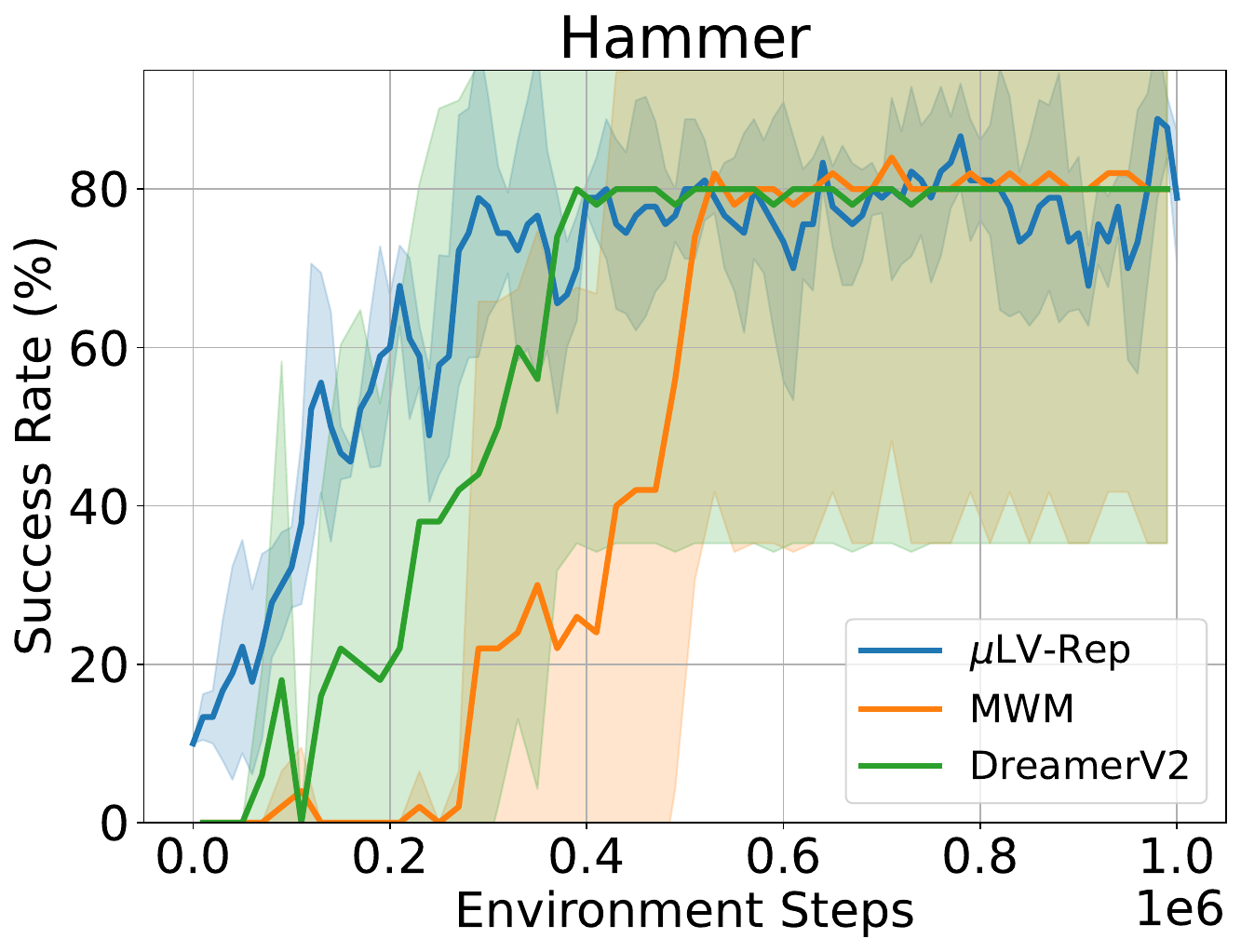}}
    \subfigure{\includegraphics[width=0.18\textwidth]{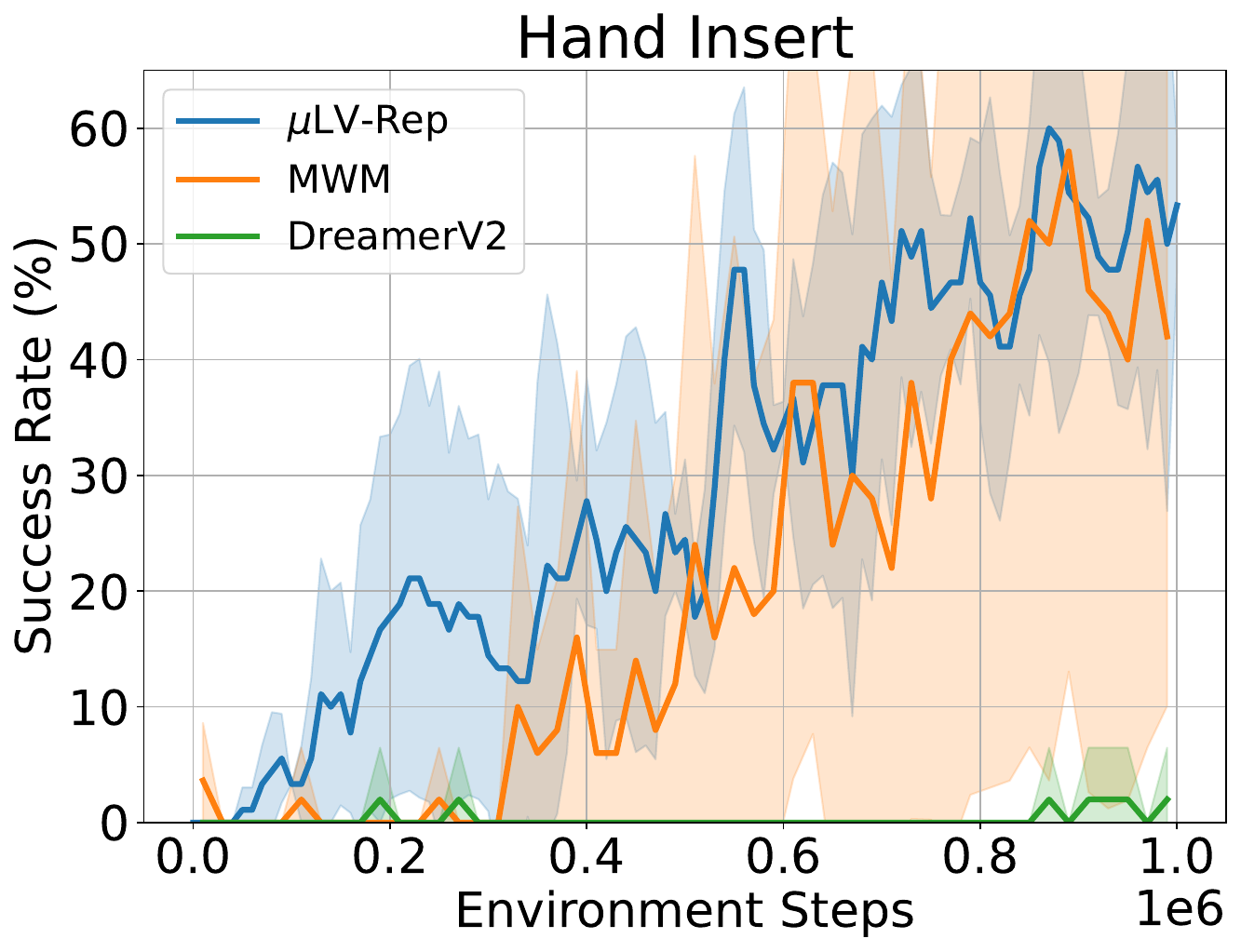}}
    \subfigure{\includegraphics[width=0.18\textwidth]{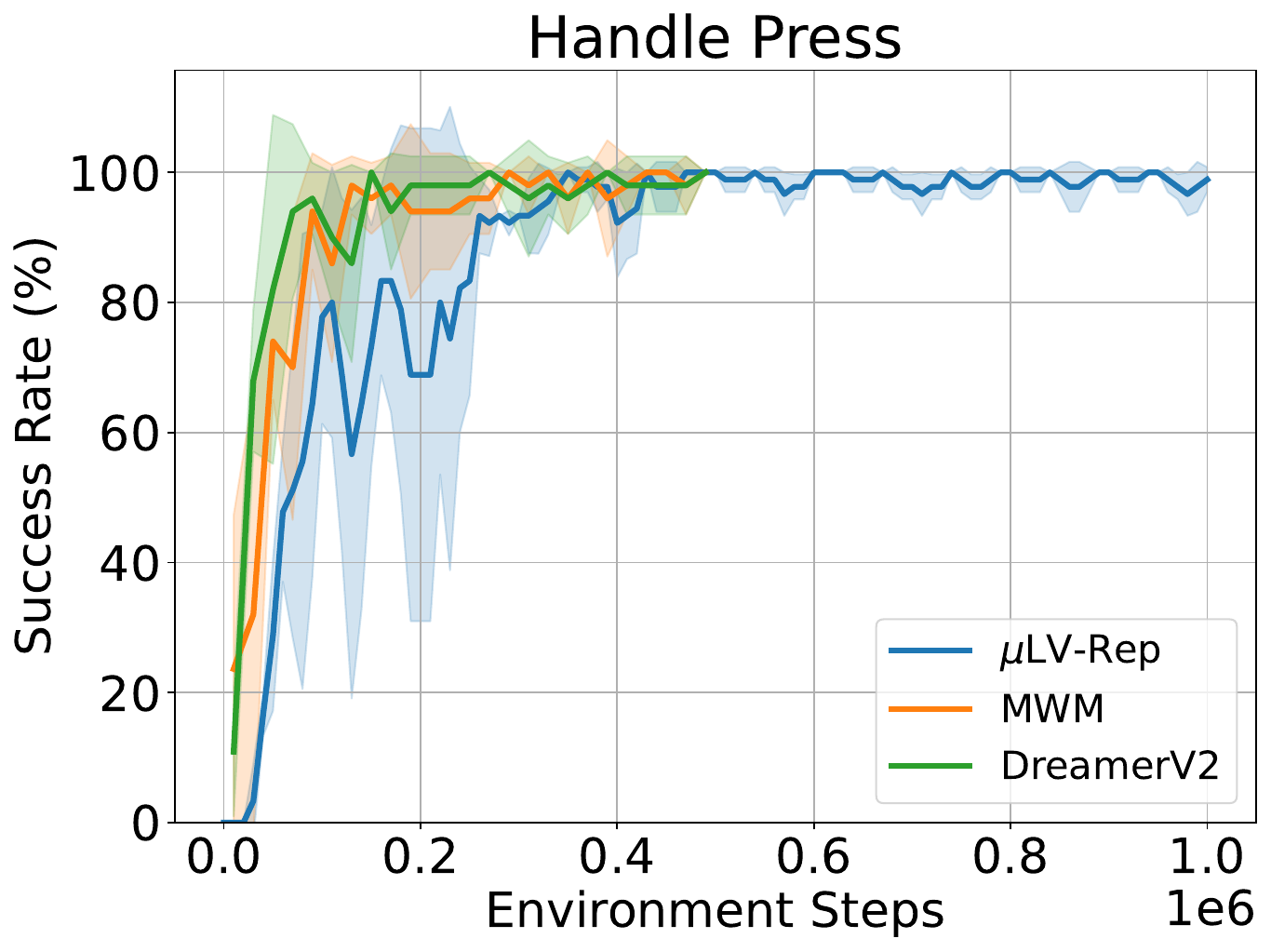}}
    \subfigure{\includegraphics[width=0.18\textwidth]{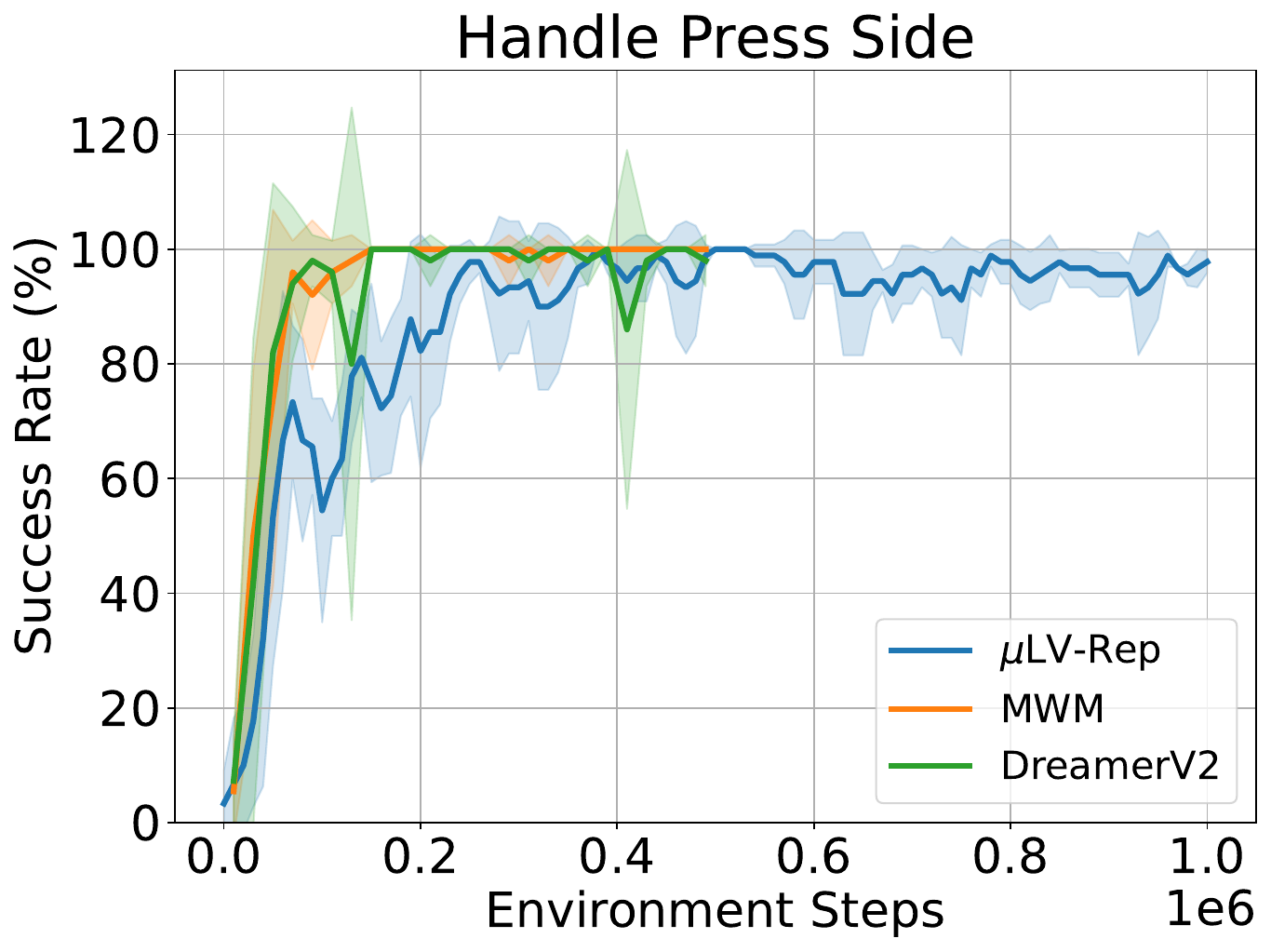}}
    \vskip -0.15in
    \subfigure{\includegraphics[width=0.18\textwidth]{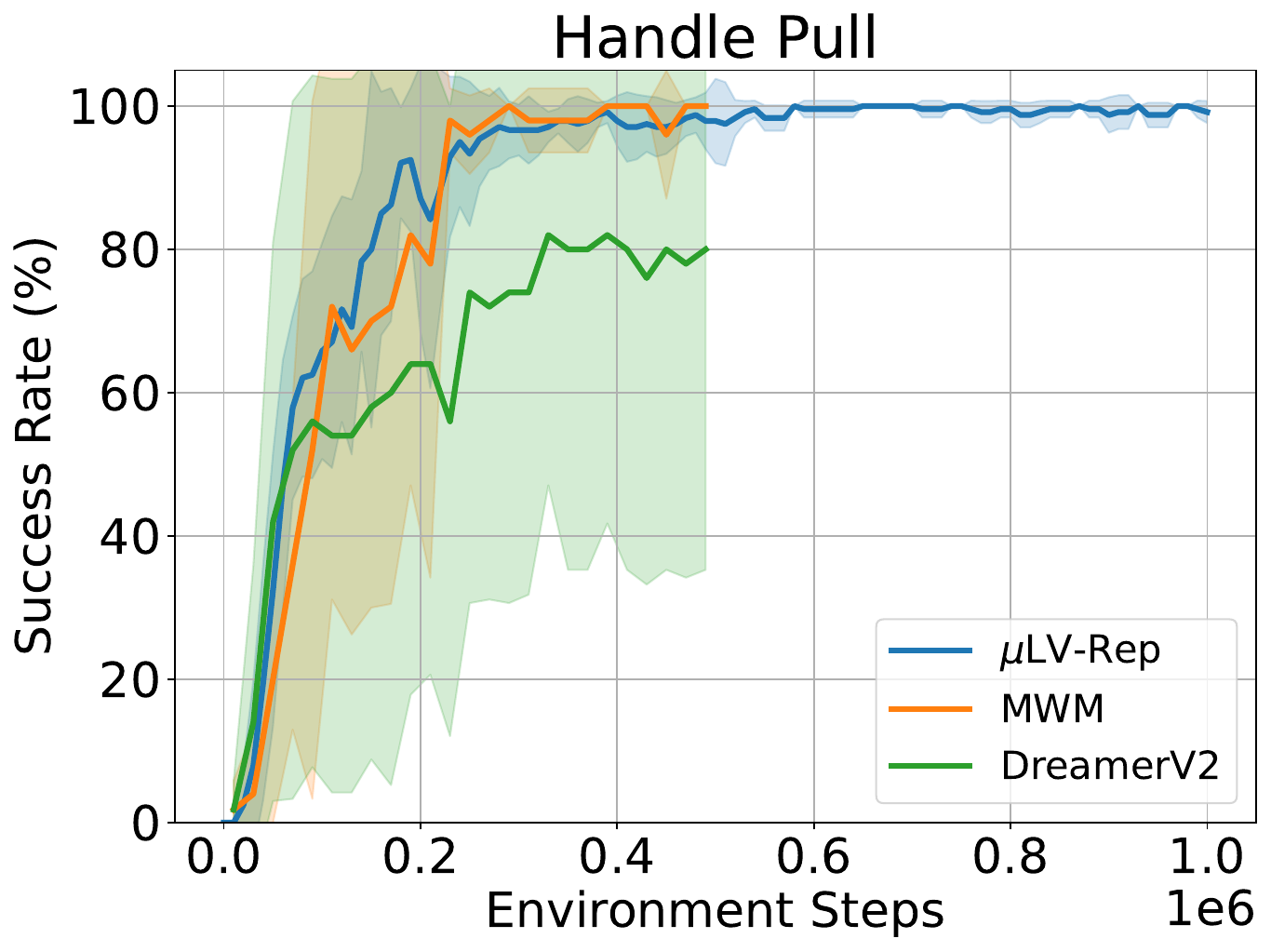}}
    \subfigure{\includegraphics[width=0.18\textwidth]{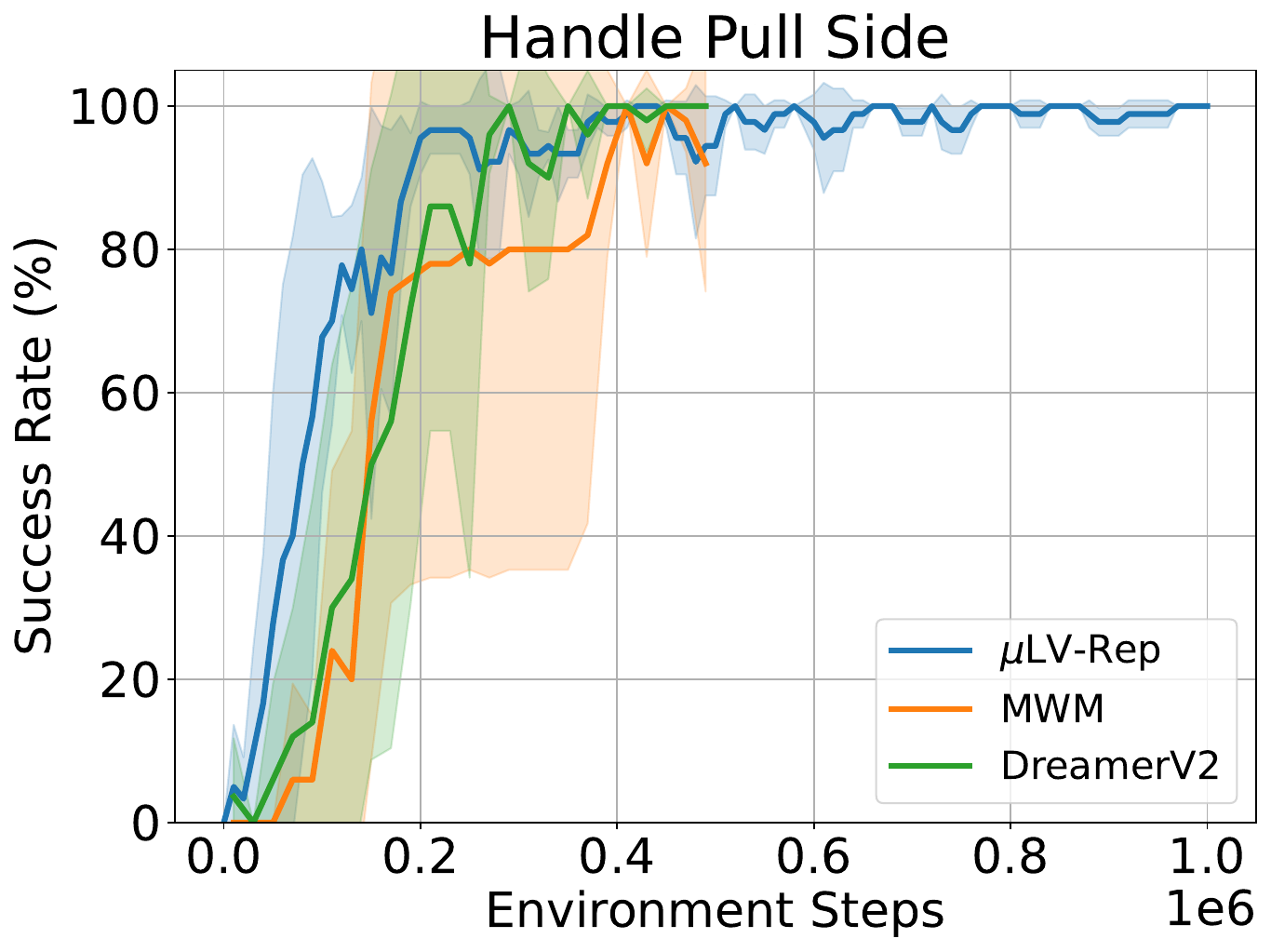}}
    \subfigure{\includegraphics[width=0.18\textwidth]{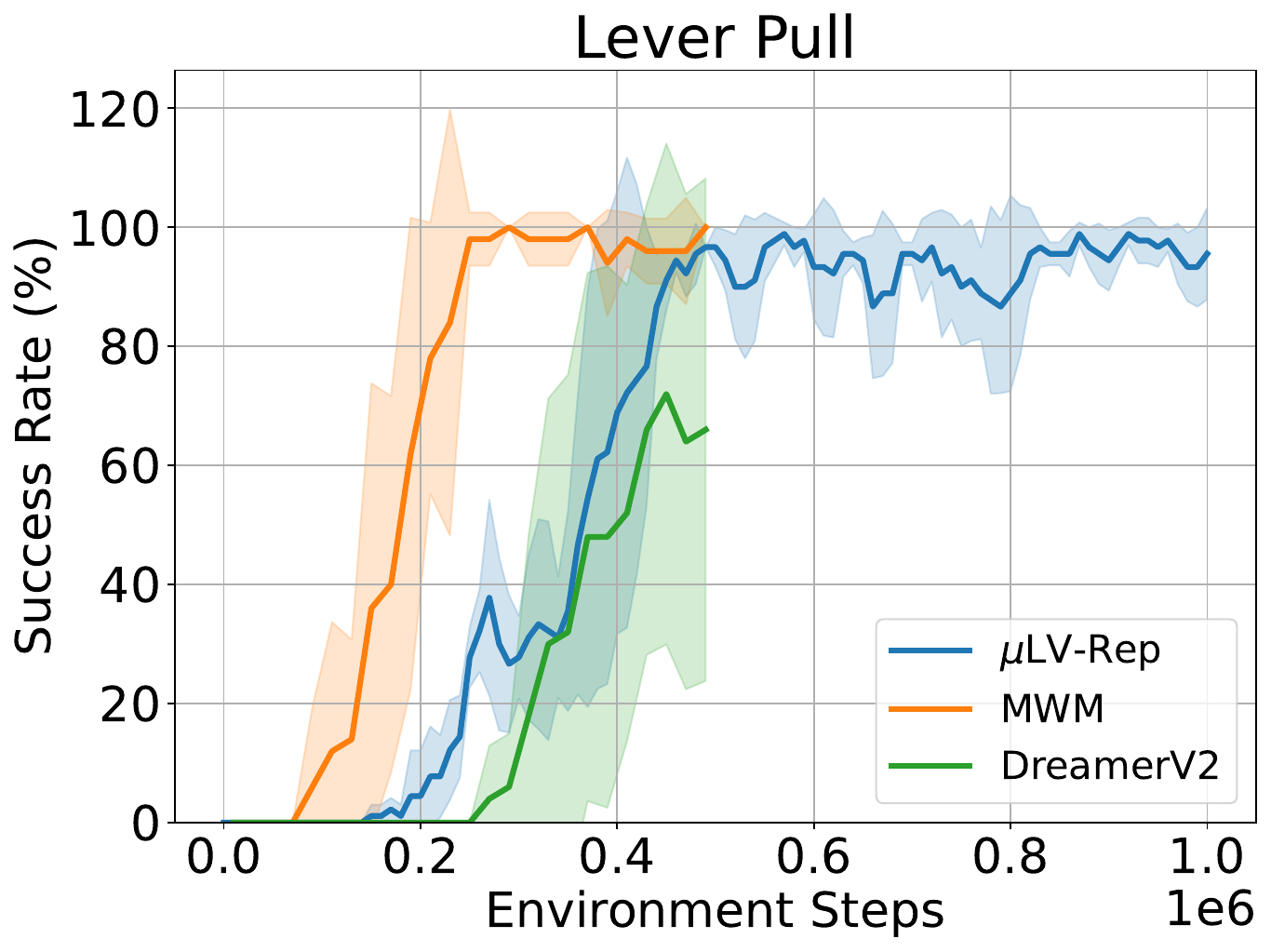}}
    \subfigure{\includegraphics[width=0.18\textwidth]{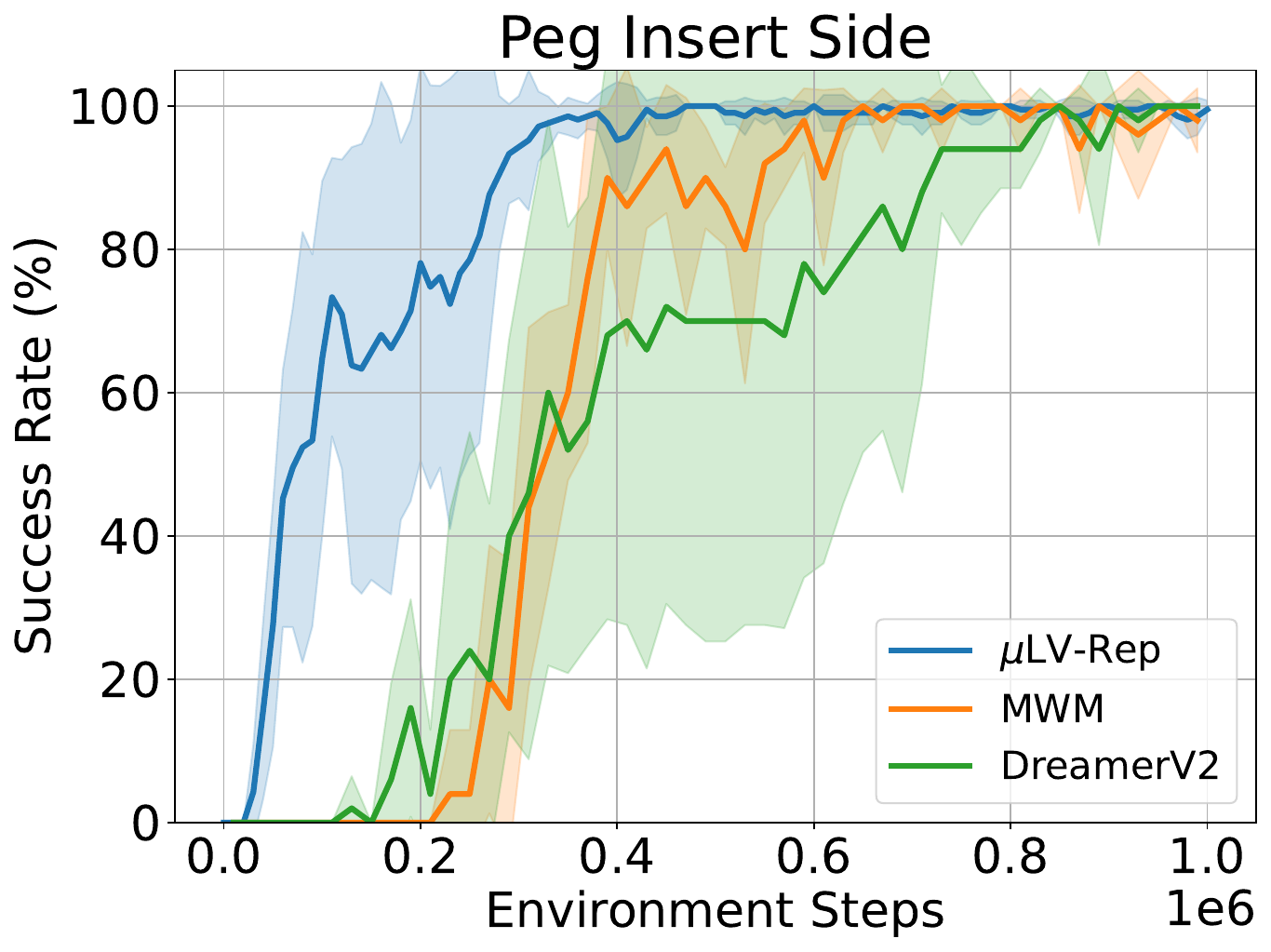}}
    \subfigure{\includegraphics[width=0.18\textwidth]{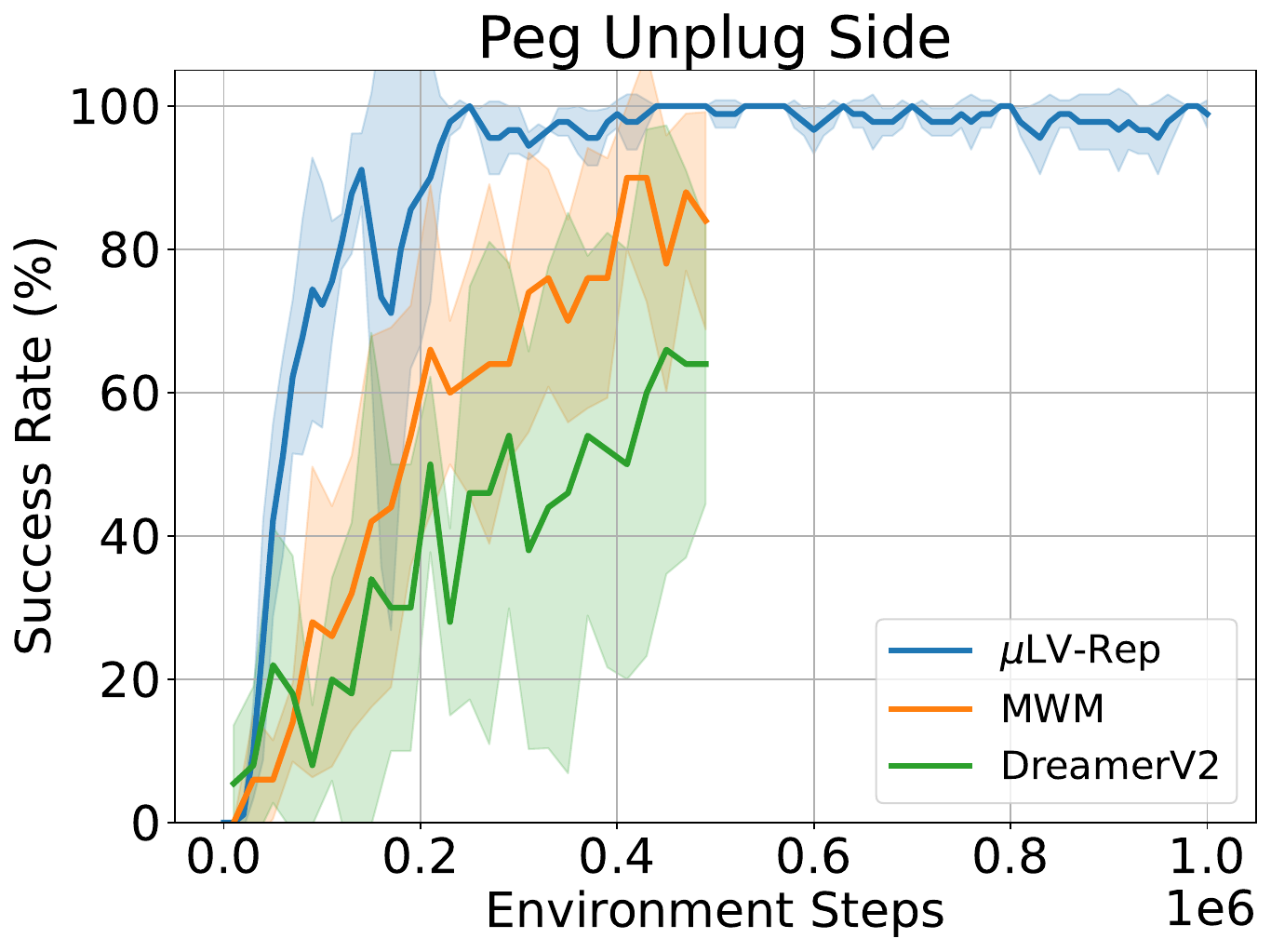}}
    \vskip -0.15in
    \subfigure{\includegraphics[width=0.18\textwidth]{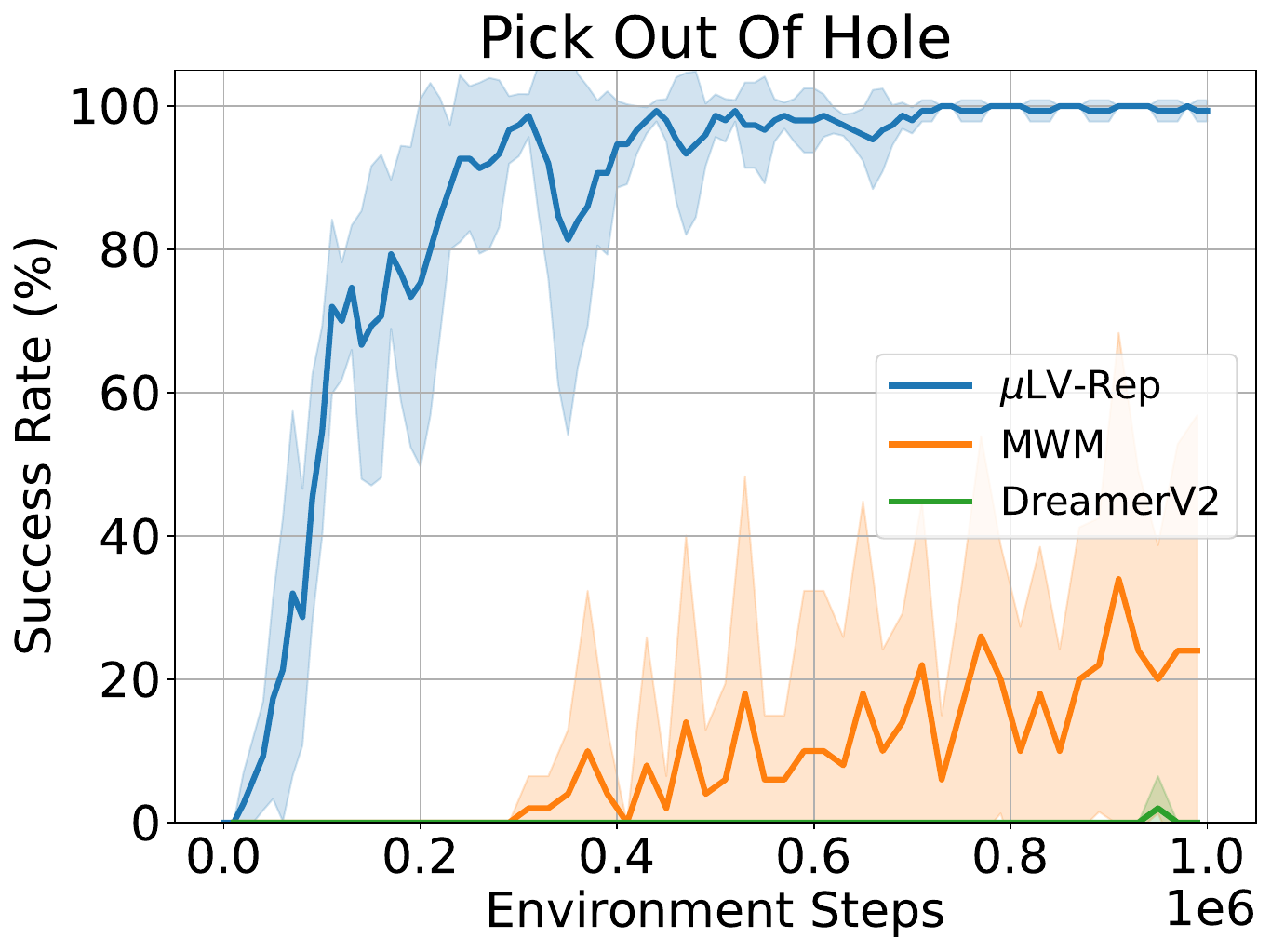}}
    \subfigure{\includegraphics[width=0.18\textwidth]{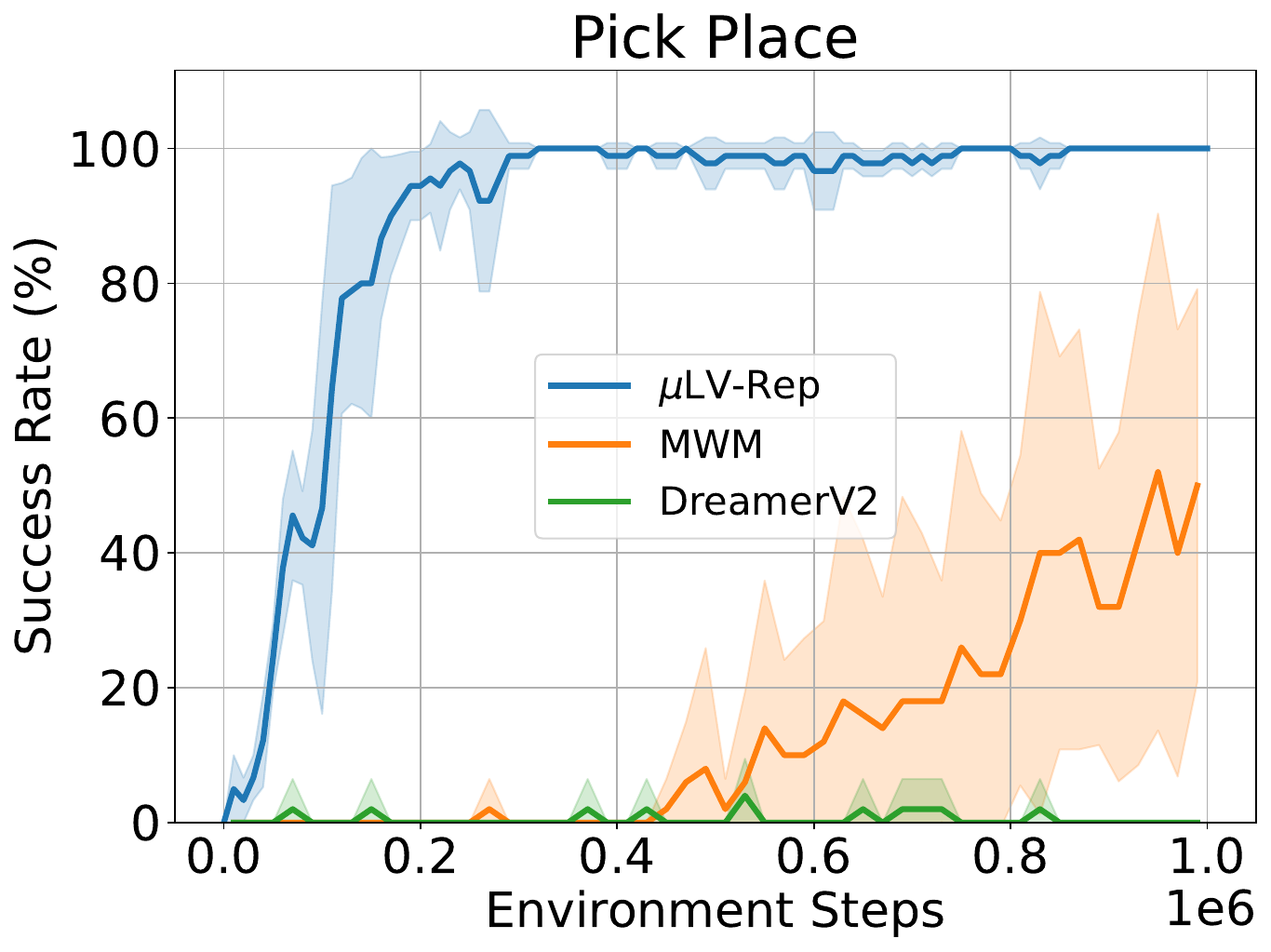}}
    \subfigure{\includegraphics[width=0.18\textwidth]{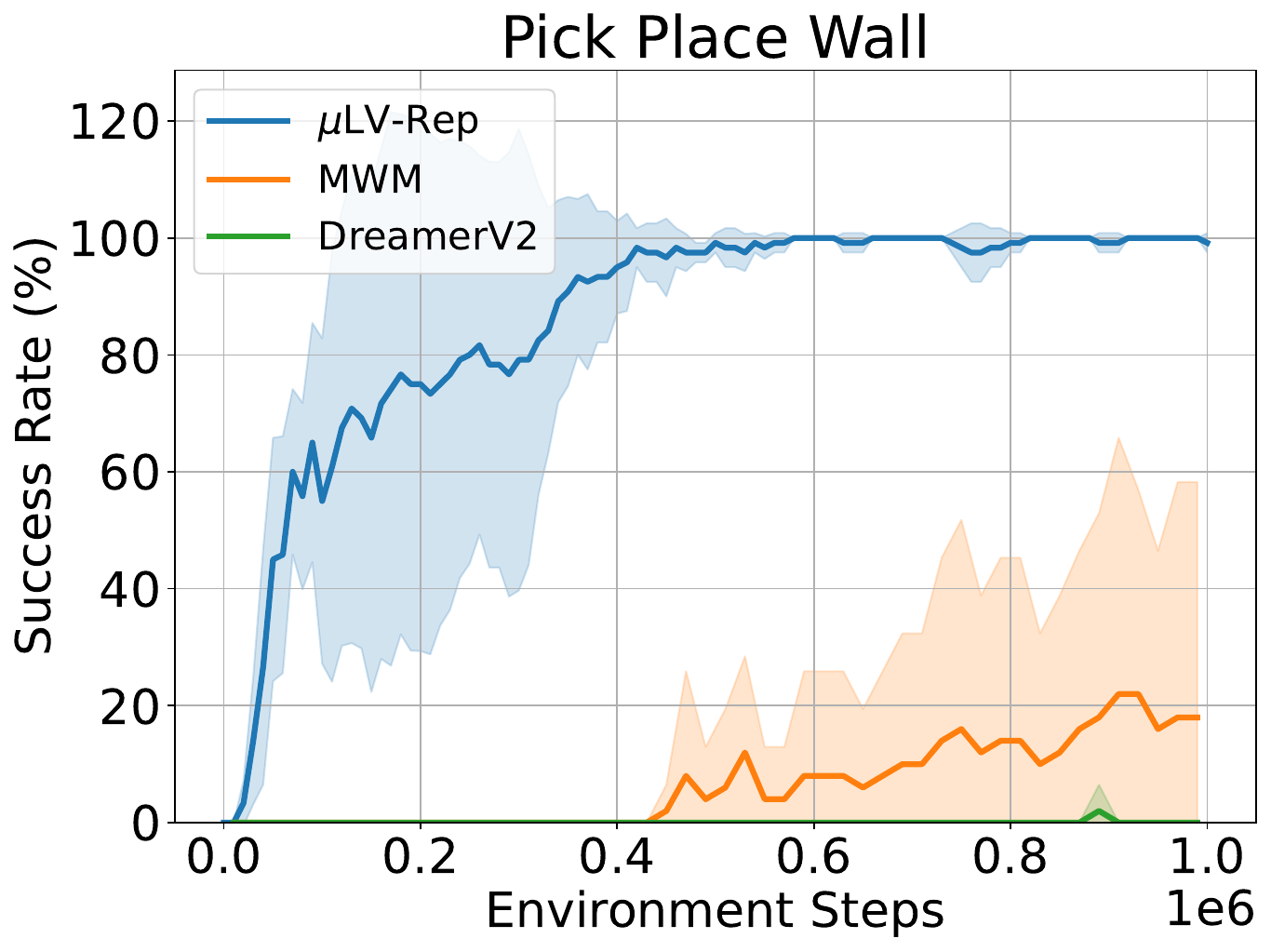}}
    \subfigure{\includegraphics[width=0.18\textwidth]{pic/metaworld_plate_slide_performance.pdf}}
    \subfigure{\includegraphics[width=0.18\textwidth]{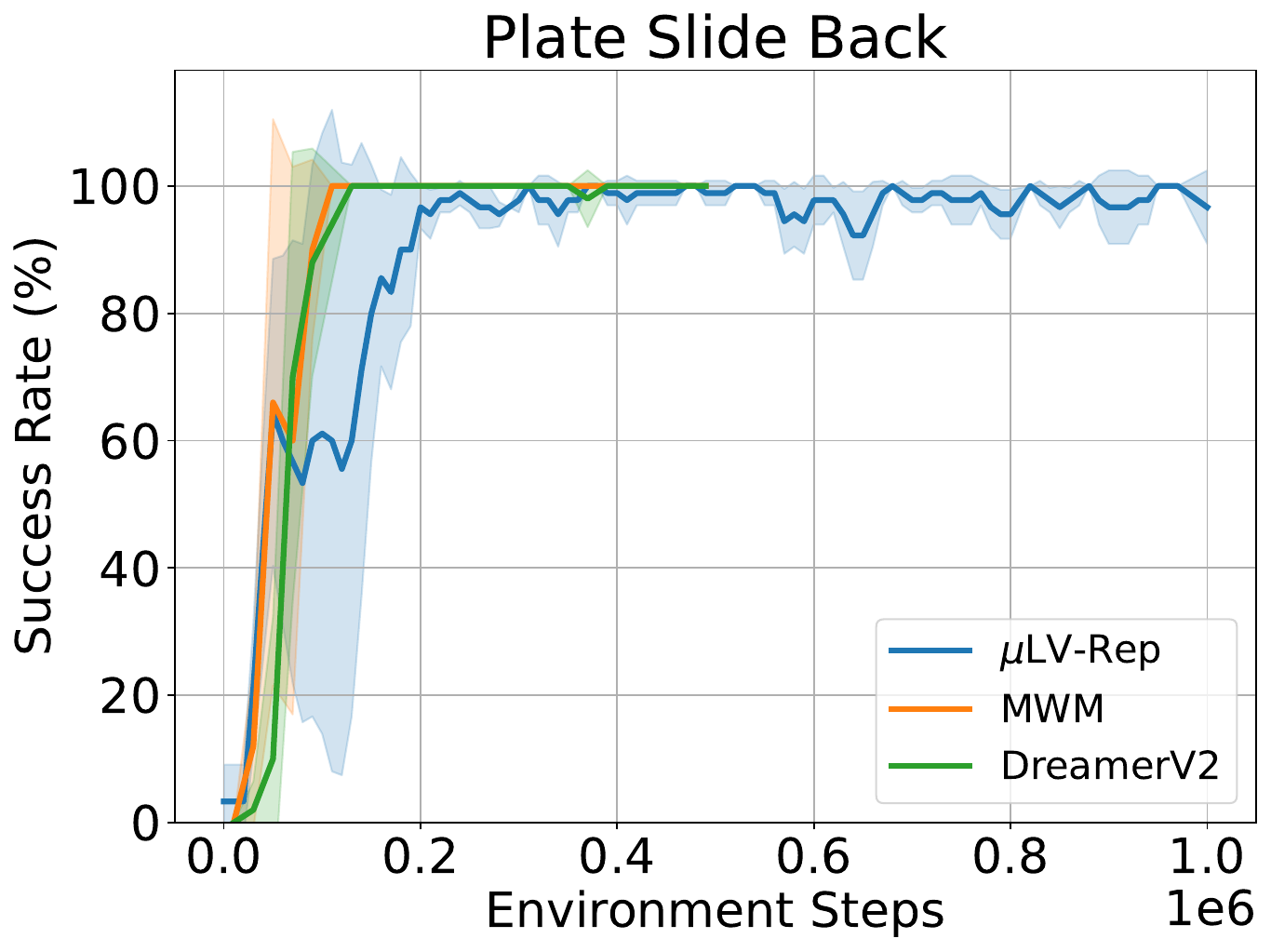}}
    \vskip -0.15in
    \subfigure{\includegraphics[width=0.18\textwidth]{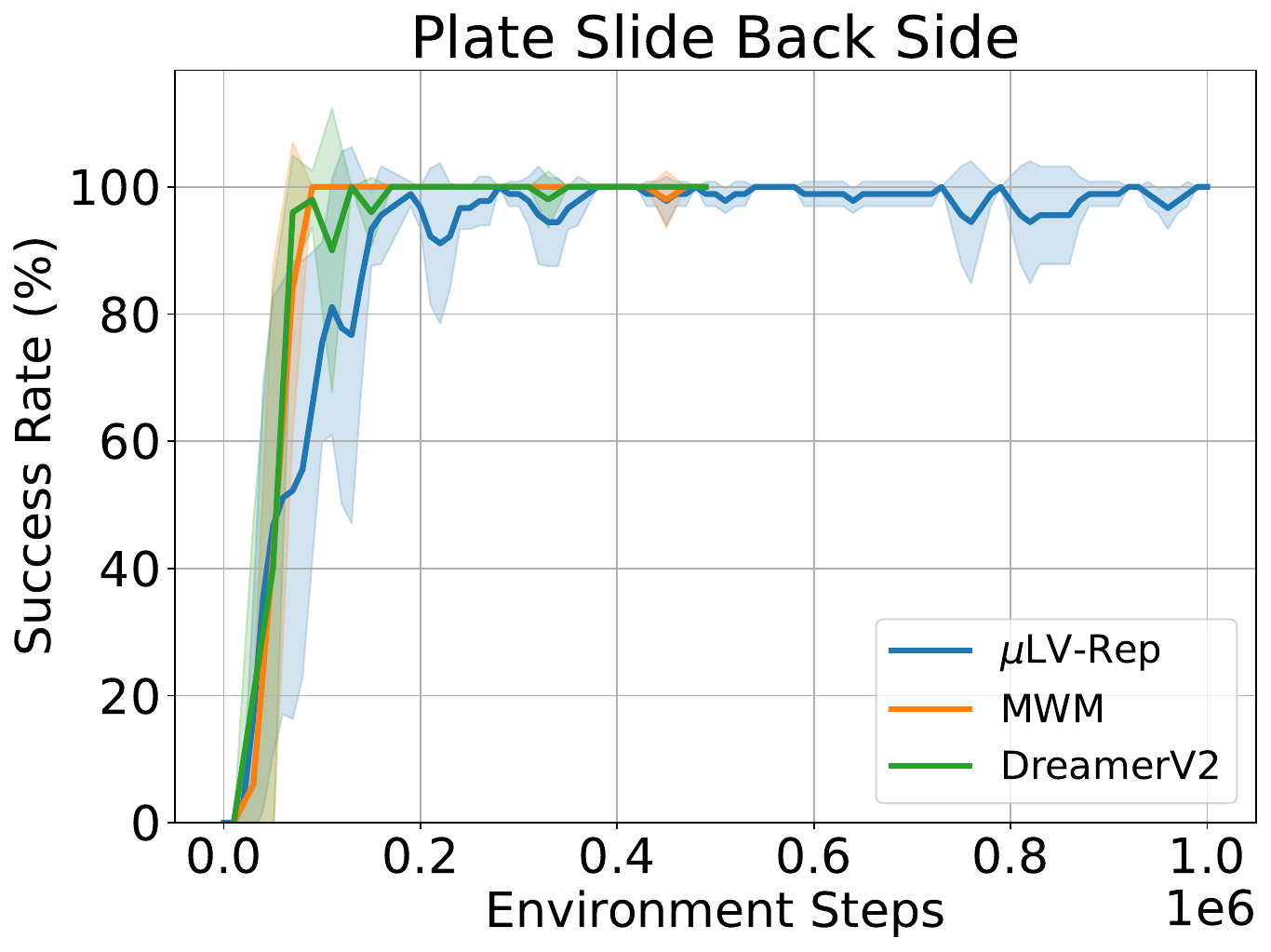}}
    \subfigure{\includegraphics[width=0.18\textwidth]{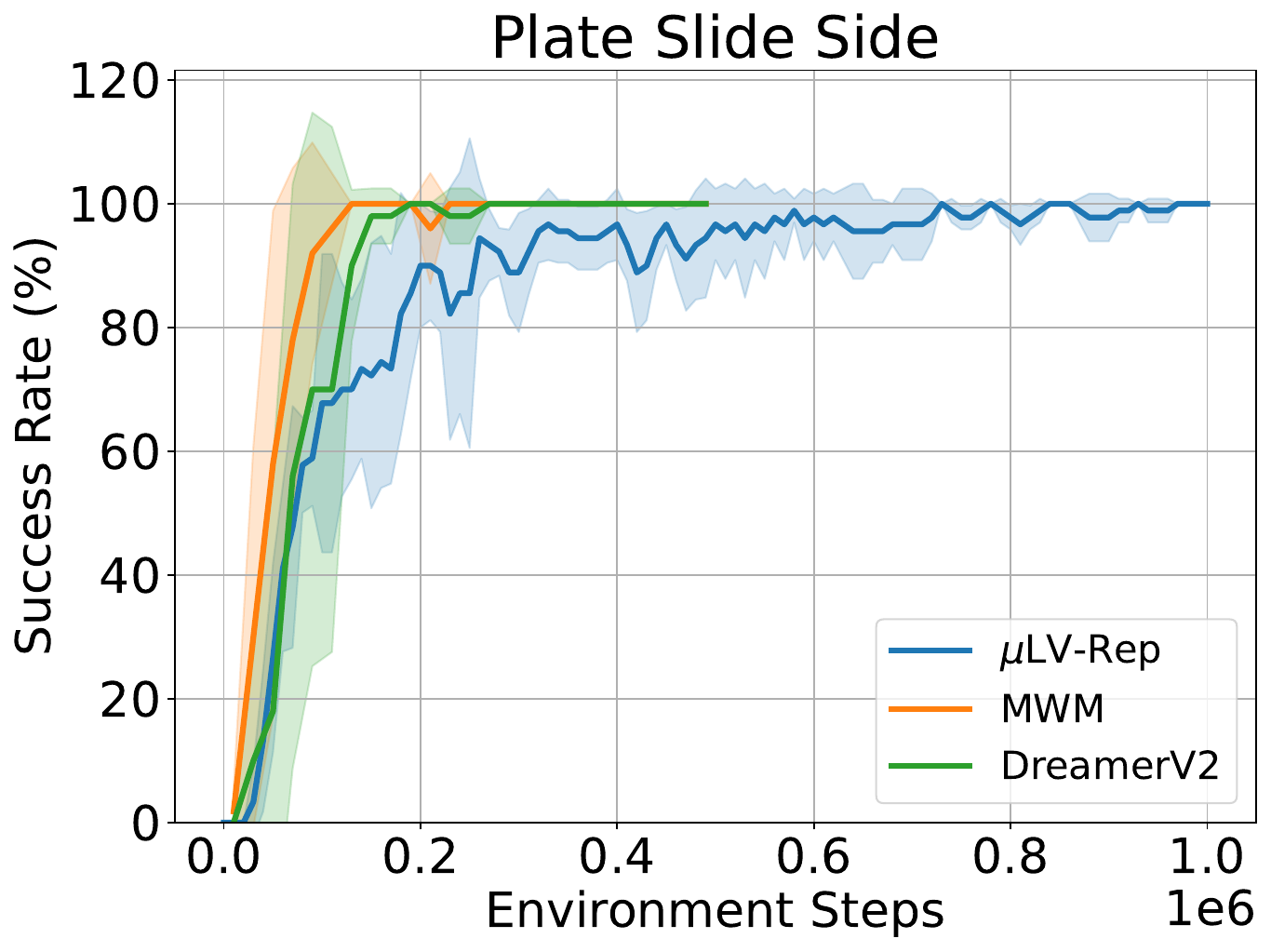}}
    \subfigure{\includegraphics[width=0.18\textwidth]{pic/metaworld_push_performance.pdf}}
    \subfigure{\includegraphics[width=0.18\textwidth]{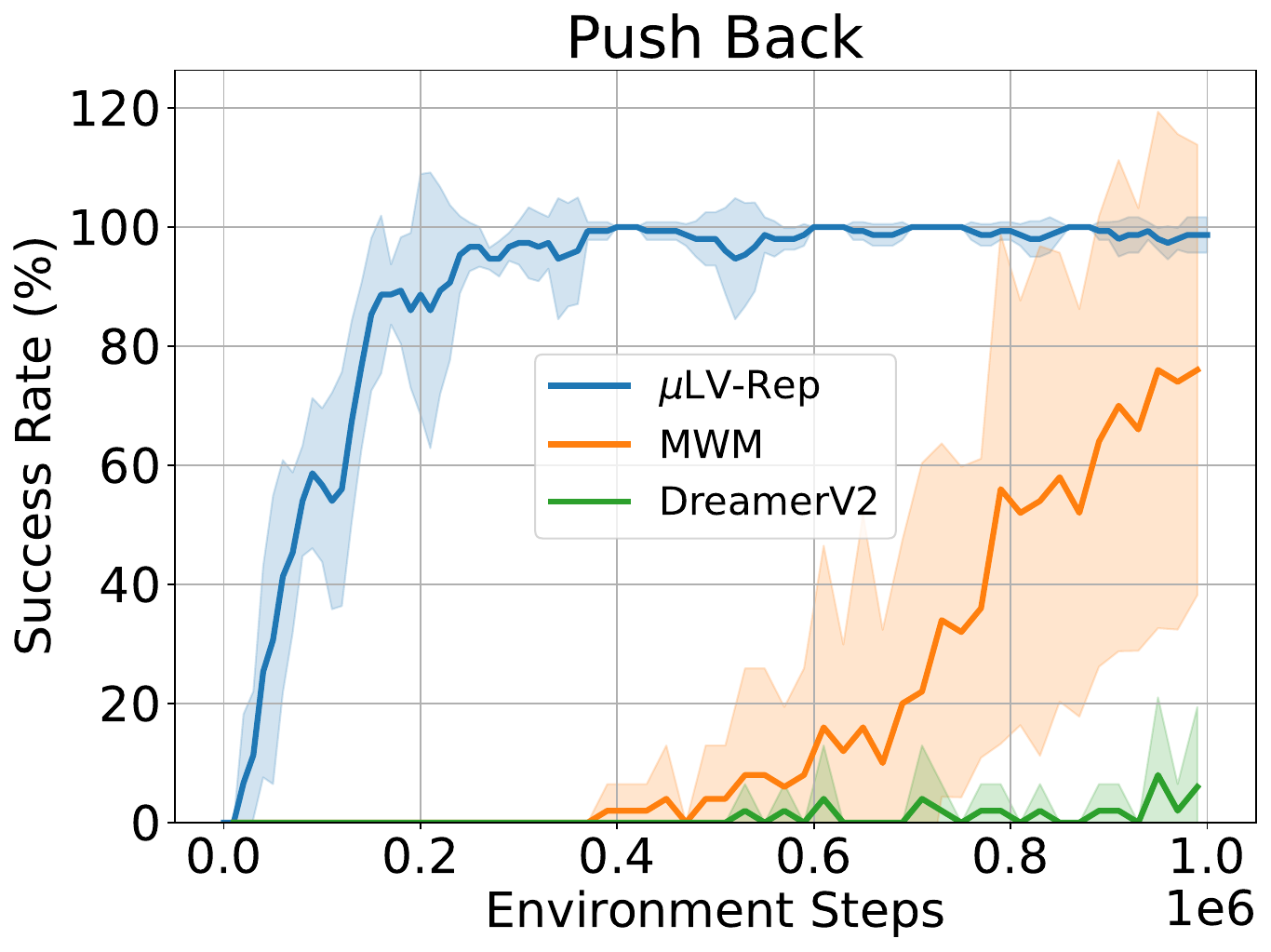}}
    \subfigure{\includegraphics[width=0.18\textwidth]{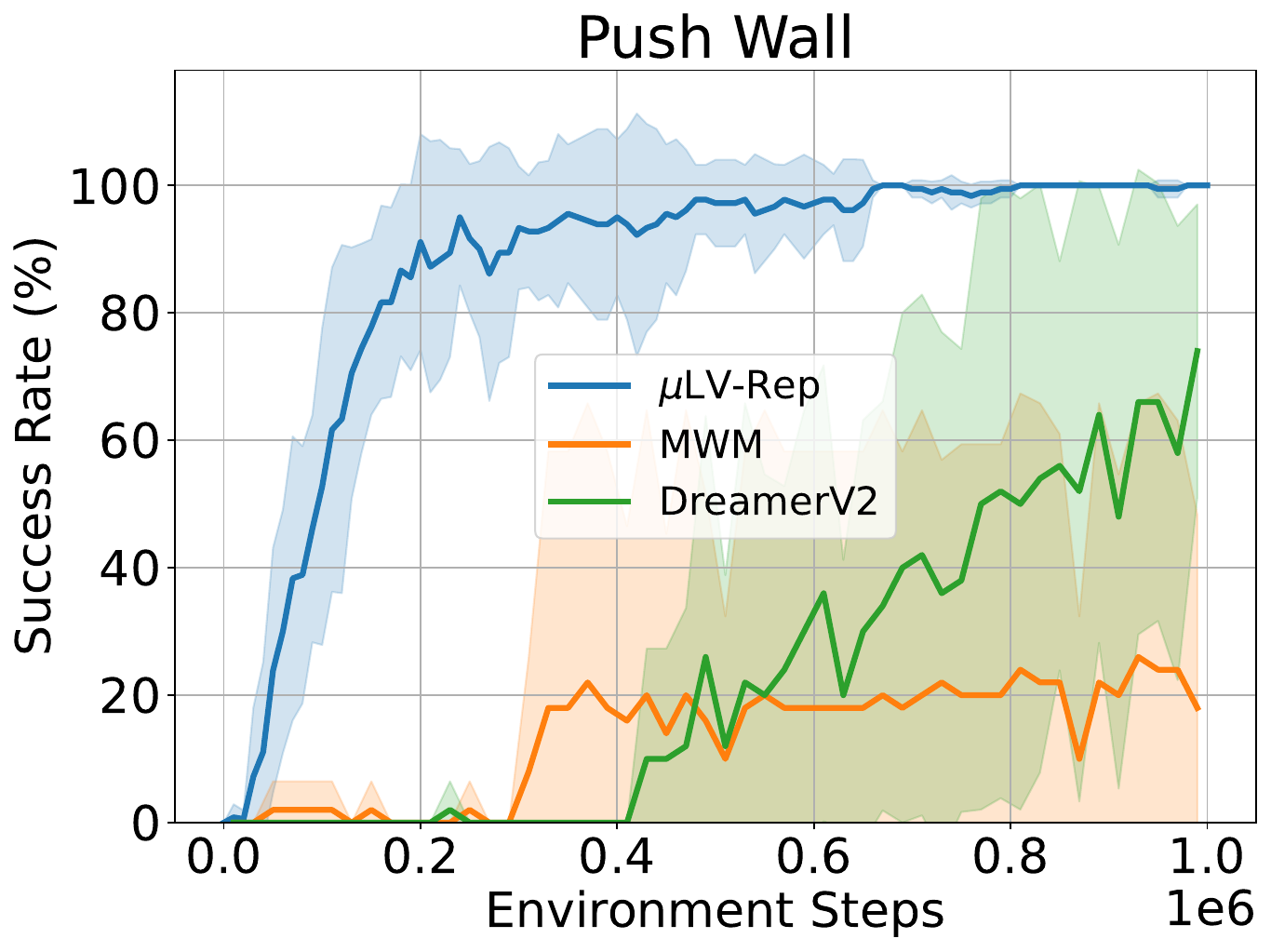}}
    \vskip -0.15in
    \subfigure{\includegraphics[width=0.18\textwidth]{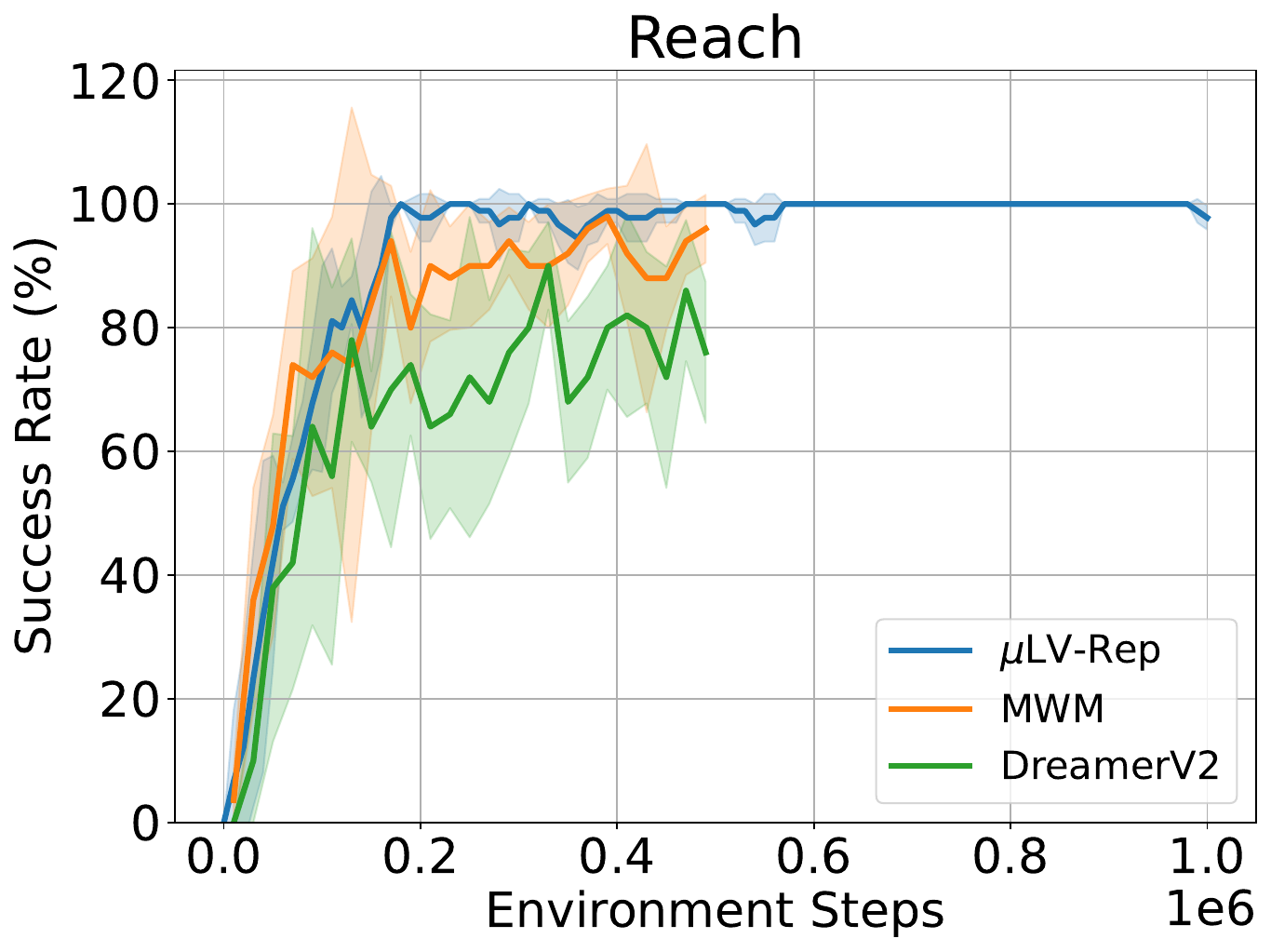}}
    \subfigure{\includegraphics[width=0.18\textwidth]{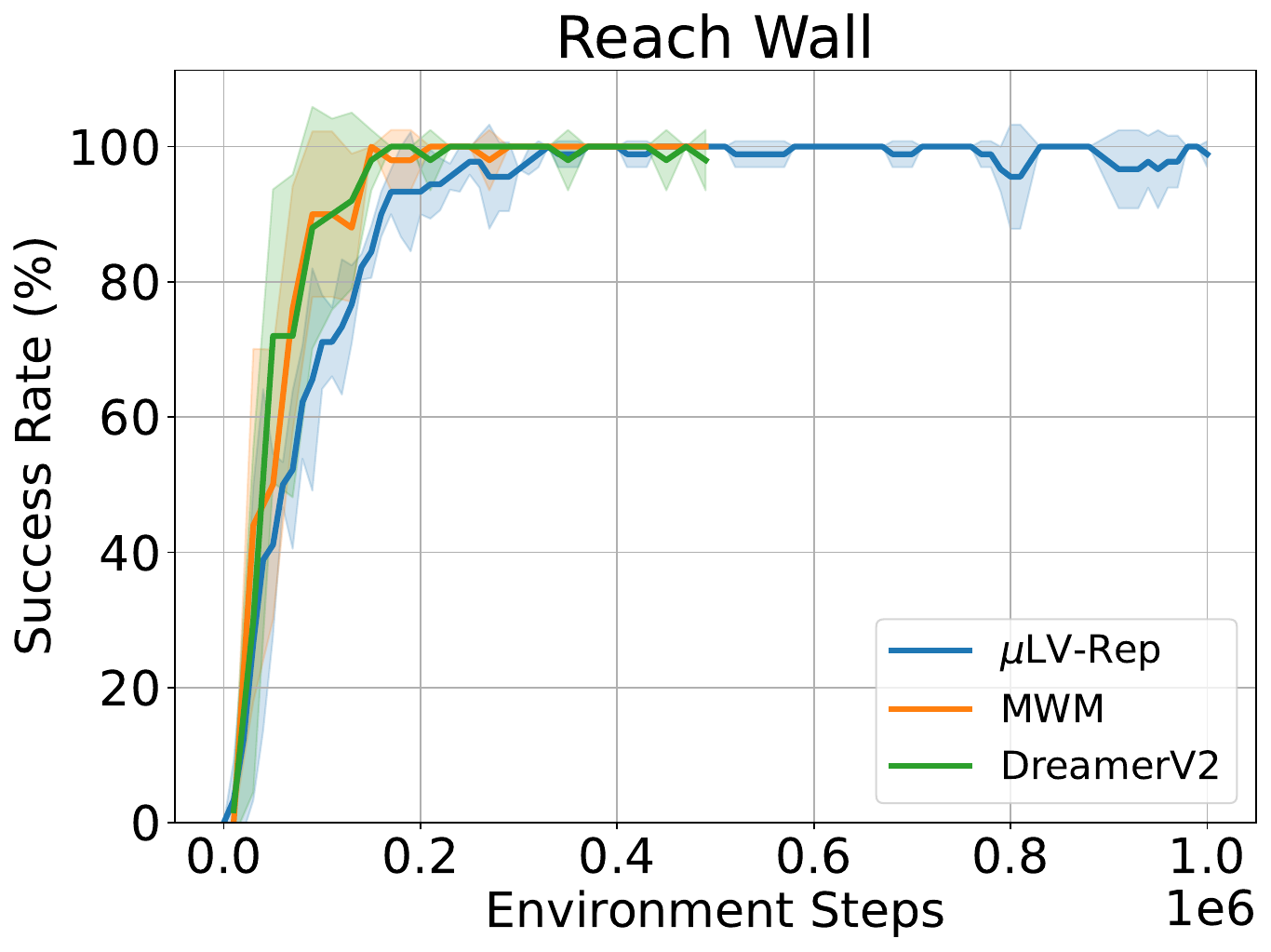}}
    \subfigure{\includegraphics[width=0.18\textwidth]{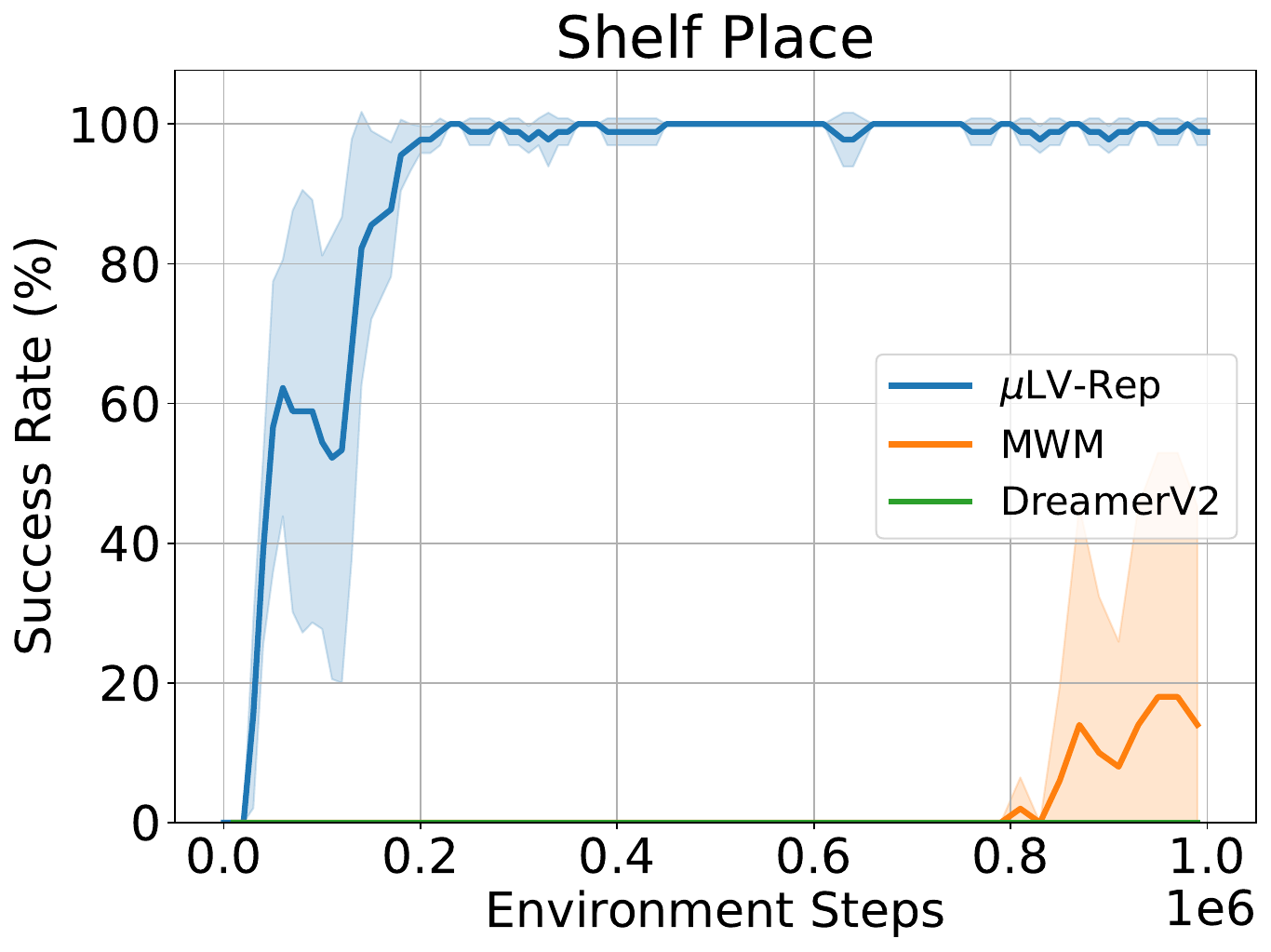}}
    \subfigure{\includegraphics[width=0.18\textwidth]{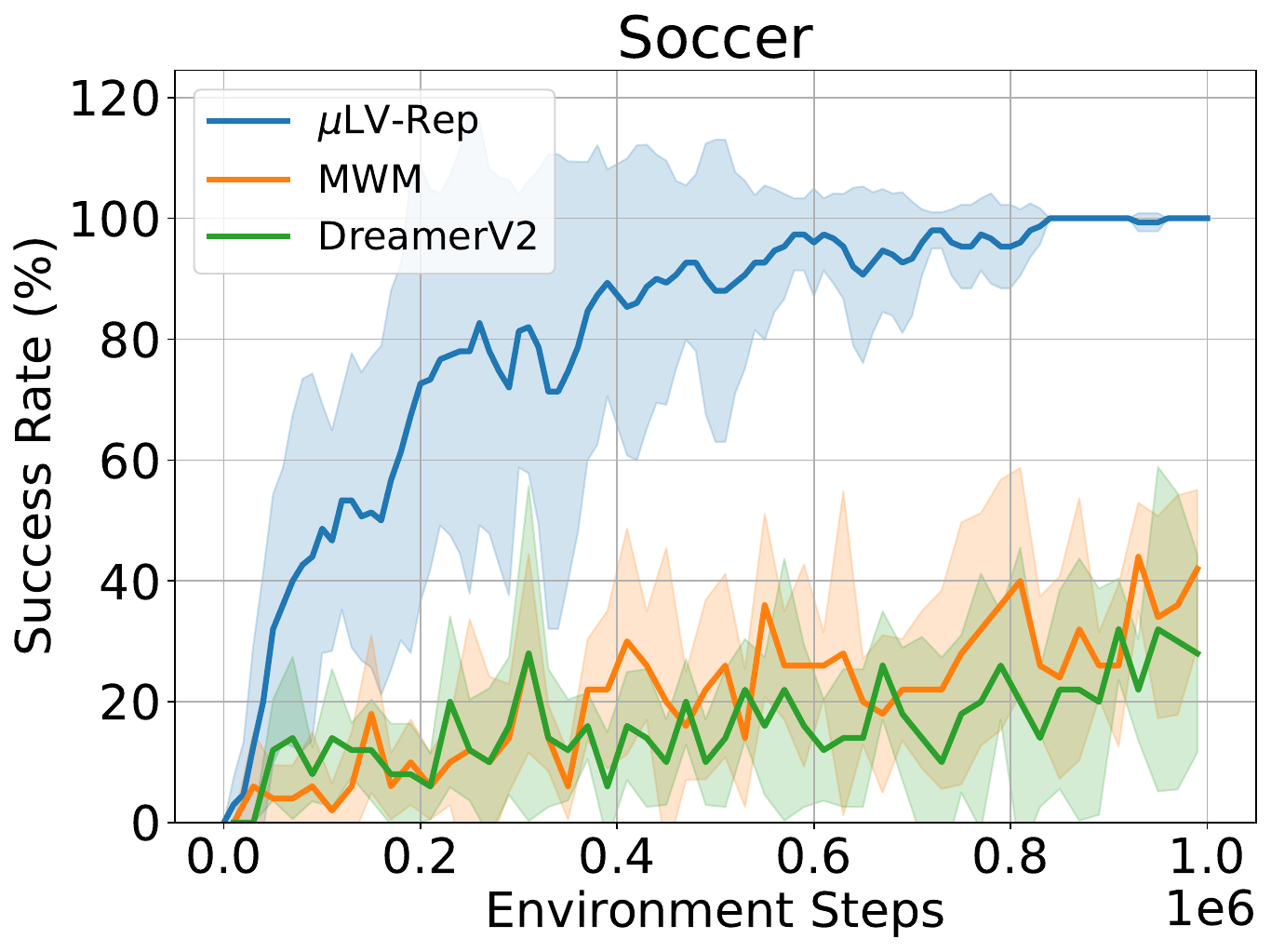}}
    \subfigure{\includegraphics[width=0.18\textwidth]{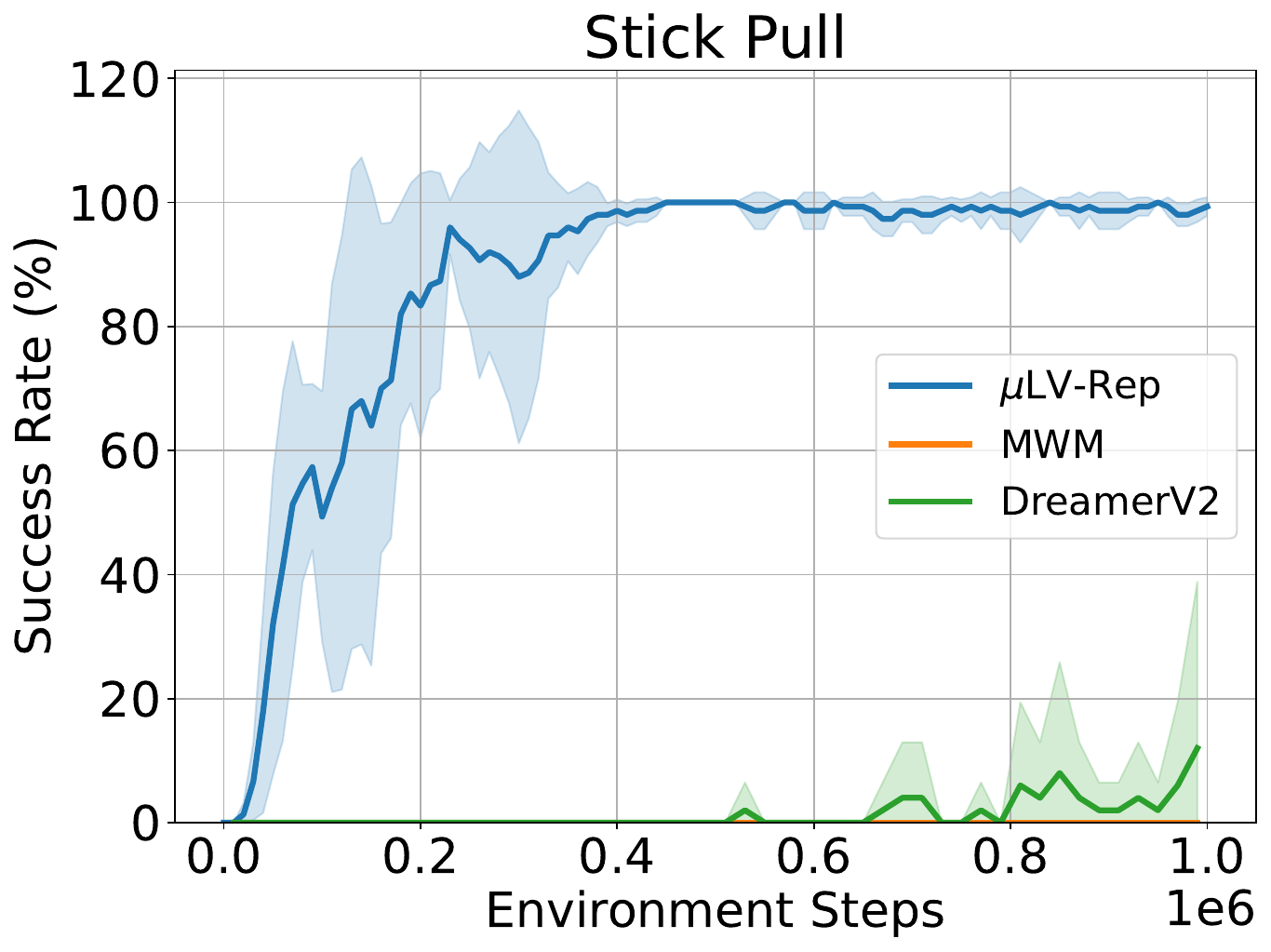}}
    \vskip -0.15in
    \subfigure{\includegraphics[width=0.18\textwidth]{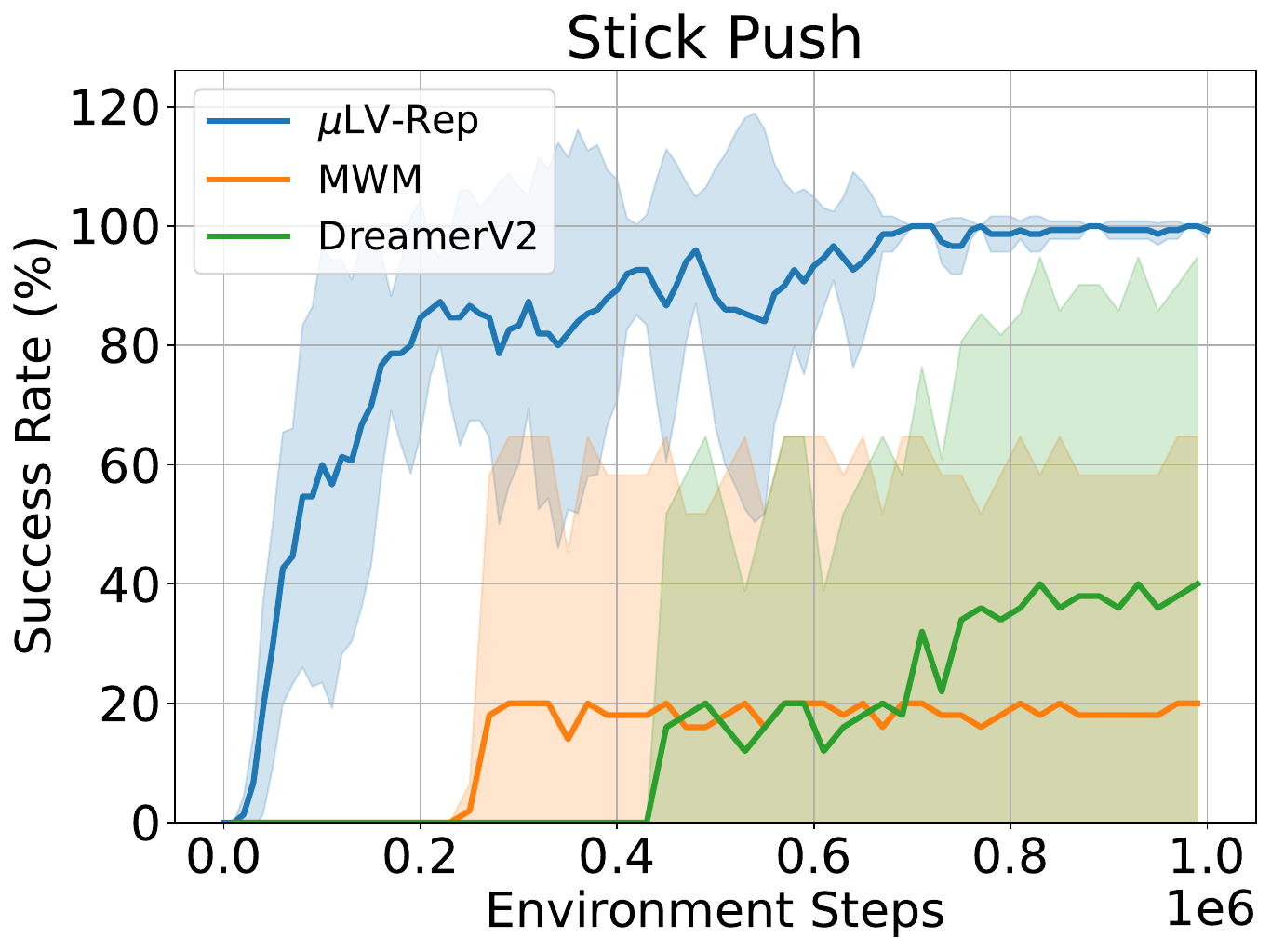}}
    \subfigure{\includegraphics[width=0.18\textwidth]{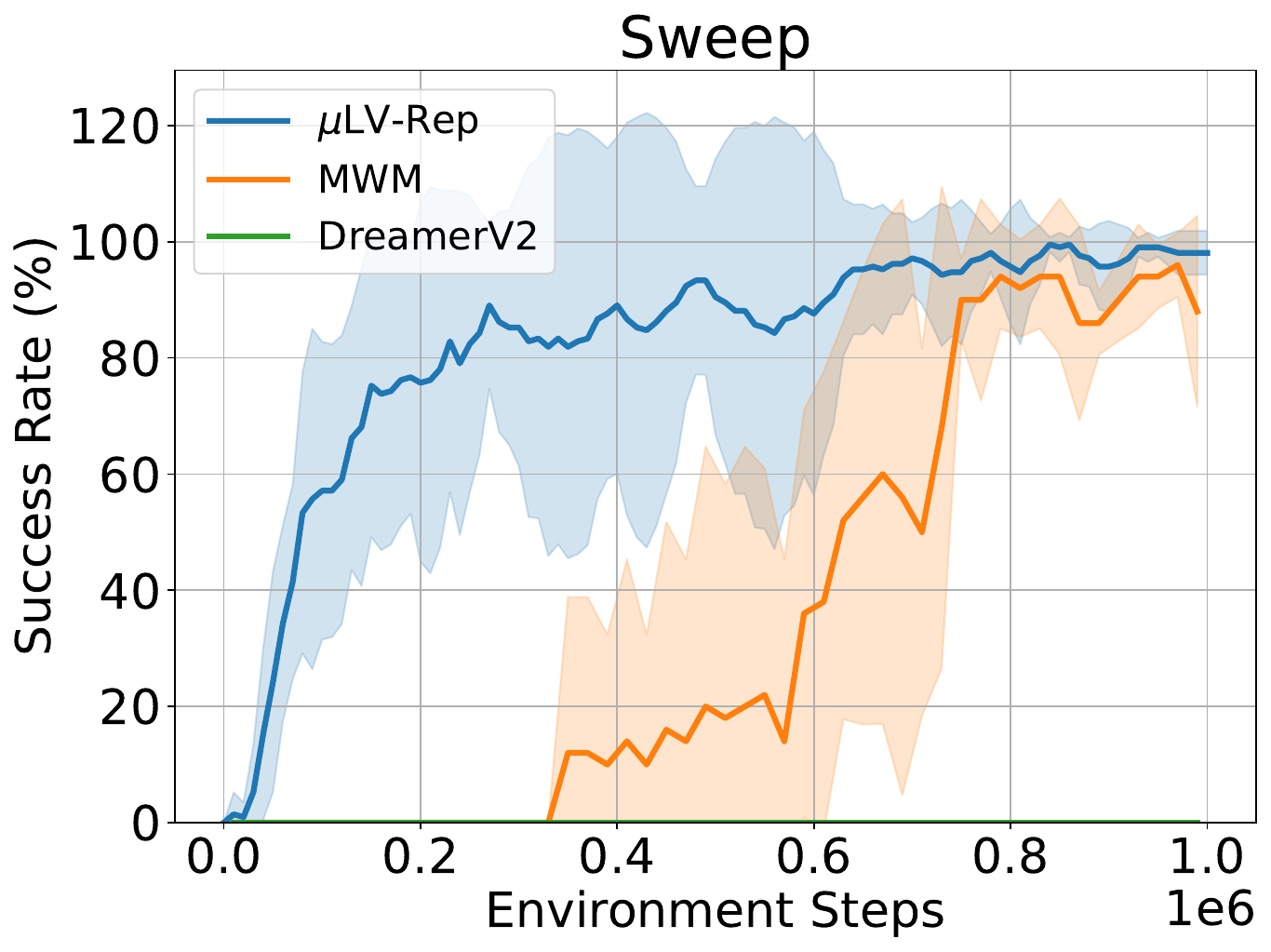}}
    \subfigure{\includegraphics[width=0.18\textwidth]{pic/metaworld_sweep_into_performance.pdf}}
    \subfigure{\includegraphics[width=0.18\textwidth]{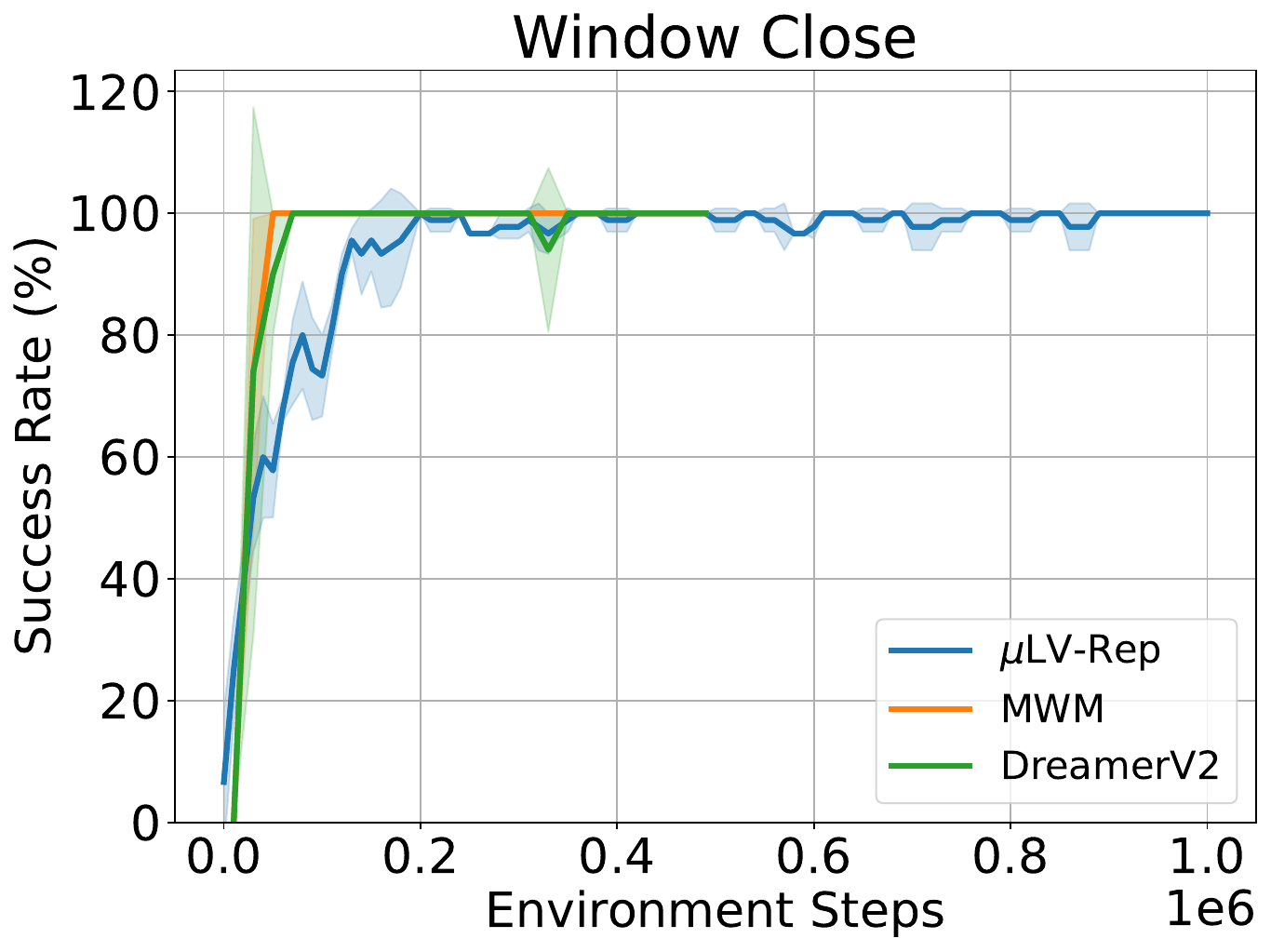}}
    \subfigure{\includegraphics[width=0.18\textwidth]{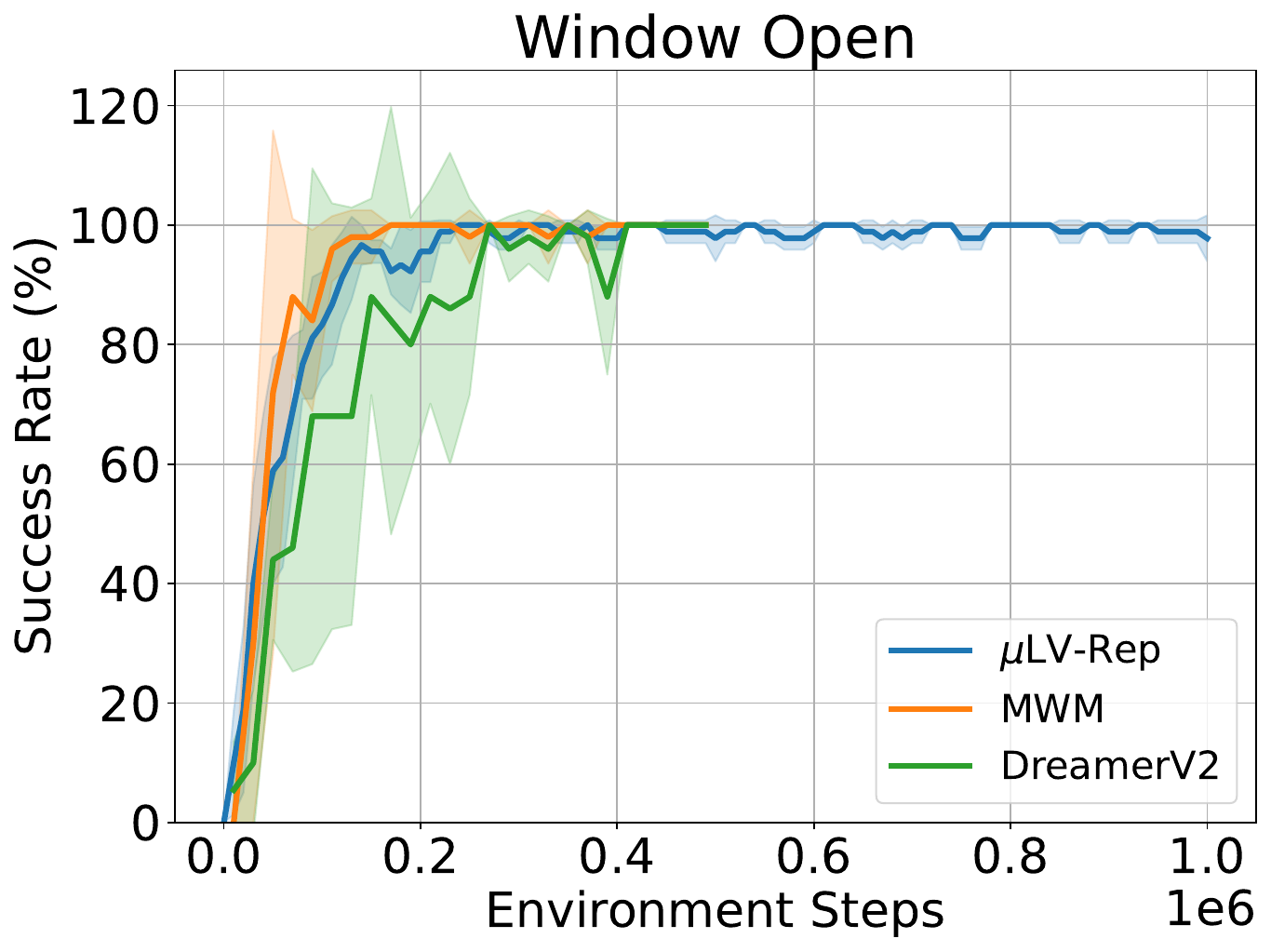}}
    \vskip -0.2in
    \caption{\footnotesize
    Overall performance on MetaWorld tasks.
    \normalsize}
    \label{fig: metaworld performance}
    \vskip -0.15in
\end{center}
\end{figure}


In particular, we employ visual observations with dimensions of 64 × 64 × 3 and apply a variational autoencoder (VAE) to learn representations for these visual observations. 
The VAE is first pre-trained with random trajectories at the beginning and then fine-tuned during the online learning procedure. 
It produces compact vector representations for the images, which are then forwarded as input to our representation learning method. 
We apply actor-critic Learning based on the representation learned by VAE.
The configuration of used tasks are given in~\cref{table:env configuration}. The hyperparameters used in the RL agent are shown in~\cref{table:Hyperparameters}.

\begin{table}[htbp]
\caption{Configuration of environments.}
\vskip 0.1in
\label{table:env configuration}
\centering
\begin{tabular}{ll}\toprule[2pt]
\specialrule{0pt}{1pt}{1pt}
	Hyperparameter & Value  \\ 
	\hline\specialrule{0pt}{1pt}{1pt}
    Image observation & 64$\times$64$\times3$  \\\specialrule{0pt}{1pt}{1pt}
    Image normalization & Mean: $(0.485, 0.456, 0.406)$, Std: $(0.229, 0.224, 0.225)$ \\\specialrule{0pt}{1pt}{1pt}
    Action repeat & 2  \\\specialrule{0pt}{1pt}{1pt}
    Episode length  &  500 (Meta-world), 1000 (DMC) \\\specialrule{0pt}{1pt}{1pt}
    Normalize action & [-1,1] \\\specialrule{0pt}{1pt}{1pt}
    Camera  & corner2 (Meta-world), camera2 (DMC) \\\specialrule{0pt}{1pt}{1pt}
    Total steps in environment & 1M (Meta-world), 0.5M (DMC) \\\specialrule{0pt}{1pt}{1pt}
	 \specialrule{0pt}{1pt}{1pt}\bottomrule[2pt]
\end{tabular}
\end{table}


\begin{table}[htbp]
\caption{Hyperparameters in $\mu$LV-Rep. The numbers in Conv and MLP denote the output channels and units.}
\vskip 0.1in
\label{table:Hyperparameters}
\centering
\begin{tabular}{ll}\toprule[2pt]
\specialrule{0pt}{1pt}{1pt}
    Hyperparameter & Value  \\ 
    \hline\specialrule{0pt}{1pt}{1pt}
    Buffer size & 1,000,000 \\\specialrule{0pt}{1pt}{1pt}
    Batch size & 256 \\\specialrule{0pt}{1pt}{1pt}
    Random steps & 4000 \\\specialrule{0pt}{1pt}{1pt}
    Pretrain step & 10000 \\\specialrule{0pt}{1pt}{1pt}
    Features dim. & 100 \\\specialrule{0pt}{1pt}{1pt}
    Hidden dim. & 1024 \\\specialrule{0pt}{1pt}{1pt} 
    Encoder & (Conv(32), Conv(32), Conv(32), Conv(32), MLP(100)) \\\specialrule{0pt}{1pt}{1pt}
    Image Decoder & (MLP(100), MLP(1024), ConvT(32), ConvT(32), ConvT(32), ConvT(32), Conv(3)) \\\specialrule{0pt}{1pt}{1pt}
    Actor Network & (MLP(1024),MLP(1024),MLP(Action Space)) \\\specialrule{0pt}{1pt}{1pt}
    Critic Network & (MLP(1024),MLP(1024),MLP(1)) \\\specialrule{0pt}{1pt}{1pt}
    Optimizer  & Adam \\\specialrule{0pt}{1pt}{1pt}
    Learning rate & 0.0001 \\\specialrule{0pt}{1pt}{1pt}
    Discount & 0.99 \\\specialrule{0pt}{1pt}{1pt}    
    Critic soft-update rate & 0.01 \\\specialrule{0pt}{1pt}{1pt}
    Evaluate interval & 10,000 \\\specialrule{0pt}{1pt}{1pt}
    Evaluate episodes  &  10 (Meta-world), 5 (DMC) \\\specialrule{0pt}{1pt}{1pt}
	 \specialrule{0pt}{1pt}{1pt}\bottomrule[2pt]
\end{tabular}
\end{table}


\subsection{Ablation Studies}\label{appendix:ablation}

The importance of the exploration has been demonstrated in~\citep{zhang2022making}. We perform ablation studies to demonstrate the effects of the major components, including representation dimension and window size, as illustrated below. 
\cref{fig:ablate-rep} presents an ablation study on representation dimension, where we compare \algabb with latent representation dimensions 2048, 512, and 128. 
We also ablate the effect of window size $L$.  
In \cref{fig:ablate-l-step}, we compare \algabb with window size $L=1,3,5$. 
We also compare DrQ-v2~\citep{yarats2022mastering} with $L=1,3,5$ to show the effect of $L$ on other algorithms. 
The results show that for $L=1$, both \algabb and DrQ-v2 struggle with learning, which confirms the Non-Markovian property of the DMC control problems. 
We can also find that $L=3$ is sufficient for learning in both test domains. 

\begin{figure}[htbp]
    \vskip -0.1in
\begin{center}
    \subfigure{\includegraphics[width=0.25\textwidth]{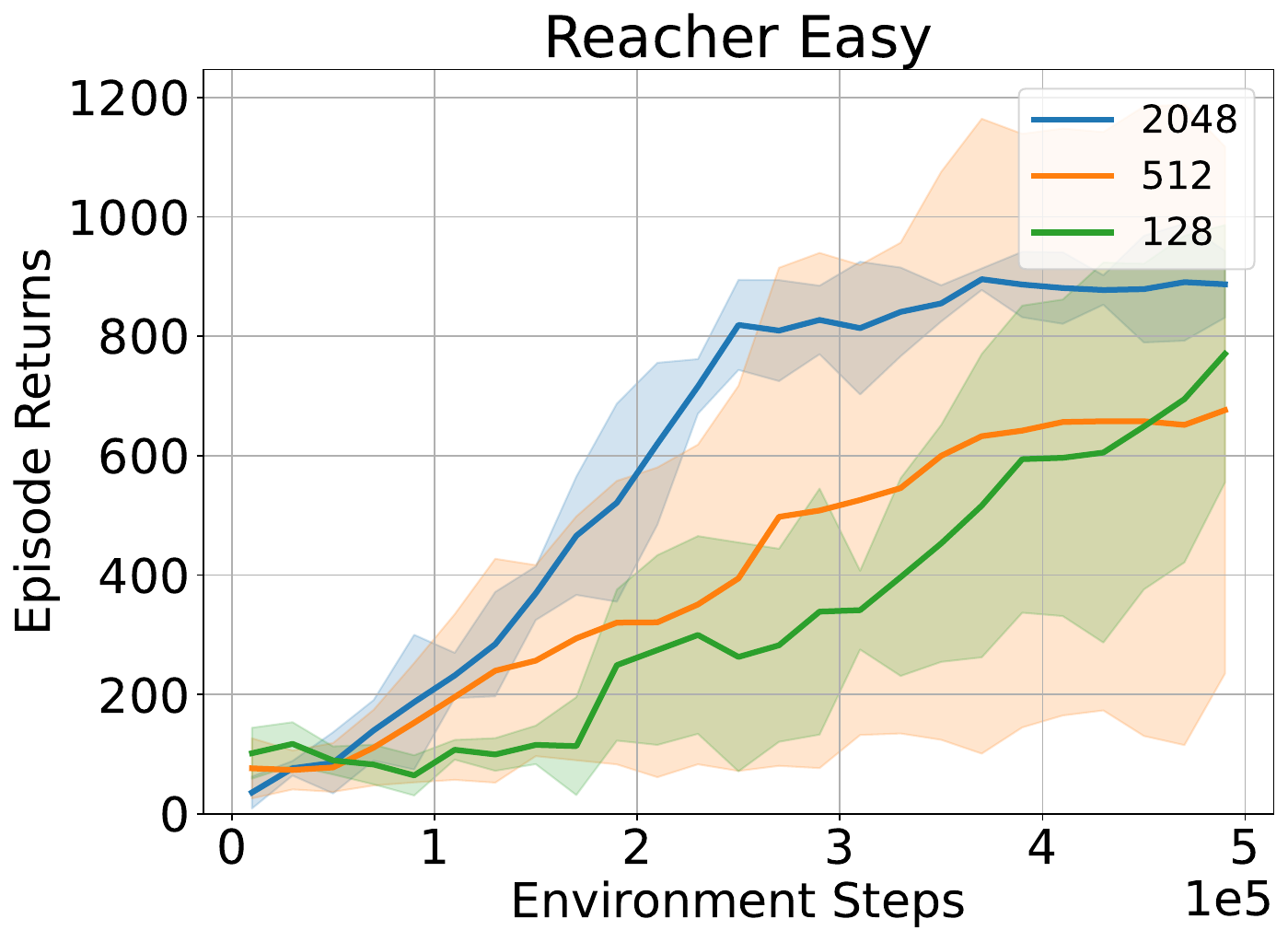}}
    \subfigure{\includegraphics[width=0.25\textwidth]{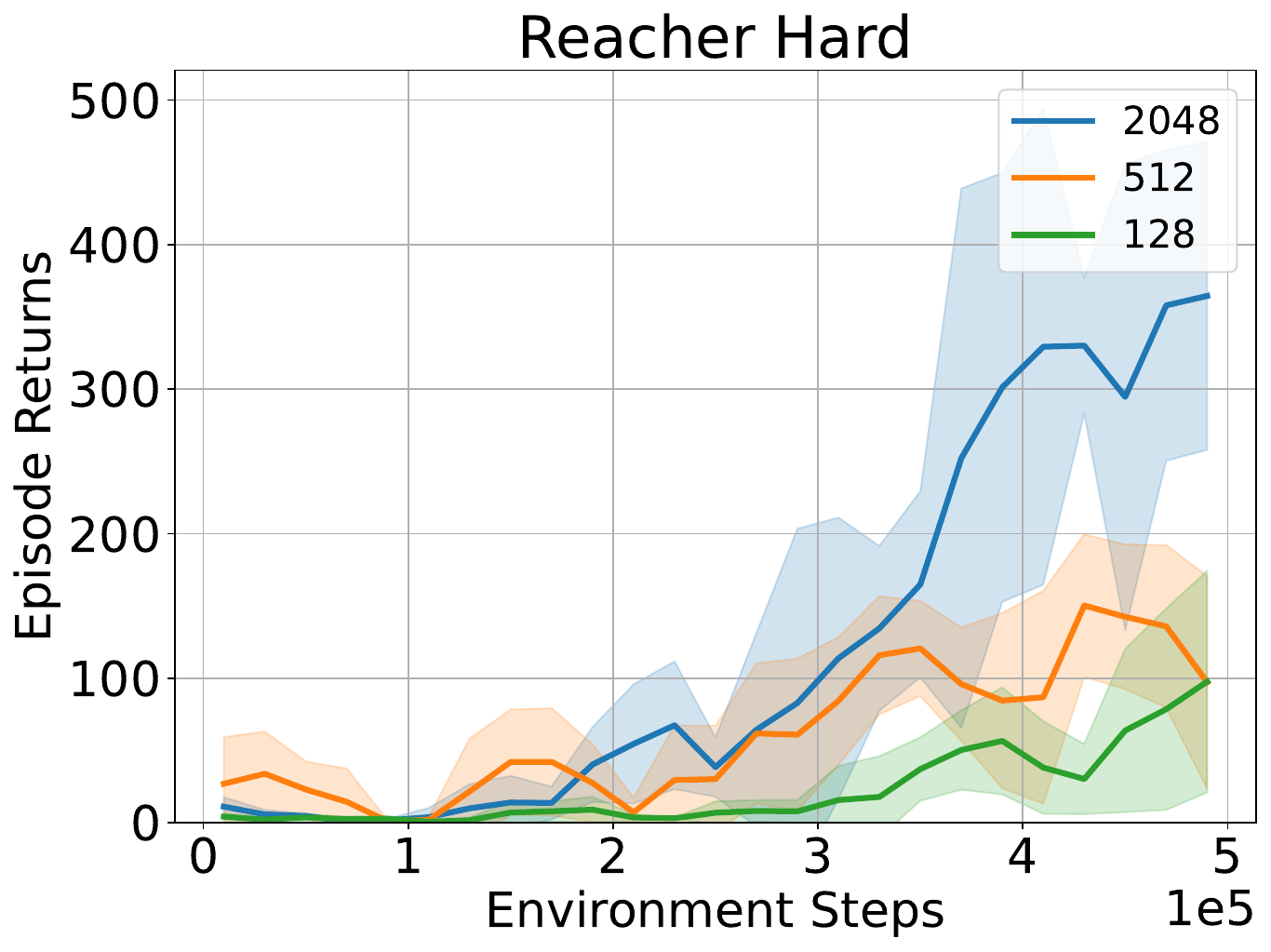}}
    \caption{
    Ablation of feature dimension on visual control tasks from DeepMind Control Suites.
    Increasing the dimension of the feature gets better performance.
    \normalsize}
    \label{fig:ablate-rep}
\end{center}
\end{figure}

\begin{figure}[tbp]
\begin{center}
    \subfigure{\includegraphics[width=0.25\textwidth]{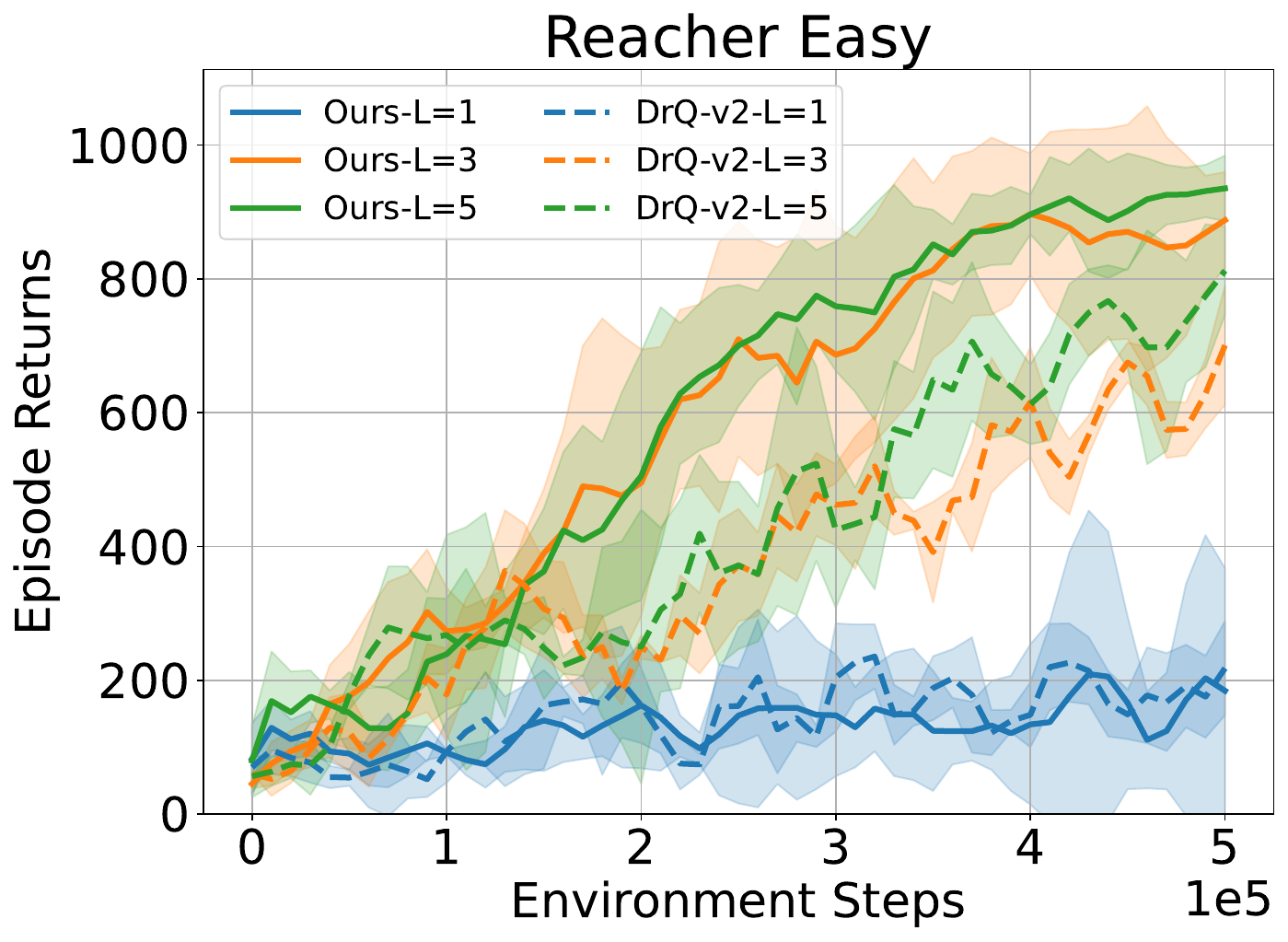}}
    \subfigure{\includegraphics[width=0.25\textwidth]{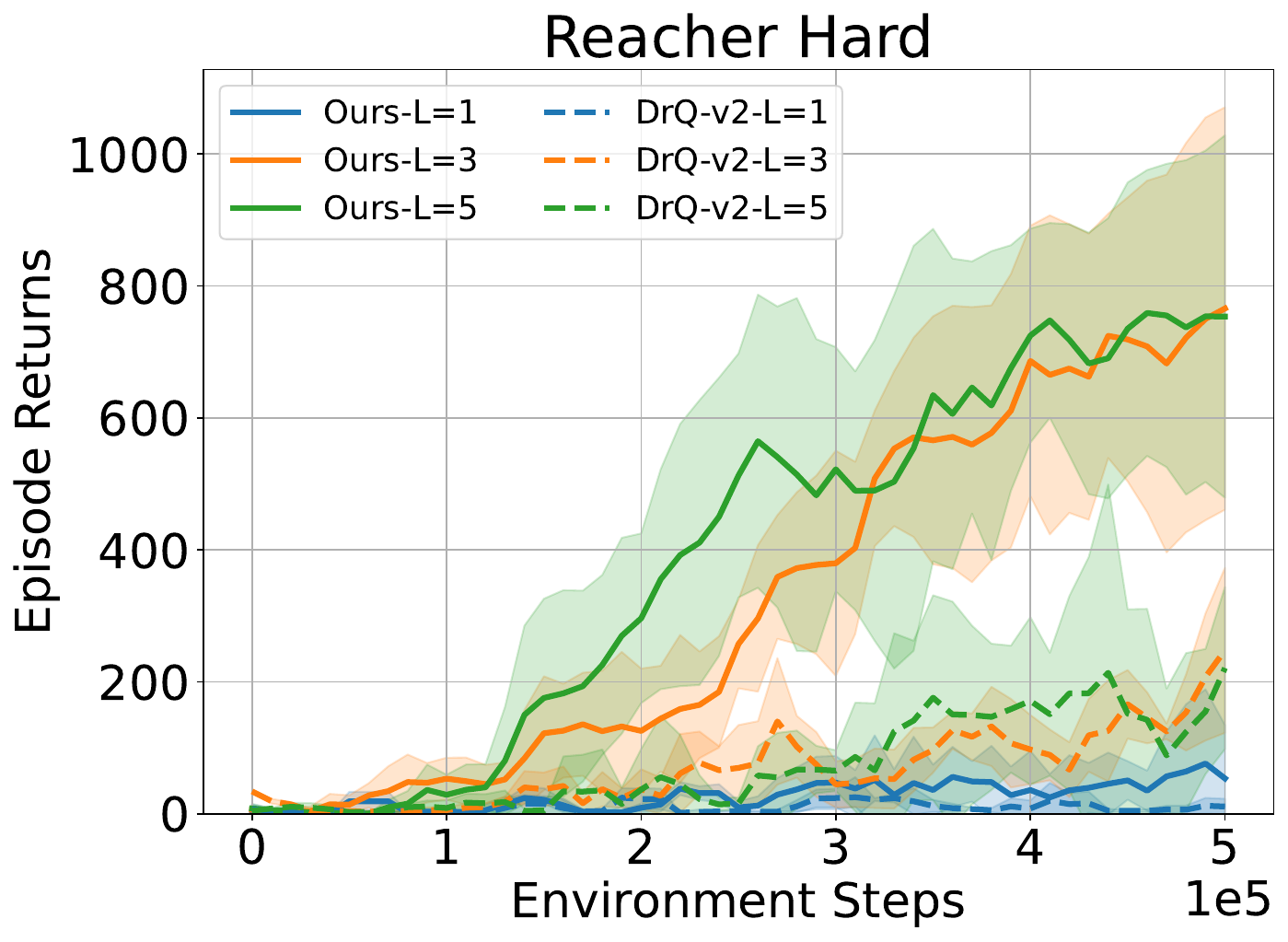}}
    \caption{
    Ablation of window size $L$ on visual control tasks from DeepMind Control Suites. $L=3$ is sufficient for learning in both test domains.
    \normalsize}
    \label{fig:ablate-l-step}
\end{center}
\end{figure}

\subsection{Experiment of Partially Observable Continuous Control}
\label{sec:extra-experiments}

We also evaluate the proposed approach on  RL tasks with partial observations, constructed based on the OpenAI gym MuJoCo~\citep{todorov2012mujoco}. 
Standard MuJoCo tasks from the OpenAI gym and DeepMind Control Suites are not partially observable. 
To generate partially observable problems based on these tasks, we adopt a widely employed approach of masking velocities within the observations~\citep{ni2021recurrent,weigand2021reinforcement,gangwani2020learning}.  
In this way, it becomes impossible to extract complete decision-making information from a single environment observation, yet the ability to reconstruct the missing observation remains achievable by aggregating past observations. 
We provide the best performance when using the original fully observable states (without velocity masking) as input, denoted by \emph{Best-FO} (Best result with Full Observations). This gives a reference for the best result an algorithm is expected to achieve in our tests.

We consider four baselines in the experiments, including two model-based methods
Dreamer \citep{Hafner2020Dream,hafner2021mastering} and Stochastic Latent Actor-Critic (SLAC) \citep{lee2020stochastic}, and a model-free baseline, SAC-MLP, that concatenates  history sequences (past four observations) as  input to an MLP layer for both the critic and policy. 
This simple baseline can be viewed as an analogue to how DQN processes observations in Atari games \citep{mnih2013playing} as a sanity check. 
We also compare to the neural PSR~\citep{guo2018neural}. 
We compare all algorithms after running 200K environment steps. 
This setup exactly follows the benchmark \citep{wang2019benchmarking}, which has been widely adopted in \citep{zhang2022making,ren2023spectral,ren2023latent} for fairness. 
All results are averaged across four random seeds. 
\cref{tab:MuJoCo_results_POMDP} presents all the experimental
results,  averaged over four random seeds. 
The results clearly demonstrate that the proposed method consistently delivers either competitive or superior outcomes across all domains  compared to both the model-based and model-free baselines. 
We note that in most domains, \algabb nearly matches the performance of Best-FO, further confirming that the proposed method is able to extract useful representations for decision-making in partially observable environments. 

\begin{table*}[t]
\caption{
 \footnotesize Performance on various continuous control problems {\color{blue}with partial observation}. All results are averaged across 4 random seeds and a window size of 10K. 
{\algabb} achieves the best performance compared to the baselines. Here, Best-FO denotes the performance of LV-Rep using {\color{red}full observations} as inputs, providing a reference on how well an algorithm can achieve most in our tests. \normalsize
}
\vspace{2mm}
\scriptsize
\setlength\tabcolsep{3.5pt}
\label{tab:MuJoCo_results_POMDP}
\centering
\scriptsize
\resizebox{0.9\textwidth}{!}{%
\begin{tabular}{p{2cm}p{2cm}p{2cm}p{2cm}p{2cm}p{2cm}p{2cm}}
\toprule
& HalfCheetah & Humanoid & Walker & Ant & Hopper\\ 
\midrule  
{\bf \algabb} & \textbf{3596.2 $\pm$ 874.5} & \textbf{806.7 $\pm$ 120.7} & \textbf{1298.1$\pm$ 276.3} &\textbf{1621.4 $\pm$ 472.3} & \textbf{1096.4 $\pm$ 130.4} \\
{Dreamer-v2} & 2863.8 $\pm$ 386 & 672.5 $\pm$ 36.6 & \textbf{1305.8 $\pm$ 234.2} & 1252.1 $\pm$ 284.2    & 758.3 $\pm$ 115.8\\
SAC-MLP & 1612.0 $\pm$ 223 & 242.1 $\pm$ 43.6 & 736.5 $\pm$ 65.6 & \textbf{1612.0 $\pm$ 223} & 614.15 $\pm$ 67.6\\
SLAC & \textbf{3012.4} $\pm$ 724.6 & 387.4 $\pm$ 69.2 & 536.5 $\pm$ 123.2 & {1134.8 $\pm$ 326.2} & 739.3 $\pm$ 98.2\\
PSR & 2679.75$\pm$386 & 534.4 $\pm$ 36.6 & 862.4 $\pm$ 355.3 & {1128.3 $\pm$ 166.6} & 818.8 $\pm$ 87.2\\
\midrule
Best-FO & 5557.6$\pm$439.5 & 1086$\pm$278.2 & 2523.5$\pm$333.9 &  2511.8$\pm$460.0 & 2204.8$\pm$496.0\\
\midrule  
& Cheetah-run & Walker-run & Hopper-run &  Humanoid-run & Pendulum\\ 
\midrule  
{\bf \algabb} & {525.3 $\pm$ 89.2} & \textbf{702.3 $\pm$ 124.3} & \textbf{69.3$\pm$ 12.8} &\textbf{9.8 $\pm$ 6.4} & \textbf{168.2 $\pm$ 5.3} \\
{Dreamer-v2} & \textbf{602.3 $\pm$ 48.5} & 438.2 $\pm$ 78.2 & \textbf{59.2 $\pm$ 15.9} & 2.3 $\pm$ 0.4    & \textbf{172.3 $\pm$ 8.0}\\
SAC-MLP & {483.3 $\pm$ 77.2} & 279.8 $\pm$ 190.6 & 19.2 $\pm$ 2.3 & {1.2 $\pm$ 0.1} & 163.6 $\pm$ 9.3\\
SLAC & 105.1 $\pm$ 30.1 & 139.2 $\pm$ 3.4 & 36.1 $\pm$ 15.3 &  0.9 $\pm$ 0.1 & \textbf{167.3 $\pm$ 11.2}\\
PSR & 173.7 $\pm$ 25.7 & 57.4 $\pm$ 7.4 & 23.2 $\pm$ 9.5 &  0.8 $\pm$ 0.1 & 159.4 $\pm$ 9.2\\
\midrule
Best-FO & 639.3$\pm$24.5 & 724.2$\pm$37.8 &  72.9$\pm$40.6 & 11.8$\pm$6.8 &  167.1$\pm$3.1\\
\bottomrule 
\end{tabular}
}
\end{table*}


\end{document}